\definecolor{color1}{HTML}{105e8a}
\definecolor{color2}{HTML}{e99926}
\definecolor{color3}{HTML}{b82a0c}
\definecolor{color4}{HTML}{3e8a10}
\definecolor{color5}{HTML}{80037e}
\definecolor{color6}{HTML}{070707}
\definecolor{gradient0}{RGB}{190, 245, 250}
\definecolor{gradient1}{RGB}{164, 233, 248}
\definecolor{gradient2}{RGB}{134, 221, 248}
\definecolor{gradient3}{RGB}{104, 208, 249}
\definecolor{gradient4}{RGB}{72, 195, 252}
\definecolor{gradient5}{RGB}{36, 181, 255}
\newtheorem{theorem}{Theorem}
\newtheorem{lemma}[theorem]{Lemma}
\newtheorem{example}[theorem]{Example}
\newtheorem{proposition}[theorem]{Proposition}
\theoremstyle{definition}
\newtheorem{definition}[theorem]{Definition}
\newtheorem{notation}[theorem]{Notation}
\newtheorem*{informaltheorem*}{Theorem}
\pgfplotsset{compat=1.17}
\newmdenv[
  topline=false,
  bottomline=false,
  rightline=false,
  skipabove=\topsep,
  skipbelow=\topsep,
  innertopmargin=0pt,
  innerbottommargin=0pt
]{siderules}
\newcommand{\Ccal}{\mathcal{C}}
\newcommand{\Fcal}{\mathcal{F}}
\newcommand{\Gcal}{\mathcal{G}}
\newcommand{\Ncal}{\mathcal{N}}
\newcommand{\Pcal}{\mathcal{P}}
\newcommand{\Scal}{{\mathcal{S}}}
\newcommand{\Tcal}{{\mathcal{T}}}
\newcommand{\Xcal}{\mathcal{X}}
\newcommand{\Ycal}{\mathcal{Y}}
\renewcommand{\AA}{\mathbb{A}}
\newcommand{\II}{\mathbb{I}} 
\newcommand{\NN}{\mathbb{N}} 
\newcommand{\RR}{\mathbb{R}} 
\newcommand{\XX}{\mathbb{X}} %
\renewcommand{\emptyset}{\varnothing}
\renewcommand{\epsilon}{\varepsilon}
\DeclareMathOperator*{\E}{\mathbb{E}}
\DeclareMathOperator*{\argmin}{\mathrm{arg\,min}}
\DeclareMathOperator*{\arginf}{\mathrm{arg\,inf}}
\newcommand{\ind}{\mathbbm{1}}
\newcommand{\slantarrow}[1][-45]{%
  \mathrel{%
    \text{$
     \begin{tikzpicture}[baseline = -0.5ex]
       \node[inner sep=0pt,outer sep=0pt,rotate = #1] (a) at (0,0)  {$\xrightarrow{}$};
    \end{tikzpicture}
    $}%
  }%
}%
\title{Online Consistency of the Nearest Neighbor Rule}
\author{%
  Sanjoy Dasgupta\\
  Department of Computer Science\\
  UC San Diego\\
  La Jolla, CA 92023 \\
  \texttt{dasgupta@ucsd.edu} \\
\And
  Geelon So\\
  Department of Computer Science\\
  UC San Diego\\
  La Jolla, CA 92023 \\
  \texttt{geelon@ucsd.edu} \\
}
\providecommand{\defstyle}[1]{\textit{#1}}
\begin{document}
\maketitle

\begin{abstract}
  In the realizable online setting, a learner is tasked with making predictions for a stream of instances, where the correct answer is revealed after each prediction. A learning rule is \emph{online consistent} if its mistake rate eventually vanishes. The nearest neighbor rule \citep{fix1951discriminatory} is a fundamental prediction strategy, but it is only known to be consistent under strong statistical or geometric assumptions---the instances come i.i.d.\@ or the label classes are well-separated. We prove online consistency for all measurable functions in doubling metric spaces under the mild assumption that the instances are generated by a process that is \emph{uniformly absolutely continuous} with respect to a finite, upper doubling measure.
\end{abstract}



\begin{algorithm}[t]
    \caption{The 1-nearest neighbor rule} \label{alg:nn}
        \begin{algorithmic}[1]
            \FOR{$n = 1,2,\ldots$}
            \STATE Receive the instance $X_n$
            \STATE Predict with a nearest neighbor label $\smash{\eta(\tilde{X}_n)}$
            \STATE Observe and memorize the ground-truth label $\eta(X_n)$
            \ENDFOR
        \end{algorithmic}
    \end{algorithm}

\section{Introduction}
In online classification, a learner faces a never-ending stream of prediction tasks. For all times $n$:
\begin{itemize}
    \item[-] the learner is presented with an instance $X_n$,
    \item[-] the learner makes a prediction $\smash{\hat{Y}_n}$,
    \item[-] the ground-truth label $Y_n$ is revealed. 
\end{itemize}
When the instances come from some underlying metric space, one of the simplest prediction rules the learner can employ is the \emph{nearest neighbor rule} \citep{fix1951discriminatory}. This learner memorizes everything it sees, and when it comes time to make a prediction for a new instance $X_n$, it looks for the most similar data point in memory, predicting with that nearest neighbor's label. In this work, we are interested in simple conditions under which the nearest neighbor rule is in fact a reasonable strategy, where the rate at which the learner makes mistakes eventually vanishes.

Let $(\Xcal, \rho,\nu)$ be a metric measure space where $\rho$ is a separable metric and $\nu$ is a finite Borel measure. We study the \emph{realizable} setting, in which the ground-truth labels $Y_n = \eta(X_n)$ are given by some measurable label function $\eta: \Xcal \to \Ycal$. Let $\XX = (X_n)_{n\geq 0}$ be any stochastic process. It induces a \emph{nearest neighbor process} $\smash{\tilde{\XX} = (\tilde{X})_{n > 0}}$, which is any process satisfying: 
\[\tilde{X}_n \in \argmin_{x \in \XX_{<n}}\, \rho(X_n,x).\] 
The nearest neighbor rule predicts using the label $\hat{Y}_n = \eta(\tilde{X}_n)$, and it is \textbf{online consistent} when the asymptotic mistake rate on the problem instance $(\XX,\eta)$ goes to zero:
\begin{equation} \label{eqn:online-consistency}
    \qquad \limsup_{N \to \infty}\, \frac{1}{N} \sum_{n=1}^N \ind\big\{\eta(X_n) \ne \eta(\tilde{X}_n)\big\} = 0\qquad \mathrm{a.s.}
\end{equation}

There are two representative results for the online consistency of the 1-nearest neighbor rule in this setting---both impose strong constraints on either $\XX$ or $\eta$. \cite{cover1967nearest} assume the statistical constraint that $\XX$ is an i.i.d.\@ process. In contrast, \cite{kulkarni1995rates} allow for arbitrary processes. However, they assume the geometric constraint that points of different classes $\eta(x) \ne \eta(x')$ are uniformly separated $\rho(x,x') > c > 0$ in a totally bounded space. 

The result of \cite{kulkarni1995rates} turns out to be tight in the absence of further assumptions. As long as there are points belonging to different classes that are arbitrarily close together, then there are adversarial sequences on which the nearest neighbor rule is not online consistent (\Cref{prop:nonconvergence}). Still, this negative result does not necessarily spell doom for the nearest neighbor rule in all non-i.i.d.\@ settings; one of our aims is to understand just how pathological these worst-case sequences are. 

The upshot of this work is that worst-case sequences on which the nearest neighbor rule fails to learn are in fact extremely rare---under quite mild constraints on $\XX$ and $\eta$, they almost never occur. The nearest neighbor rule is online consistent under much broader conditions than previously known.

\subsection{Main results}
\paragraph{Consistency for functions with negligible boundary} We first consider learning label functions with \emph{negligible boundary}, where almost every point has a positive separation from other classes. As the separation may be instance-dependent, this relaxes the uniform separation condition of \cite{kulkarni1995rates}, and it is equivalent to the assumption used by \cite{cover1967nearest}. 

It turns out that if the label function has negligible boundary, we can cover essentially all of $\Xcal$ with \emph{mutually-labeling} balls (\Cref{def:mls}). On such a ball, the nearest neighbor rule makes at most one mistake (\Cref{lem:mutual-labeling-property}). So, progress is monotonic: eventually, all mistakes must come from a remainder region with arbitrarily small $\nu$-mass (roughly, points arbitrarily close to the decision boundary).

It follows that if we can limit the rate at which $\XX$ comes from regions with arbitrarily small mass, we can also limit the mistake rate of the nearest neighbor rule. To this end, we formalize the notion of an \emph{ergodically dominated} process (\Cref{def:ergodic-continuity}), which is a process where the asymptotic rate of landing in a region $A$ is bounded as a function of $\nu(A)$. In particular, these processes do not hit regions with arbitrarily small mass at a constant rate. With little ado, we can show that the nearest neighbor rule is consistent when $\XX$ is ergodically dominated and $\eta$ has negligible boundary (\Cref{thm:consistency-F0}). Stronger, quantitative assumptions also yield rates of convergence (\Cref{thm:nn-conv-rate}).

\paragraph{Universal consistency}
This first consistency result for functions with negligible boundaries is quite general and captures many settings of interest (e.g.\@ classification on $\RR^d$ with smooth decision boundaries). But as not all functions have $\nu$-negligible boundaries, we also study when the nearest neighbor rule is \emph{universally consistent} (i.e.\@ consistent for any measurable $\eta$). This question is trickier since boundary points, which are hard for our learner, need not be localized to a set of measure zero.

We proceed by giving more structure to $(\Xcal, \rho, \nu)$ and $\XX$. First, we let $\rho$ be a $d$-doubling metric (\Cref{def:roughly-doubling}). This is helpful because every measurable function $\eta$ can then be approximated arbitrarily well by a function $\eta'$ with negligible boundary (\Cref{lem:F0-dense}). And so, it seems that we might be able to deduce universal consistency of the nearest neighbor rule almost directly from the previous result---learning $\eta$ is perhaps not so different from learning $\eta'$ when their disagreement region is made to be vanishingly small. However, this turns out not to be the case. 

For example, \cite{blanchard2022universal} constructs a classification problem where the nearest neighbor rule is not consistent, but $\Xcal$ is a 1-doubling space (the unit interval $[0,1]$ with the usual metric), $\eta$ is measurable, and $\XX$ is ergodically dominated. The problem is that, even if the disagreement region is made to be extremely small, its influence on nearest neighbor predictions is not limited to the times when instances land in it. These instances exert influence when they themselves are nearest neighbors of downstream instances. In other words, `bad points' can accumulate in the memory of the nearest neighbor learner, and their influence grow and shrink with their Voronoi cells (regions where they are nearest neighbors). As the region on which the nearest neighbor learner is prone to make mistakes waxes and wanes throughout time, we cannot argue that progress is monotonic in the same way as before. In short, a tail constraint on $\XX$ is no longer sufficient for consistency.

To show universal consistency, we constrain $\XX$ at every moment in time. Ergodic domination only ensured that the \emph{time-averaged} rate at which $\XX$ hits small regions is small. We now require the \emph{time-uniform} rate to also be small, a strictly stronger condition. Formally, a \emph{uniformly dominated} process (\Cref{def:uniform-domination}) is one where  the probability that the instance $X_n$ lands in a region $A$ is bounded as a function of $\nu(A)$ at each point in time. Intuitively, ergodic domination is retrospective: looking back, how often do points land in $A$? In contrast, uniform domination is a generative constraint: at any point in time, how easily can the underlying mechanism generating $\XX$ select a point in $A$?

The basic argument then is that even though `bad points' can accumulate in space over time, in a doubling space, their Voronoi cells also tend to shrink quickly when they are hit by further instances. If the mass of these Voronoi cells were to shrink as well, then it would become increasingly unlikely that these instances are nearest neighbors of downstream points whenever $\XX$ is uniformly dominated. So that having small metric entropy implies having small mass, we let the measure be \emph{upper doubling} (\Cref{def:roughly-doubling}), by which we mean that the mass of a ball of radius $r$ is bounded by $O(r^d)$.

We first prove the following result, which may be of interest in its own right, about the behavior of nearest neighbor processes: when $\XX$ is a uniformly dominated process in an upper doubling space, any nearest neighbor process $\smash{\tilde{\XX}}$ is ergodically dominated (\Cref{thm:ergodic-continuity}). Simply put, this means that if two functions $\eta$ and $\eta'$ rarely disagree, then the average rate at which nearest neighbor processes land in their disagreement region is also bounded. Universal consistency now follows fairly easily. 

Let $\eta$ be closely approximated by some $\eta'$ with negligible boundary. The asymptotic mistake rate of the nearest neighbor rule on $\eta'$ is zero when $\XX$ is uniformly dominated. On the other hand, if the mistake rate on $\eta$ is large, this discrepancy must be due to the influence of their (very small) disagreement region. But, a large discrepancy is not possible as the nearest neighbor process is unable to significantly amplify the influence of very small regions. Thus, the nearest neighbor rule is universally consistent on upper doubling spaces for uniformly dominated processes (\Cref{thm:universal-consistency}).

\paragraph{Notation} Given a sequence $\XX$, we let $\XX_{<n}$ denote the set $\{X_1,\ldots, X_{n-1}\}$. The symbol $\ind$ denotes the indicator function, which is equal to 1 when the event that follows it occurs and 0 otherwise. We let $B(x,r) \subset \Xcal$ denote the open ball of radius $r$ centered at $x$. For any $x \in \Xcal$ and $Z \subset \Xcal$, let:
\[\rho(x,Z) = \inf_{z \in Z}\, \rho(x,z) \qquad \textrm{and}\qquad \mathrm{diam}(Z) = \sup_{z,z'\in Z}\, \rho(z,z').\]

\section{Non-convergence for worst-case sequences} \label{sec:non-convergence}
To motivate the study of non-worst case sequences, let's first consider a worst-case example showing that the nearest neighbor rule can make a mistake in each round learning a threshold function:

\begin{example}[Failing to learn a threshold]
Let $\Xcal = [-1,1]$ and let $\eta(x) = \ind\{x \geq 0\}$ be a threshold function. Let $\XX$ be defined by $X_n = (-1/3)^n$. The nearest neighbor rule makes a mistake every round $n + 1$: the nearest neighbor of $X_{n+1}$ is $X_n$, which has the opposite sign (see \Cref{fig:threshold}).
\end{example}
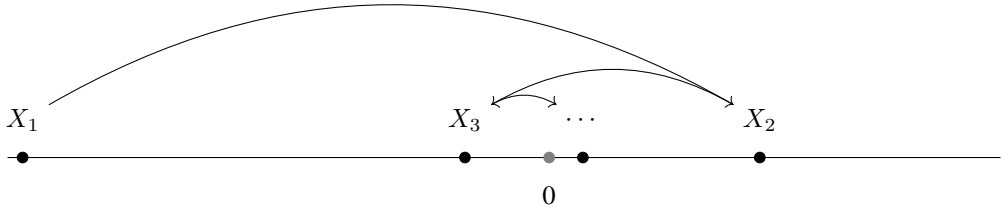
\begin{figure}[t]
    \centering
    \begin{tikzpicture}
    \draw (-7.2,0) -- (6,0);
    \filldraw [gray] (0,0) circle (2pt);
    \node at (0, -0.5) {0};
    
    \filldraw [black] (-7,0) circle (2pt);
    \node at (-7, 0.5) (x1) {$X_1$};

    \filldraw [black] (7/2.5,0) circle (2pt);
    \node at (7/2.5, 0.5) (x2) {$X_2$};

    \filldraw [black] (-7/6.25,0) circle (2pt);
    \node at (-7/6.25, 0.5) (x3) {$X_3$};

    \filldraw [black] (7/15.625, 0) circle (2pt);
    \node at (7/15.625, 0.5) (x4) {$\dotsm$};

    \path[->] (x1) edge[bend left] node [left] {} (x2);
    \path[->] (x2) edge[bend right] node [left] {} (x3);
    \path[->] (x3) edge[bend left] node [left] {} (x4);
    
    \end{tikzpicture}
    \caption{Learning the threshold $\ind\{x \geq 0\}$ on $\mathbb{R}$. The nearest neighbor classifier makes a mistake every single round on the sequence $X_n = (-1/3)^n$, where subsequent test points alternate sign.}
    \label{fig:threshold}
\end{figure}

More generally, the hardness of a point for the nearest neighbor rule depends on its separation from points of different classes---hard sequences exist precisely whenever the classes are not separated: 

\begin{restatable}[Non-convergence in the worst-case]{proposition}{nonconvergence} \label{prop:nonconvergence}
Let $(\mathcal{X}, \rho)$ be a totally bounded metric space. Given $\eta : \Xcal \to \Ycal$, there is a sequence of instances $(X_n)_n$ on which the nearest neighbor rule is not online consistent on $\eta$ if and only if there is no positive separation between classes:
\[\inf_{\eta(x) \ne \eta(x')}\, \rho(x,x') = 0.\]
\end{restatable}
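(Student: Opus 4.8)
The plan is to prove the two implications separately: the ``only if'' direction by contrapositive (positive separation forces the rule to make only finitely many mistakes on any sequence), and the ``if'' direction by constructing an explicit bad sequence. Total boundedness will be used essentially in both halves.

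For ``only if'', suppose $c := \inf_{\eta(x)\neq\eta(x')}\rho(x,x')>0$; I will show the nearest neighbor rule makes at most $M=M(c)$ mistakes on \emph{every} instance sequence, so that its mistake rate is $O(1/N)\to 0$. The key observation: if the rule errs at time $n$, then $\eta(X_n)\neq\eta(\tilde X_n)$ forces $\rho(X_n,\tilde X_n)\ge c$, and since $\tilde X_n$ minimizes distance to $X_n$ over the memory, in fact $\rho(X_n,X_j)\ge c$ for \emph{every} $j<n$; hence the instances at which mistakes occur form a $c$-separated set. In a totally bounded space a $c$-separated set is finite --- cover $\Xcal$ by finitely many balls of radius $c/2$, each of which holds at most one point of such a set --- which gives the bound $M$, irrespective of how ties in the $\argmin$ are resolved. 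The contrapositive is exactly the ``only if'' claim.

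For ``if'', suppose $\inf_{\eta(x)\neq\eta(x')}\rho(x,x')=0$ and choose pairs $(a_j,b_j)$ with $\eta(a_j)\neq\eta(b_j)$ and $\rho(a_j,b_j)\to 0$. By total boundedness, after passing to a subsequence $(a_j)$ is Cauchy, hence so is $(b_j)$, and both converge to a common limit $p$ in the completion $\overline{\Xcal}$; this gives a \emph{dichromatic center}: for every $\delta>0$ and every label $c$, the ball $B(p,\delta)$ contains a point of $\Xcal$ of label $\neq c$ (it contains $a_j$ and $b_j$ for large $j$, and those labels differ). I will then build a deterministic sequence $Z_1,Z_2,\dots$ in $\Xcal$ on which the rule errs at every step $n\ge 2$, so the mistake rate tends to $1$. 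There are two cases. If some $p_0\in\Xcal$ has opposite-label points arbitrarily close to it (which happens whenever $p\in\Xcal$, by the dichromatic property at $p$), set $Z_1:=p_0$ and recursively choose $Z_{k+1}\in\Xcal$ with $\eta(Z_{k+1})\neq\eta(p_0)$ and $\rho(Z_{k+1},p_0)<\tfrac13\min_{2\le i\le k}\rho(Z_i,p_0)$; then the $Z_i$ ($i\ge2$) are distinct and the triangle inequality makes $Z_1=p_0$ the unique nearest neighbor of every later $Z_n$, so the predicted label $\eta(p_0)$ is wrong. Otherwise no such $p_0$ exists, and in particular $p\notin\Xcal$, so $\rho(x,p)>0$ for all $x\in\Xcal$; pick any $Z_1$ and recursively choose $Z_{k+1}\in\Xcal$ with $\eta(Z_{k+1})\neq\eta(Z_k)$ and $\rho(Z_{k+1},p)<\tfrac14\min_{i\le k}\rho(Z_i,p)$, which is possible by the dichromatic property. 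The radius-quartering and the triangle inequality yield $\rho(Z_{k+1},Z_k)<\tfrac54\rho(Z_k,p)<\tfrac{15}{4}\rho(Z_k,p)<\rho(Z_{k+1},Z_i)$ for all $i<k$, so $\tilde Z_{k+1}=Z_k$, whose label differs from $\eta(Z_{k+1})$. In both cases $(Z_n)$ is a sequence on which the rule is not online consistent.

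The main obstacle lies entirely in the ``if'' direction: I must guarantee that the \emph{intended} point is the nearest neighbor of the next instance, rather than some earlier $Z_i$ that has drifted even closer, and this is precisely what the clustering subsequence and the geometrically shrinking radii are designed to secure. The one genuinely fiddly point is that the center $p$ need not lie in $\Xcal$ when $\Xcal$ is incomplete, and that when it does, one cannot alternate labels around it but must instead fan in points of a single opposite label --- the two cases above exist solely to absorb this, and nothing deeper is required.
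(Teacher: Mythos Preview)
Your proposal is correct. The ``only if'' half is essentially the paper's argument in dual form: where the paper covers $\Xcal$ by finitely many mutually-labeling balls of radius $c/3$ and invokes the one-mistake-per-ball lemma, you observe directly that the mistake instances themselves form a $c$-packing, hence are finite in a totally bounded space. These are the same idea.

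The ``if'' half takes a genuinely different route. The paper never passes to the completion or extracts a cluster point; instead it inductively appends pairs $(X_{2n+1},X_{2n+2})$ with $\eta(X_{2n+1})\ne\eta(X_{2n+2})$ and $\rho(X_{2n+1},X_{2n+2})<r/3$, where $r$ is the current minimum nonzero interpoint distance in $\XX_{\le 2n}$, and then checks case-by-case that the nearest neighbor of $X_{2n+2}$ in $\XX_{\le 2n+1}$ is mislabeled (either it is $X_{2n+1}$, or $X_{2n+1}$ and $X_{2n+2}$ share an old nearest neighbor whose label must differ from one of them). This yields mistake rate $\ge 1/2$ with no completion, no cluster point, and no case split on whether the limit lies in $\Xcal$. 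Your construction is more geometric---first locate where the bad pairs accumulate, then spiral in---and it buys a mistake rate tending to $1$, at the price of the detour through $\overline{\Xcal}$ and the two-case analysis. Both are clean; the paper's is more self-contained, yours more visual. One minor slip: in your Case~2 chain the constant $\tfrac{15}{4}$ should be $3$ (from $\tfrac{3}{4}\rho(Z_i,p)>\tfrac{3}{4}\cdot 4\,\rho(Z_k,p)$), but the intended comparison $\tfrac{5}{4}\rho(Z_k,p)<3\,\rho(Z_k,p)$ is of course fine.
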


While the nearest neighbor rule can fail to learn many functions of interest, in both the example and the proof of the proposition, the mode of failure depended on the ability of a worst-case adversary to select instances with arbitrary precision. This can be seen in the threshold example, where the interval on which the learner can make a mistake shrinks exponentially quickly. 

However, this mode of failure may not always be present, especially when the ability of the adversary to select instances from small regions of space diminishes with the sizes of those regions. This may be the case if the adversary is limited in computation, information, or ill-will toward the learner. The question remains: what is the behavior of the nearest neighbor rule in such non-worst-case settings? 

\section{Two general classes of non-worst case sequences} \label{sec:non-worst-case}
To study this, let's formalize the generating process. At each time step $n$, conditioned on the past outcomes $\XX_{< n}$, an adaptive adversary constructs a distribution over $\Xcal$ from which $X_n$ is drawn. In this way, any particular choice of adaptive adversary corresponds to a stochastic process, which defines a probability measure over the space of all sequences of instances. We are interested in the almost-sure online consistency of the nearest neighbor rule under these measures.

In the standard i.i.d.\@ setting, each $X_n$ is drawn from the same underlying distribution. In the worst-case setting, the conditional distribution of $X_n|\,\XX_{<n}$ may be point masses and so $\XX$ may be an arbitrary sequence. We introduce two mildly-constrained classes of non-worst-case processes. Both are given with respect to an underlying reference measure $\nu$ (which we assumed to be finite).

\paragraph{Ergodically dominated processes}
The first can be considered a class of \emph{budgeted adversaries}. For any region $A \subset \Xcal$, the asymptotic rate at which the adversary can select points in $A$ is bounded by a function $\epsilon(\,\cdot\,)$ of $\nu(A)$. In particular, we require $\epsilon(\delta)\!\!\slantarrow\! 0$ as $\delta$ goes to zero. Instances from an ergodically dominated process do not concentrate in regions with small mass in retrospect.

\begin{definition}[Ergodic continuity] \label{def:ergodic-continuity}
    A stochastic process $\XX$ is \defstyle{ergodically dominated} by $\nu$ if for any $\epsilon > 0$, there exists $\delta > 0$ such that when a measurable set $A \subset \Xcal$ satisfies $\nu(A) < \delta$, then:
    \begin{equation} \label{eqn:ergodic-continuity}
        \qquad \limsup_{N \to \infty}\, \frac{1}{N} \sum_{n=1}^N \ind\big\{X_n \in A\big\} < \epsilon\qquad \mathrm{a.s.}
    \end{equation}
    We say that $\XX$ is
    \emph{ergodically continuous} with respect to $\nu$ at rate $\epsilon(\delta)$.
\end{definition}

As pointed out by \cite{hanneke2021learning}, the set function $A \mapsto \limsup_{N \to \infty} \frac{1}{N} \sum_{n=1}^N \ind\{X_n \in A\}$ is a submeasure. Then, ergodic continuity requires it to be \emph{absolutely continuous} with respect to $\nu$. These processes are closely related to the $\Ccal_1$-processes introduced by \cite{hanneke2021learning}. There, the submeasures must be \emph{exhaustive}, which \cite{talagrand2008maharam} shows to be a strictly weaker condition than absolute continuity. \cite{hanneke2021learning} also demonstrates that there are $\Ccal_1$-processes on the unit interval for which the nearest neighbor rule is not universally online consistent.  

\paragraph{Uniformly dominated processes}
The following can be thought of as a class of \emph{bounded precision} adversaries. Each time step, the probability that the adversary selects an instance from a region $A$ is bounded by a function $\epsilon(\,\cdot\,)$ of $\nu(A)$. Thus, it provides a time-uniform condition:

\begin{definition}[Uniform absolute continuity]\label{def:uniform-domination}
    A stochastic process $\XX$ is \defstyle{uniformly dominated} by $\nu$ if for any $\epsilon > 0$, there exists $\delta > 0$ such that when a measurable set $A \subset \Xcal$ satisfies $\nu(A) < \delta$, then:
    \begin{equation} \label{eqn:uniform-continuity}
        \sup_{n \in \NN}\, \Pr\big(X_n \in A \,|\, \XX_{< n}\big) < \epsilon.
    \end{equation}
    We say that $\XX$ is \emph{uniformly absolutely continuous} with respect to $\nu$ at rate $\epsilon(\delta)$.
\end{definition}

This class can be seen as a strict generalization of the \emph{$\sigma$-smoothed processes} introduced by \cite{haghtalab2020smoothed}, which satisfy the Lipschitz rate $\epsilon(\delta) = \delta / \sigma$ (\Cref{def:smoothed-process}). This stronger, quantitative condition allows us to give rates of convergence for smoothed processes in \Cref{sec:rates}.

\paragraph{Time-averaged behavior of uniformly dominated processes} We now show that ergodic continuity is a weaker condition than uniform absolute continuity. And intuitively, the former lets us prove consistency when the hard or atypical instances come from a small, fixed region of space. The latter will be needed when the hard regions of space evolve over time. 

In the following, we can think of $(A_n)_n$ as a sequence of hard regions (e.g.\@ the region on which the learner can make mistakes). By the martingale law of large numbers, if the mass of these regions eventually remain small, then the average rate at which $\XX$ lands in hard regions also becomes small:

\begin{restatable}{lemma}{uniformergodic} \label{lem:uniform-to-ergodic}
    Let $\XX$ be uniformly dominated by $\nu$ at rate $\epsilon(\delta)$, and let $(\Fcal_n)_n$ be its natural filtration. Let $A_n$ be an $\Fcal_n$-predictable sequence where $\limsup_{n \to \infty}\nu(A_n) < \delta$ almost surely. Then:
    \[\qquad\limsup_{N \to \infty} \, \frac{1}{N} \sum_{n=1}^N \ind\big\{X_n \in A_n\big\} \leq \epsilon(\delta) \qquad \mathrm{a.s.}\]
\end{restatable}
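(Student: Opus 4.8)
The plan is to apply a martingale concentration argument to the indicator sequence $\ind\{X_n \in A_n\}$, comparing it to its conditional expectations. Set $Z_n = \ind\{X_n \in A_n\} - \Pr(X_n \in A_n \mid \Fcal_{n-1})$. Since $A_n$ is $\Fcal_{n-1}$-predictable (I read ``$\Fcal_n$-predictable'' as measurable with respect to the past, so that $A_n$ is determined before $X_n$ is drawn), the conditional probability is well-defined and $Z_n$ is a bounded martingale difference sequence with respect to $(\Fcal_n)_n$. The partial sums $M_N = \sum_{n=1}^N Z_n$ form a martingale with increments bounded by $1$ in absolute value. By the martingale strong law of large numbers (e.g.\@ the $L^2$ or Azuma-type version: since $\sum_n \EE[Z_n^2]/n^2 < \infty$), we get $\frac{1}{N} M_N \to 0$ almost surely. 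Hence
\[
\limsup_{N \to \infty} \frac{1}{N} \sum_{n=1}^N \ind\{X_n \in A_n\} = \limsup_{N \to \infty} \frac{1}{N} \sum_{n=1}^N \Pr\big(X_n \in A_n \mid \Fcal_{n-1}\big) \qquad \mathrm{a.s.}
\]

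Next I would bound the right-hand side using uniform domination. The subtlety is that uniform domination gives $\Pr(X_n \in A \mid \Fcal_{n-1}) < \epsilon(\delta)$ only for \emph{fixed} sets $A$ with $\nu(A) < \delta$, whereas here $A_n$ is random and only eventually satisfies $\nu(A_n) < \delta$. To handle the randomness: on the event $\{\nu(A_n) < \delta\}$, which is $\Fcal_{n-1}$-measurable, we can condition further and invoke uniform domination pointwise---for each realization of the past making $\nu(A_n) < \delta$, the set $A_n$ is a deterministic set of mass below $\delta$, so $\Pr(X_n \in A_n \mid \Fcal_{n-1}) < \epsilon(\delta)$ on that event. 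To handle the ``eventually'': let $\delta' > \delta$ be such that $\limsup_n \nu(A_n) < \delta'$ still holds (using that the hypothesis $\limsup_n \nu(A_n) < \delta$ is strict, or simply run the argument with $\delta$ replaced by any $\delta'' $ strictly between $\limsup_n \nu(A_n)$ and $\delta$, then take $\epsilon$ monotonically). Almost surely there is a (random) time $T$ after which $\nu(A_n) < \delta$ for all $n \geq T$; for $n \geq T$ we have $\Pr(X_n \in A_n \mid \Fcal_{n-1}) < \epsilon(\delta)$, and the finitely many earlier terms contribute nothing to the Cesàro limit. Therefore $\limsup_N \frac{1}{N}\sum_{n=1}^N \Pr(X_n \in A_n \mid \Fcal_{n-1}) \leq \epsilon(\delta)$ almost surely, which combined with the martingale step gives the claim.

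The main obstacle I anticipate is the measurability bookkeeping in the second step: making precise that one may ``freeze the past'' and treat $A_n$ as a deterministic set when applying Definition~\ref{def:uniform-domination}. Formally, $\Pr(X_n \in A_n \mid \Fcal_{n-1})(\omega) = \Pr(X_n \in A_n(\omega) \mid \Fcal_{n-1})(\omega)$ for $\omega$ in the $\Fcal_{n-1}$-measurable event $\{\nu(A_n) < \delta\}$, and the supremum bound in the definition applies to each such fixed set---so we need the definition's bound to hold $\Fcal_{n-1}$-almost surely, not just in expectation, which is exactly what Equation~\eqref{eqn:uniform-continuity} provides. The only other care needed is that the strict inequality in the hypothesis $\limsup_n \nu(A_n) < \delta$ ensures the stopping time $T$ is almost surely finite; everything else is routine. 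I would also note in passing that the conclusion uses $\leq \epsilon(\delta)$ rather than $<$, which is consistent with passing to the limit of strict bounds.
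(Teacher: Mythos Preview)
Your proposal is correct and follows essentially the same approach as the paper: define the martingale differences $Z_n = \ind\{X_n \in A_n\} - \Pr(X_n \in A_n \mid \Fcal_{n-1})$, apply the martingale strong law of large numbers to kill the average of $Z_n$, and then bound the Ces\`aro limsup of the conditional probabilities by $\epsilon(\delta)$ via uniform domination. The paper's proof is terser on the second step---it simply asserts $\limsup_n \Pr(X_n \in A_n \mid \Fcal_{n-1}) < \epsilon(\delta)$ a.s.\ without spelling out the freeze-the-past and random-stopping-time argument you give---so your extra care there is warranted but not a different route.
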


Setting $A_n$ to $A$ shows that ergodic continuity is weaker than uniform absolute continuity. In fact, as it only constrains the tail of $\XX$, it is strictly weaker (the ergodically-dominated adversary is stronger).

\section{Consistency for functions with negligible boundaries} \label{sec:consistency-boundaryless}

The inductive bias built into the nearest neighbor rule is that most points are surrounded by other points of the same class (though one might have to zoom in very close to the point). We first consider label functions for which this inductive bias is correct $\nu$-almost everywhere. Ergodic continuity with respect to $\nu$ shall be enough for online consistency. To formalize these functions, we define:

\begin{definition}[Boundary point]
    Let $\eta : \Xcal \to \Ycal$ be measurable. Let $\mathrm{margin}_\eta(x)$ be the distance from $x$ to points of other classes, and let $\partial_\eta \Xcal$ denote the \defstyle{boundary} of $\eta$, points with no margin:
    \[\mathrm{margin}_\eta(x) = \inf_{\eta(x) \ne \eta(x')}\, \rho(x,x') \qquad \textrm{and}\qquad\partial_\eta \Xcal = \big\{x \in \Xcal : \mathrm{margin}_\eta(x) = 0\big\}.\]
    Let $\Fcal_0 = \big\{\eta \textrm{ measurable} : \nu(\partial_\eta\Xcal) = 0\big\}$ denote the set of \defstyle{functions with negligible boundaries}.
\end{definition}

The class $\Fcal_0$ captures many label functions of interest. For example, when $\Xcal$ is a Euclidean space equipped with the Lebesgue measure, then label functions with smooth decision boundaries have negligible boundaries---the boundary forms a lower-dimensional manifold with zero measure. 

\begin{theorem}[Online consistency for $\Fcal_0$] \label{thm:consistency-F0}
    Let $(\mathcal{X}, \rho, \nu)$ be a metric measure space, where $\rho$ is a separable metric and $\nu$ is a finite Borel measure. Let $\XX$ be ergodically dominated by $\nu$ and let $\eta$ have $\nu$-negligible boundary. The nearest neighbor rule is online consistent with respect to $(\XX, \eta)$.
\end{theorem}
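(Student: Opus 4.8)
The plan is to formalize the informal sketch in the introduction: cover almost all of $\Xcal$ by *mutually-labeling* balls, use the fact that the nearest neighbor rule makes at most one mistake on each such ball, and then use ergodic continuity to bound the rate at which instances land outside this covered region. First I would recall (from \Cref{def:mls} and \Cref{lem:mutual-labeling-property}, which I may assume) that a mutually-labeling ball is one on which any two points have the same label, and that on such a ball the nearest neighbor rule errs at most once --- after the first instance lands in the ball, its label is memorized and every subsequent instance in that ball finds a correctly-labeled neighbor at least as close. The key geometric input is that when $\eta$ has negligible boundary, for $\nu$-almost every $x$ there is a radius $r(x) > 0$ with $B(x, r(x))$ entirely within one class (indeed $B(x, \mathrm{margin}_\eta(x)/2)$ works, and $\mathrm{margin}_\eta(x) > 0$ off the boundary), so such balls cover $\Xcal \setminus \partial_\eta \Xcal$, a set of full measure.

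Next I would extract, for each $k \in \NN$, a \emph{finite} collection of mutually-labeling balls $B_1^{(k)}, \ldots, B_{m_k}^{(k)}$ whose union $G_k$ satisfies $\nu(\Xcal \setminus G_k) < \delta_k$, where $\delta_k$ is the tolerance that ergodic continuity (\Cref{def:ergodic-continuity}) associates with $\epsilon_k = 1/k$. Separability of $\rho$ is what lets me do this: the almost-covering family can be thinned to a countable subcover of the $\sigma$-compact-like set $\Xcal \setminus \partial_\eta\Xcal$ up to a null set, and then truncated to a finite subfamily capturing all but $\delta_k$ of the mass --- this uses only continuity of measure from below on the increasing union of finitely many balls. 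Write $R_k = \Xcal \setminus G_k$ for the \emph{remainder region}; by construction $\nu(R_k) < \delta_k$, so by ergodic continuity $\limsup_N \frac1N \sum_{n \le N} \ind\{X_n \in R_k\} < 1/k$ almost surely.

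Now decompose the mistake count. Any round $n$ on which the learner errs satisfies either $X_n \in R_k$, or $X_n \in B_i^{(k)}$ for some $i$; but in the latter case, by the mutual-labeling property, it is the \emph{first} visit of $\XX$ to that particular ball $B_i^{(k)}$ (all later visits are correct). Hence the total number of mistakes up to time $N$ is at most $\sum_{n \le N} \ind\{X_n \in R_k\} + m_k$, since there are only $m_k$ balls in the $k$-th family and each contributes at most one mistake ever. Dividing by $N$ and sending $N \to \infty$, the $m_k/N$ term vanishes and we get $\limsup_N \frac1N \sum_{n \le N} \ind\{\eta(X_n) \ne \eta(\tilde X_n)\} \le 1/k$ almost surely. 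Intersecting these almost-sure events over all $k \in \NN$ (a countable intersection) gives that the mistake rate is $\le 1/k$ for every $k$ simultaneously, hence zero, which is exactly \eqref{eqn:online-consistency}.

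The main obstacle, and the step needing the most care, is the finite almost-covering by mutually-labeling balls: one must check that the balls $B(x, \mathrm{margin}_\eta(x)/2)$ are genuinely mutually-labeling (two points in such a ball are within $\mathrm{margin}_\eta(x)$ of each other, but one needs the slightly stronger statement that they cannot straddle a class boundary --- here using that if $y, z \in B(x,\mathrm{margin}_\eta(x)/2)$ had $\eta(y)\ne\eta(z)$ then $\mathrm{margin}_\eta(x) \le \rho(x,y) + \rho(y,z)$ forces a contradiction once one notes $\rho(x,\{x' : \eta(x')\ne\eta(x)\})$ is controlled), and that the reduction to a countable then finite subfamily only loses null-or-arbitrarily-small mass. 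The interplay between "at most one mistake per ball" and "only finitely many balls in the cover" is what converts a pointwise-positive but non-uniform margin into a vanishing mistake rate; everything else is bookkeeping with the $\limsup$ and a countable intersection of probability-one events.
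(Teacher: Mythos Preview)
Your overall architecture is exactly the paper's: cover all but a $\delta$-mass remainder by finitely many mutually-labeling balls (via Lindel\"of and continuity of measure), apply \Cref{lem:mutual-labeling-property} to bound mistakes inside the cover by the number of balls, apply ergodic continuity on the remainder, and intersect over a countable sequence $\epsilon_k \downarrow 0$. So the strategy is correct and matches the paper.

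There is, however, a genuine slip in your treatment of mutually-labeling sets. You paraphrase the property as ``any two points have the same label,'' and accordingly choose radius $\mathrm{margin}_\eta(x)/2$. That weaker property is \emph{not} what \Cref{lem:mutual-labeling-property} requires and does \emph{not} guarantee at most one mistake: on $[-1,1]$ with $\eta = \ind\{x \ge 0\}$, the ball $(0,1)$ is monochromatic, yet the sequence $X_1 = -0.01,\ X_2 = 0.5,\ X_3 = 0.01$ produces two nearest-neighbor mistakes inside it. The paper's \Cref{def:mls} demands $\mathrm{diam}(U) < \mathrm{margin}_\eta(x')$ for \emph{every} $x' \in U$, and this is what forces the nearest neighbor of a second visitor to be correctly labeled. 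With radius $r = \mathrm{margin}_\eta(x)/2$ the diameter can be as large as $\mathrm{margin}_\eta(x)$ while points near the edge have margin only slightly above $\mathrm{margin}_\eta(x)/2$, so the condition fails. The fix is exactly \Cref{lem:mutually-labeling-balls}: take $r < \mathrm{margin}_\eta(x)/3$. With that single change your proof goes through verbatim and coincides with the paper's.
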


To prove this, we introduce the notion of a \emph{mutually-labeling set}, which are subsets $U$ of $\Xcal$ that share a single label under $\eta$, and whose diameter is less than the distance to reach a different class: 

\begin{definition}[Mutually-labeling set]\label{def:mls}
    A set $U \subset \Xcal$ is
    \defstyle{mutually-labeling} for $\eta$ if for all $x \in U$,
    \begin{equation} \label{eqn:large-margin} 
        \mathrm{diam}(U) < \mathrm{margin}_\eta(x).
    \end{equation}
\end{definition} 

See Figure~\ref{fig:mutually-labeling} for a picture. This construct is useful because the nearest neighbor rule makes at most one mistake per mutually-labeling set---see \Cref{lem:mutual-labeling-property}. Moreover, it is easy to construct such sets; \Cref{lem:mutually-labeling-balls} shows that sufficiently small balls centered at non-boundary points are mutually-labeling.

\begin{restatable}{lemma}{mlp} \label{lem:mutual-labeling-property}
    Let $U$ be a mutually-labeling set for $\eta$. Let $\XX$ be an arbitrary process. Then:
    \[\sum_{n=1}^\infty \ind\big\{X_n \in U \,\textrm{ and }\,\eta(X_n) \ne \eta(\tilde{X}_n)\big\} \leq 1.\]
\end{restatable}

\begin{restatable}{lemma}{mlb}\label{lem:mutually-labeling-balls}
    For any $0 < r < \mathrm{margin}_\eta(x)/3$, the ball $B(x,r)$ is mutually-labeling for $\eta$.
\end{restatable}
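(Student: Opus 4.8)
The plan is to unwind the definition of mutually-labeling directly via the triangle inequality; there is essentially a single inequality to get right. First I would dispatch the trivial case: if $\mathrm{margin}_\eta(x) = 0$ there is no $r$ with $0 < r < \mathrm{margin}_\eta(x)/3$, so the statement is vacuous. Hence assume $\mathrm{margin}_\eta(x) > 0$, fix such an $r$, and set $U = B(x,r)$.

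Next I would record two elementary facts. (i) Since $U$ is an open ball, the triangle inequality gives $\rho(y,z) < 2r$ for any $y,z \in U$, so $\mathrm{diam}(U) \le 2r$. (ii) Every point of $U$ carries the same label as $x$: if some $x' \in U$ had $\eta(x') \ne \eta(x)$, then by definition of the margin $\mathrm{margin}_\eta(x) \le \rho(x,x') < r < \mathrm{margin}_\eta(x)/3$, a contradiction.

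The core step is a reverse-triangle-inequality bound on the margin of an interior point. Fix $x' \in U$ and any $w$ with $\eta(w) \ne \eta(x')$; by (ii) we also have $\eta(w) \ne \eta(x)$, so $\rho(x,w) \ge \mathrm{margin}_\eta(x)$, and therefore $\rho(x',w) \ge \rho(x,w) - \rho(x,x') > \mathrm{margin}_\eta(x) - r$. Taking the infimum over all such $w$ yields $\mathrm{margin}_\eta(x') \ge \mathrm{margin}_\eta(x) - r > 3r - r = 2r$, where the last strict inequality uses $r < \mathrm{margin}_\eta(x)/3$. Combining this with (i) gives $\mathrm{diam}(U) \le 2r < \mathrm{margin}_\eta(x')$ for every $x' \in U$, which is exactly condition~\eqref{eqn:large-margin}, so $U = B(x,r)$ is mutually-labeling for $\eta$.

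The only thing to be careful about is bookkeeping of strict versus non-strict inequalities. Openness of the ball makes $\rho(x,x') < r$ strict, but since $\mathrm{margin}_\eta(\cdot)$ is an infimum, the strict gap between $\mathrm{diam}(U)$ and $\mathrm{margin}_\eta(x')$ ultimately has to come from the slack in the hypothesis $r < \mathrm{margin}_\eta(x)/3$; the constant $3$ (rather than $2$) is precisely what furnishes this slack, since it forces $\mathrm{margin}_\eta(x) - r > 2r$. I do not anticipate any substantive obstacle beyond this.
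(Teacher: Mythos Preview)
Your argument is correct and mirrors the paper's proof almost exactly: both show that points of $B(x,r)$ share the label of $x$, then use the reverse triangle inequality to lower bound $\mathrm{margin}_\eta(x')$ by $\mathrm{margin}_\eta(x)-r$, and compare against $\mathrm{diam}\big(B(x,r)\big)\le 2r$. The paper phrases the final bound as $\mathrm{margin}_\eta(x') > \tfrac{2}{3}\,\mathrm{margin}_\eta(x)$ rather than $\mathrm{margin}_\eta(x') > 2r$, but this is cosmetic.
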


\begin{proof}[Proof of \Cref{thm:consistency-F0}]
    Fix $\epsilon > 0$. We first prove that the asymptotic mistake rate is bounded by $\epsilon$. Choose $\delta > 0$ such that the ergodic domination condition, \Cref{eqn:ergodic-continuity}, holds. We claim that we can cover all of $\Xcal$ by a finite number $K_\delta < \infty$ of mutually-labeling sets, except for a region $A_\delta$ of small mass $\nu(A_\delta) < \delta$. By \Cref{lem:mutual-labeling-property}, at most one mistake can be made on each of the mutually-labeling set, so that all but finitely many mistakes come from $A_\delta$. Thus, the asymptotic mistake rate is bounded by the rate at which $\XX$ can hit $A_\delta$. By definition of ergodic continuity, this is less than $\epsilon$,
    \[\limsup_{N \to \infty}\, \frac{1}{N} \sum_{n=1}^N \ind\big\{\eta(X_n) \ne \eta(\tilde{X}_n)\big\} \leq \limsup_{N \to \infty}\, \frac{1}{N}\sum_{n=1}^N \ind\big\{X_n \in A_\delta\} < \epsilon \qquad \mathrm{a.s.}\]
    Online consistency follows by applying this simultaneously to a countable sequence of $\epsilon_i \downarrow 0$. 
    
    To finish the proof, we construct the above collection of mutually-labeling sets. By \Cref{lem:mutually-labeling-balls}, we can cover almost all of $\Xcal$ by the collection of mutually-labeling balls, since we only miss out on the boundary points, which is $\nu$-negligible. This collection has a countable subcover $\Ccal = \{B_1, B_2,\ldots\}$ because $\rho$ is separable. By the finiteness of $\nu$ and the continuity of measures, when $K_\delta$ is sufficiently large, the first $K_\delta$ balls cover everything but a region $A_\delta$ of mass $\nu(A_\delta) < \delta$,
    \[A_\delta = \Xcal \setminus \bigcup_{k \leq K_\delta} B_k.\] 
    A picture is also given in Figure~\ref{fig:mutually-labeling}.
\end{proof}

\section{Universal consistency on upper doubling spaces} \label{sec:universal-consistency}

We now show that the nearest neighbor rule is \emph{universally online consistent} (that is, consistent for any measurable function) whenever $\XX$ is uniformly dominated and $\Xcal$ is an \emph{upper doubling} space:

\begin{definition}[Upper doubling]\label{def:roughly-doubling}
    A metric space $(\Xcal, \rho)$ is \emph{doubling} with doubling dimension $d$ if every ball $B(x,r)$ can be covered by $\smash{2^d}$ balls of radii $r/2$. A $d$-doubling space with measure $\nu$ is \emph{upper doubling} if there exists $c > 0$ such that for all $B(x,r)$, we have $\smash{\nu\big(B(x,r)\big) \leq c r^d}$.
\end{definition}

This notion was introduced under a somewhat more general form by \cite{hytonen2010framework}, and it relaxes the condition of a doubling measure space. For example, $\RR^d$ with the $\ell_\infty$-distance and Lebesgue measure is readily seen to be upper doubling with doubling dimension $d$. 

\begin{theorem}[Universal consistency]\label{thm:universal-consistency}
    Let $(\Xcal, \rho, \nu)$ be an upper doubling metric measure space, where $\rho$ is a separable metric and $\nu$ is a finite Borel measure. Let $\XX$ be uniformly dominated by $\nu$. For any measurable $\eta$, the nearest neighbor rule is online consistent with respect to $(\XX, \eta)$.
\end{theorem}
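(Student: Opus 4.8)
The plan is to reduce the universal case to the negligible-boundary case already handled in \Cref{thm:consistency-F0}, using the density of $\Fcal_0$ (\Cref{lem:F0-dense}) and the fact that nearest neighbor processes cannot amplify small sets (\Cref{thm:ergodic-continuity}). Fix a measurable $\eta$. Since $\XX$ is uniformly dominated, it is in particular ergodically dominated (apply \Cref{lem:uniform-to-ergodic} with the constant predictable sequence $A_n \equiv A$), so \Cref{thm:consistency-F0} applies to any $\eta' \in \Fcal_0$. Given such an $\eta'$, write $D = \{x : \eta(x) \ne \eta'(x)\}$ for the disagreement region. The key pointwise bound is that a round-$n$ mistake on $\eta$ forces one of three events: $X_n \in D$, or $\tilde{X}_n \in D$, or a round-$n$ mistake on $\eta'$ --- for if none of these holds then $\eta(X_n) = \eta'(X_n) = \eta'(\tilde{X}_n) = \eta(\tilde{X}_n)$. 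Hence
\[\ind\{\eta(X_n) \ne \eta(\tilde{X}_n)\} \le \ind\{X_n \in D\} + \ind\{\tilde{X}_n \in D\} + \ind\{\eta'(X_n) \ne \eta'(\tilde{X}_n)\}.\]

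Next I would bound the Cesàro $\limsup$ of each term, using subadditivity of $\limsup$. For the first term: $\XX$ is uniformly dominated at some rate $\epsilon(\cdot)$, so if $\nu(D)$ is small enough, \Cref{lem:uniform-to-ergodic} gives $\limsup_N \frac1N\sum_{n\le N}\ind\{X_n \in D\} \le \epsilon(\nu(D))$ almost surely. For the third term: $\eta' \in \Fcal_0$ and $\XX$ is ergodically dominated, so \Cref{thm:consistency-F0} makes this $\limsup$ equal to $0$ almost surely. The second term is the crux, and it is exactly where \Cref{thm:ergodic-continuity} enters: the nearest neighbor process $\tilde{\XX}$ is itself ergodically dominated, say at rate $\tilde{\epsilon}(\cdot)$, so making $\nu(D)$ small forces $\limsup_N \frac1N\sum_{n\le N}\ind\{\tilde{X}_n \in D\} \le \tilde{\epsilon}(\nu(D))$ almost surely. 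Summing, on an almost sure event depending on $\eta'$,
\[\limsup_{N \to \infty} \frac1N \sum_{n=1}^N \ind\{\eta(X_n) \ne \eta(\tilde{X}_n)\} \le \epsilon(\nu(D)) + \tilde{\epsilon}(\nu(D)).\]

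Finally I would close the argument by approximation. By \Cref{lem:F0-dense}, pick $\eta'_i \in \Fcal_0$ with $\nu(D_i) \to 0$, where $D_i = \{x : \eta(x) \ne \eta'_i(x)\}$. Intersecting the countably many almost sure events above, almost surely the mistake-rate $\limsup$ is at most $\epsilon(\nu(D_i)) + \tilde{\epsilon}(\nu(D_i))$ for every $i$; since both rates vanish as their argument tends to $0$, letting $i \to \infty$ forces the $\limsup$ to be $0$ almost surely, which is online consistency. The one genuine obstacle in this chain is \Cref{thm:ergodic-continuity} --- controlling $\sum_n \ind\{\tilde{X}_n \in D\}$ --- because $\tilde{X}_n$ is not a fresh draw but a history-dependent point whose Voronoi cell can grow over time; but that theorem is established separately, exploiting the upper doubling structure so that small metric entropy implies small mass and the Voronoi cells of repeatedly-hit memorized points shrink in mass. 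Everything else here is bookkeeping with subadditive $\limsup$s and countable intersections of full-measure events.
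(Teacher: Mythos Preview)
Your decomposition and overall strategy match the paper's exactly: split a mistake on $\eta$ into the three events $X_n \in D$, $\tilde{X}_n \in D$, and a mistake on $\eta' \in \Fcal_0$; kill the first with \Cref{lem:uniform-to-ergodic}, the third with \Cref{thm:consistency-F0}, and the second with \Cref{thm:ergodic-continuity}; then approximate via \Cref{lem:F0-dense}. So the idea is right.

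There is, however, one genuine gap. \Cref{thm:ergodic-continuity} is stated only for spaces of \emph{bounded diameter}, whereas \Cref{thm:universal-consistency} makes no such assumption. You invoke \Cref{thm:ergodic-continuity} directly to get a rate $\tilde{\epsilon}(\cdot)$ for $\tilde{\XX}$ on all of $\Xcal$, but as written that theorem does not apply. The paper handles this explicitly: it first finds a bounded region $\Xcal_\epsilon \subset \Xcal$ (via \Cref{lem:bounded-regime}) such that the rate at which either $X_n$ or $\tilde{X}_n$ escapes $\Xcal_\epsilon$ is below $\epsilon$, charges those escapes as an additional error term, and only then applies \Cref{thm:ergodic-continuity} to the bounded subspace $(\Xcal_\epsilon, \rho, \nu)$ with $A = \Xcal_\epsilon \cap D$. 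Without this reduction step your bound on $\limsup_N \frac{1}{N}\sum_n \ind\{\tilde{X}_n \in D\}$ is not justified.
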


The doubling metric condition is helpful because the set $\Fcal_0$ of functions with negligible boundaries is then dense in $L^1(\Xcal; \nu)$, as shown in \Cref{lem:F0-dense}. That is, any measurable $\eta$ can be arbitrarily well-approximated by $\Fcal_0$, for which we know the nearest neighbor rule is consistent.

\begin{restatable}[$\Fcal_0$ is dense in $L^1$]{proposition}{fzerodense} \label{lem:F0-dense}
    Let $(\Xcal, \rho, \nu)$ be a metric measure spaces where $\rho$ is doubling and $\nu$ is a finite Borel measure. Then, the set $\Fcal_0$ is dense in $L^1(\Xcal, \nu)$.
\end{restatable}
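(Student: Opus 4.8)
The plan is to show that indicator functions of Borel sets can be approximated in $L^1(\Xcal,\nu)$ by functions with negligible boundary; since simple functions are dense in $L^1$, this suffices. So fix a Borel set $E \subset \Xcal$ and $\epsilon > 0$; the goal is to produce a measurable $\eta' \in \Fcal_0$ (viewed as a $\{0,1\}$-valued function, or slightly more generally a function whose level sets have negligible topological-type boundary) with $\int_\Xcal |\ind_E - \eta'|\, d\nu < \epsilon$. The natural candidate is to replace $E$ by an open set or a finite union of balls that approximates it, because the boundary of a "nice" open set should be small. Concretely, by outer regularity of the finite Borel measure $\nu$ on a metric space, pick an open $U \supseteq E$ with $\nu(U \setminus E) < \epsilon/2$, and by inner regularity pick a closed $C \subseteq E$ with $\nu(E \setminus C) < \epsilon/2$; then $\ind_C \le \ind_E \le \ind_U$ and $\|\ind_U - \ind_C\|_1 < \epsilon$. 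It remains to exhibit a function in $\Fcal_0$ sandwiched between $\ind_C$ and $\ind_U$, or at least within $\epsilon$ of $\ind_E$.

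The key step is turning an open set into a set whose boundary is $\nu$-negligible. Here is where the doubling hypothesis enters. Cover $U$ by balls contained in $U$; by the doubling property the metric is separable, so extract a countable subcover $\{B(x_i, r_i)\}$. For each $i$ I would like to choose the radius $r_i$ so that the sphere $\{x : \rho(x,x_i) = r_i\}$ has $\nu$-measure zero, and moreover so that the ball stays mutually-compatible with the sandwiching; this is possible because for fixed center the spheres $S(x_i,t)$, $t>0$, are disjoint, so at most countably many have positive $\nu$-mass, and we may perturb $r_i$ slightly (shrinking it to keep $B(x_i,r_i) \subseteq U$ and keep $C$ covered after finitely many steps). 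Take $V = \bigcup_i B(x_i,r_i)$; then $C \subseteq V \subseteq U$ after taking finitely many of the balls to cover the compact-like piece — actually $C$ need not be compact, so instead take $V_K = \bigcup_{i \le K} B(x_i,r_i)$ and note $\nu(U \setminus V_K) \to 0$, so for large $K$ we get $\nu(E \triangle V_K) < \epsilon$. The finite union $V_K$ has boundary contained in $\bigcup_{i\le K} S(x_i, r_i)$, a finite union of $\nu$-null spheres, hence $\nu(\partial V_K) = 0$.

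The remaining point is that $\nu(\partial V_K) = 0$ implies $\ind_{V_K} \in \Fcal_0$: I need $\mathrm{margin}_{\ind_{V_K}}(x) > 0$ for $\nu$-almost every $x$. Every $x$ in the open interior $V_K^\circ$ has a ball around it inside $V_K$, so positive margin; every $x$ in the open exterior $\overline{V_K}^{\,c}$ likewise has positive margin; the only points with zero margin lie in the topological boundary $\partial V_K$, which we arranged to be $\nu$-null. Hence $\ind_{V_K} \in \Fcal_0$ and $\|\ind_E - \ind_{V_K}\|_1 < \epsilon$, completing the argument for indicators and therefore, by linearity and density of simple functions, for all of $L^1(\Xcal,\nu)$.

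I expect the main obstacle to be the bookkeeping around choosing the radii $r_i$: one must simultaneously keep each $B(x_i,r_i)$ inside $U$, ensure finitely many of them still cover enough of $E$ to control $\nu(E \triangle V_K)$, and dodge the at-most-countably-many bad radii per center that carry positive sphere mass. Each constraint is individually easy — the "generic radius" trick handles the null spheres, outer/inner regularity of a finite Borel measure on a metric space handles the sandwiching, and separability (a consequence of doubling) handles the countable subcover — but assembling them in the right order requires care. A secondary subtlety is confirming that "negligible boundary" in the sense of $\partial_\eta\Xcal$ (zero margin) really does coincide, for an indicator, with the topological boundary of the set, which is the short observation in the last paragraph. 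Note that the doubling dimension $d$ itself is not quantitatively needed here; only separability is used, though stating the hypothesis as "doubling" keeps it consistent with the rest of the paper.
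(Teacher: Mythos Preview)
Your proposal is correct and takes a genuinely different route from the paper. The paper approximates an arbitrary $\eta \in L^1$ directly: it invokes a Lebesgue differentiation theorem due to Hyt\"onen (valid for doubling metrics with merely finite Borel measure) to find, for almost every $x$, small $(5,\beta)$-doubling balls on which $\eta$ overwhelmingly agrees with $\eta(x)$, and then applies a Vitali covering theorem to extract a countable \emph{disjoint} family of such balls covering almost all of $\Xcal$; the approximant $\eta'$ is defined to equal $\eta(x_i)$ on each ball $B(x_i,r_i)$. Your argument instead reduces to indicators via the density of simple functions, then uses outer regularity, a countable subcover of the open set $U$, and the ``generic radius'' trick (at most countably many spheres about a fixed center carry mass) to produce a finite union of open balls with $\nu$-null topological boundary. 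Your approach is more elementary---it avoids the differentiation and Vitali machinery entirely---and, as you correctly observe, it uses only separability rather than the doubling hypothesis quantitatively. The paper's approach, on the other hand, handles an arbitrary label space $\Ycal$ in one stroke without passing through linear combinations of indicators, which is convenient for the classification setting where $\Ycal$ need not be a vector space; your route would need the extra (easy) remark that for finitely many classes one approximates each level set separately. The bookkeeping you flag (shrink radii after fixing $K$, losing at most $\epsilon/(2K)$ mass per ball, and check $\partial_{\ind_V}\Xcal$ equals the topological boundary $\partial V$) all goes through as you describe.
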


To show universal consistency, it is not enough that $\eta$ can be well-approximated by a function $\eta_0$ with negligible boundary. We also need to know that their disagreement region $\{\eta \ne \eta_0\}$ cannot have excessive influence on the behavior of nearest neighbor predictions. The upper doubling condition allows us to show that when $\XX$ is uniformly dominated, then $\smash{\tilde{\XX}}$ is ergodically dominated. We will use this to limit the rate that nearest neighbors come from regions where $\eta$ is poorly approximated.

\begin{restatable}[Ergodic continuity of nearest neighbor processes]{theorem}{nearestergodic}
    \label{thm:ergodic-continuity}
    Let $(\Xcal, \rho, \nu)$ be a upper doubling space with bounded diameter. Suppose that a process $\XX$ is uniformly dominated by $\nu$ at a rate $\epsilon(\delta)$. There exists constants $c_1, c_2 > 0$ such that for any measurable set $A \subset \Xcal$ with $\nu(A) < \delta_0$,
    \[\qquad \limsup_{N \to \infty}\, \frac{1}{N} \sum_{n=1}^N \ind\big\{\tilde{X}_n \in A\big\} < \inf_{\delta > 0}\,\left\{ \left(c_1 + c_2 \log \frac{1}{\delta}\right) \cdot \epsilon(\delta_0) + \epsilon(\delta) \right\} \qquad \mathrm{a.s.}\]
\end{restatable}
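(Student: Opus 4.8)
The plan is to control, for a fixed measurable set $A$ with $\nu(A) < \delta_0$, the long-run frequency with which $\tilde X_n \in A$. The key observation is that $\tilde X_n \in A$ means the current nearest neighbor of $X_n$ is some earlier point $X_m \in A$; so we need to bound how often points that landed in $A$ end up being "recalled" as nearest neighbors. I would split the analysis by the radius of the nearest-neighbor ball. Fix a scale parameter $\delta$ (to be optimized at the end). Call an index $m$ with $X_m \in A$ \emph{stale at time $n$} if its current Voronoi cell — the set of points closer to $X_m$ than to any other point of $\XX_{<n}$ — has $\nu$-mass at most $\delta$, and \emph{fresh} otherwise. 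The point $\tilde X_n$ lands in $A$ either because it is a fresh point of $A$ (and there are few of those, since in an upper doubling space of bounded diameter, a packing argument bounds the number of disjoint cells of mass $> \delta$ by roughly $c_1 + c_2\log(1/\delta)$ — this is where the $\log(1/\delta)$ factor enters, via a dyadic decomposition of radius scales down to the scale at which mass $\delta$ is reached), or because it is a stale point. In the fresh case, each such cell can be hit, contributing at most its own frequency, and the total frequency of hitting any one of these finitely many cells is, by \Cref{lem:uniform-to-ergodic} applied with the predictable "fresh cells of $A$" as $A_n$, at most $(c_1 + c_2\log\frac{1}{\delta})\,\epsilon(\delta_0)$. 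In the stale case, $X_n$ lands in a region (the union of all stale cells) which, being a union of at most countably many sets each of mass $\le \delta$...

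Actually the stale region need not have small total mass, so here is the fix: in the stale case, $\tilde X_n = X_m$ with $X_m$'s cell of mass $\le\delta$, and $X_n$ lies in that cell; but crucially, once $X_n$ lands there, that $X_n$ itself becomes a better (closer, in a doubling sense) neighbor, so the cell of $X_m$ shrinks. The right way to package this: define $A_n'$ to be the union of Voronoi cells (w.r.t. $\XX_{<n}$) of points $X_m \in A$ whose cell has mass $\le \delta$. This is an $\Fcal_n$-predictable sequence of sets. A given stale point $X_m$ can only be hit while its cell has mass $\le\delta$, and by the doubling property each hit shrinks its cell radius geometrically, so $X_m$ contributes boundedly to $\sum_n \ind\{X_n \in \text{cell}(X_m),\ \tilde X_n = X_m\}$... no — it can be hit many times. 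The cleaner route, and I think the intended one, is: $\limsup_N \frac1N\sum_n \ind\{\tilde X_n \in A\} \le \limsup_N \frac1N\sum_n \ind\{X_n \in A_n'\} + (\text{fresh term})$, and then bound $\nu(A_n')$ directly. The number of stale cells is unbounded, but here the upper doubling measure bound helps: a stale cell of a point $X_m$ has radius $\le r$ with $cr^d \ge$ its mass is the wrong direction. Let me instead bound $\nu(A_n')$ using that the cells are disjoint and the total space has finite measure — that gives $\nu(A_n') \le \nu(\Xcal)$, useless.

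The genuinely correct decomposition, which I would pursue: charge each occurrence $\{\tilde X_n \in A\}$ to the \emph{pair} $(m,n)$ where $m$ is the nearest neighbor index. For fixed $m$ with $X_m \in A$, consider the successive times $n_1 < n_2 < \cdots$ at which $X_m$ is the nearest neighbor. By the doubling property, $\rho(X_{n_{k+1}}, X_m) \le \rho(X_{n_k}, X_m)$ is not automatic, but $\rho(\tilde X_{n_{k+1}}, X_{n_{k+1}}) \le \rho(X_m, X_{n_{k+1}})$, and since $X_{n_k}$ is now in memory, being a nearest neighbor again forces $\rho(X_{n_{k+1}}, X_m) \le \rho(X_{n_{k+1}}, X_{n_k})$. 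A standard doubling-packing argument then shows that among any $2^{O(d)}$ consecutive such recall-times, the distance to $X_m$ must halve; hence after $O(d \log(\text{diam}/r_{\min}))$ recalls the relevant radius is below any target, and one relates $\log(1/r)$ to $\log(1/\delta)$ via upper doubling ($\nu(B(X_m,r)) \le cr^d$, so radius $\le (\delta/c)^{1/d}$ once mass $\le\delta$). This caps the number of "large-radius" recalls per point $m$ by $c_1 + c_2\log\frac1\delta$, and the total number of distinct points $X_m \in A$ recalled at large radius, divided by $N$, is controlled by $\epsilon(\delta_0)$ via \Cref{lem:uniform-to-ergodic} (the set "points of $A$ seen so far that could be large-radius neighbors" is predictable with mass $\le \delta_0$)... giving the $(c_1+c_2\log\frac1\delta)\epsilon(\delta_0)$ term. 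The small-radius recalls: at such a time $n$, $X_n$ lies in $B(X_m, r)$ with $\nu(B(X_m,r)) \le \delta$, and the union over all current small-radius neighbor-balls is predictable; applying \Cref{lem:uniform-to-ergodic} with $\delta$ as the mass bound gives the $\epsilon(\delta)$ term. Taking the infimum over $\delta$ finishes it.

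The main obstacle I anticipate is making the "small-radius recalls happen in a region of small mass" step rigorous: the union of all balls $B(X_m, r_m(n))$ over stale $m$ is not obviously of mass $\le\delta$ (it is a union of possibly many such sets). I expect the resolution is that we do not need the \emph{union} to be small — we apply \Cref{lem:uniform-to-ergodic} separately, or we note that at each time $n$ there is at most \emph{one} relevant ball (the cell containing $X_n$), so the predictable set $A_n'' := \{x : $ the nearest point of $\XX_{<n}$ to $x$ lies in $A$ and has current cell mass $\le\delta\}$ has the property that $\{X_n \in A_n''\}$ captures all small-radius recalls, and $\nu(A_n'')$ — the union of small cells — still is not obviously $\le \delta$. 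So the real content is showing $\nu(A_n'') \to$ something small, which must use the dynamics (cells keep shrinking as they are hit). Quantifying this accumulation-versus-shrinkage tradeoff, and confirming the claimed $\log(1/\delta)$ dependence rather than something worse, is the crux of the proof; the rest is packing arguments in doubling spaces and an invocation of \Cref{lem:uniform-to-ergodic}.
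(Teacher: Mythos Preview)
Your overall strategy---splitting recall events by scale, using packing bounds for large-scale recalls and the measure bound for small-scale ones---is exactly the right architecture, and your per-point count of large-radius recalls (at most $O(2^{2d}\log\frac{1}{\delta})$ per indicated point $X_m$, via a dyadic packing argument in $B(X_m,2r)$) is correct and matches the paper's accounting. The gap is precisely where you locate it: you need the small-radius region $A_n'' = \bigcup_{X_m\in A\cap\XX_{<n}} B(X_m, r_{\min})$ to have $\nu$-mass at most $\delta$ in order to invoke \Cref{lem:uniform-to-ergodic}, and with a \emph{fixed} threshold $r_{\min} = (\delta/c)^{1/d}$ this simply fails---the union can have mass of order $k(n)\cdot\delta$, growing without bound. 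Shrinking $r_{\min}$ with $n$ to keep the union small inflates the large-radius count per point by an extra $\log k(n)$ factor, which destroys the bound. So the ``accumulation-versus-shrinkage tradeoff'' you flag is not just the crux; it is a place where your current decomposition actually breaks, not merely awaits details.

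The paper resolves this with a device you are missing: rather than fixed-radius balls around each $X_m\in A$, it organizes the indicated points into a sequentially-constructed \emph{cover tree} $\Ccal_{k(n)}$ and assigns each indicated point $X_{\tau_k}$ an \emph{adaptive} tail radius $2^{-T_{k,n}}$ with $T_{k,n} = L_k + 1 + \lceil\frac{1}{d}\lg\frac{c}{\delta}\rceil + G_{k,k(n)}$, where $L_k$ is the insertion rank and $G_{k,k(n)}$ counts the number of distinct \emph{generations} of children $X_{\tau_k}$ has acquired in the tree by time $n$. Thus a point's tail shrinks by one dyadic scale each time a new child-rank appears beneath it. An inductive argument over ranks (\Cref{lem:delta-tail}) shows these adaptive tails always have combined mass at most $\delta$, which is what licenses the $\epsilon(\delta)$ term. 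On the metric side, the number of non-tail balls is $\sum_k (T_{k,N}+1-L_k)$, which now carries the extra $\sum_k G_{k,k(N)}$; the saving grace is that this sum is at most $k(N)$, since the total number of generations in a rooted tree cannot exceed the number of nodes. That combinatorial fact is what keeps the dependence at $\log(1/\delta)$ rather than something worse, answering exactly the worry in your last sentence.
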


If we do not optimize the bound and just let $\delta = \delta_0$, this shows that when $\XX$ is uniformly dominated at rate $\epsilon(\delta)$, then $\smash{\tilde{\XX}}$ is ergodically dominated at a slower rate $O(\epsilon(\delta) \log \frac{1}{\delta})$. We can now show:

\begin{proof}[Proof of \Cref{thm:universal-consistency}]
    Fix $\epsilon > 0$. Let $\XX$ be uniformly dominated and let $\eta$ be measurable. We prove that the asymptotic mistake rate of the nearest neighbor rule for $(\XX, \eta)$ is bounded by $3\epsilon$ almost surely. The result follows by simultaneously applying this to a sequence $\epsilon_i \downarrow 0$.

    Let $\eta_0 \in \Fcal_0$ be a $\delta_0$-accurate approximation of $\eta$ so that $\nu(\{\eta \ne \eta_0\}) < \delta_0$. We will set $\delta_0$ later. If at time $n$, the nearest neighbor rule makes a mistake, then at least one of three events must occur:
    
    \begin{minipage}{0.7\textwidth}
        \begin{enumerate}
            \item[(a)] The functions $\eta$ and $\eta_0$ disagree on $X_n$,
            \item[(b)] The functions $\eta$ and $\eta_0$ disagree on $\tilde{X}_n$,
            \item[(c)] The nearest neighbor rule errs at time $n$ on $(\XX, \eta_0)$.
        \end{enumerate}
    \end{minipage}\begin{minipage}{0.29\textwidth}
        \begin{tikzcd}
            \eta(X_n) \arrow[r, -, "\textrm{mistake}"] \arrow[d, -,  "\textrm{(a)}"'] & \eta(\tilde{X}_n) \arrow[d, - , "\textrm{(b)}"] \\
            \eta_0(X_n) \arrow[r, -, "\textrm{(c)}"] & \eta_0(\tilde{X}_n)
        \end{tikzcd}
    \end{minipage}

    \Cref{lem:uniform-to-ergodic} implies that (a) contributes at most $\epsilon(\delta_0)$ to the asymptotic mistake rate, while \Cref{thm:consistency-F0} implies that (c) contributes nothing. We just need to bound the contribution of (b) for when the nearest neighbor process lands in the disagreement region of $\eta$ and $\eta_0$. For this, we can almost directly apply \Cref{thm:ergodic-continuity}, except that it assumes that the process takes place in a bounded space.

    It turns out that because $\rho$ is doubling and $\nu$ is finite, there is a bounded region $\Xcal_\epsilon \subset \Xcal$ that captures the vast majority of $\XX$ and $\smash{\tilde{\XX}}$; \Cref{lem:bounded-regime} bounds the rate at which either process escapes $\Xcal_\epsilon$,
    \begin{enumerate}
        \item[(d)]  $\displaystyle \limsup_{N \to \infty} \, \frac{1}{N} \sum_{n=1}^N \ind\big\{X_n \notin \Xcal_\epsilon \textrm{ or }\tilde{X}_n \notin \Xcal_\epsilon\big\} < \epsilon \qquad \mathrm{a.s.}$
    \end{enumerate}
    Having accounted for mistakes outside of $\Xcal_\epsilon$, we can consider the amended event (b$'$) that $\eta$ and $\eta_0$ disagree on $A = \Xcal_\epsilon \cap \{\eta \ne \eta_0\}$. Since $\nu(A) < \delta_0$, we now apply \Cref{thm:ergodic-continuity}; when $c_1$ and $c_2$ are the corresponding constants given by for the space $(\Xcal_\epsilon, \rho, \nu)$, it suffices to set $\delta_0 > 0$ so that:
    \[\inf_{\delta > 0}\, \left\{\left(c_1 + c_2 \log \frac{1}{\delta}\right) \cdot \epsilon(\delta_0) + \epsilon(\delta) \right\} < \epsilon.\]
    Then, to the asymptotic mistake rate, the events in (a) contribute at most $\epsilon(\delta_0) < \epsilon$, (b$'$) contribute another $\epsilon$, (c) contribute nothing, and (d) contribute $\epsilon$. Together, they yield the target $3\epsilon$ bound.
\end{proof}

\section{Ergodic continuity of nearest neighbor processes}
\label{sec:ergodic-continuity}

The nearest neighbor rule does not forget; and so, a data point $X_n$ can be the nearest neighbor of an unbounded number of downstream instances $\XX_{> n}$. In this section, we ask a trickier question: at what rate can a set of instances in $\XX$ contain nearest neighbors of downstream points? More intuitively, how much influence can a set of instances exert through the nearest neighbor process? 

\Cref{thm:ergodic-continuity} gives one result of this form: informally, if a process $\XX$ can generate points from small regions only very rarely, then these points cannot make up a significantly larger fraction of $\smash{\tilde{\XX}}$. More generally, we consider the long-term influence of any \emph{asymptotically rate-limited subsequence} of $\XX$. Formally, to indicate the instances whose long-term influence we wish to bound, we define:

\begin{definition}[Indicator process]\label{def:indicator}
    An \emph{indicator process} $\II = (I_n)_n$ is a sequence of $\{0,1\}$-random variables. It induces a \emph{counter} $k(n)$ and the sequence of \emph{stopping times} $(\tau_k)_k$ where:
    \begin{equation*} 
        k(n) = I_1 + \dotsm + I_n \qquad \textrm{and} \qquad \tau_k = \min\big\{n : k(n) \geq k\big\}.
    \end{equation*}
    That is, $k(n)$ is the number of indications given by time $n$, while $\tau_k$ is the time of the $k$th indication.
    We say that $\II$ is \emph{asymptotically rate-limited} by $\gamma > 0$ if almost surely, $\limsup_{n\to\infty} k(n)/n < \gamma$.
\end{definition}

\begin{notation}[Indicated instances]
Let $\XX$ be a process and $\II$ be an indicator process. Let $\XX\big[\II_{<n}\big]$ denote the subset of instances in $\XX$ indicated by time $n$ (not inclusive), so that:
\begin{equation}\label{eqn:indicated-subset}
\XX\big[\II_{<n}\big] := \big\{X_m : m < n \textrm{ and }I_m = 1\big\}.
\end{equation}
\end{notation}

For uniformly dominated processes in upper doubling spaces, if this set of instances doesn't grow too fast, then its time-averaged influence is limited when filtered through the nearest neighbor process. For simplicity, in this section, we will also assume that the space is bounded.

\begin{restatable}[Long-term influence bound]{theorem}{ratelimited} \label{thm:long-term-influence}
    Let $(\Xcal, \rho, \nu)$ be a bounded, upper doubling space. There are constants $c_1, c_2 > 0$ so that the following holds. Let $\XX$ be uniformly dominated at rate $\epsilon(\delta)$ and let $\II$ be an indicator process adapted to $\XX$ asymptotically rate-limited by $\gamma > 0$. For any $\delta > 0$, the rate that the indicated instances $\smash{\XX\big[\II_{<n}\big]}$ contain a nearest neighbor $\smash{\tilde{X}_n}$ is at most: 
    \[\phantom{\qquad\mathrm{a.s.}}\limsup_{N \to \infty}\, \frac{1}{N} \sum_{n=1}^N \ind\left\{\tilde{X}_n \in \XX\big[\II_{<n}\big]\right\} < \gamma \cdot \left(c_1 + c_2 \log \frac{1}{\delta}\right) + \epsilon(\delta)\qquad \mathrm{a.s.}\]
\end{restatable}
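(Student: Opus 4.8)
\textbf{The plan} is to count the left-hand side as a sum over indicated instances of how often each is a nearest neighbor, and to stratify these ``hits'' by the dyadic scale of the distance at which they occur. Write $\sum_{n\le N}\ind\{\tilde X_n\in\XX[\II_{<n}]\}=\sum_{k\le k(N)}C_k(N)$, where $C_k(N)=\#\{n\le N:\tilde X_n=X_{\tau_k}\}$ and $D_0=\mathrm{diam}(\Xcal)$. Call a time $n$ a \emph{scale-$j$ hit on $y$} if $\tilde X_n=y$ and $\rho(X_n,y)\in[2^{-j-1}D_0,2^{-j}D_0)$; since $\rho(X_n,\tilde X_n)<D_0$ always (and $\rho(X_n,\tilde X_n)=0$ only on a $\nu$-null, hence a.s.\ avoided, event under uniform domination), every hit has a unique scale $j\ge0$. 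Fix the threshold scale $J$ with $2^{-J}D_0\asymp(\delta/c)^{1/d}$, so that every ball below that scale has $\nu$-mass at most $\delta$; upper doubling gives $J+1=O(\log(1/\delta))$. I split each hit into \emph{far} (scale $\le J$, i.e.\ $\rho(X_n,\tilde X_n)\ge2^{-J}D_0$) and \emph{close} (scale $>J$).

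\textbf{The far-hit term.} The first summand comes from a packing lemma I would isolate: in a $d$-doubling space, every fixed $y$ receives at most $N_0:=4^d$ scale-$j$ hits, for each $j$. Indeed, if $n<n'$ are scale-$j$ hits on $y$, then $X_n\in\XX_{<n'}$ forces $\rho(X_{n'},X_n)\ge\rho(X_{n'},y)\ge2^{-j-1}D_0$; hence the scale-$j$ hit points form a $2^{-j-1}D_0$-separated subset of $B(y,2^{-j}D_0)$, whose cardinality is $\le N_0$ by the doubling property. As there are only $J+1=O(\log(1/\delta))$ far scales, each indicated instance takes at most $B:=N_0(J+1)=c_1+c_2\log(1/\delta)$ far hits in total, for suitable constants $c_1,c_2$. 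Summing over the $k(N)$ instances born by time $N$ and using $\limsup_N k(N)/N<\gamma$ a.s., the far-hit rate is at most $\gamma B=\gamma(c_1+c_2\log(1/\delta))$.

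\textbf{The close-hit term (the hard part).} Here I would aim to show the close-hit rate is at most $\epsilon(\delta)$. A close hit at time $n$ forces $X_n$ into $V_k(n-1)\cap B(X_{\tau_k},2^{-J}D_0)$ for some indicated $k$, where $V_k(n-1)$ is the Voronoi cell of $X_{\tau_k}$ against $\XX_{<n}$; by upper doubling each such set has $\nu$-mass at most $\delta$. The obstacle is that these sets, one per indicated instance, accumulate mass as the memory grows, so a per-step appeal to \Cref{def:uniform-domination} is too weak. The intended resolution is an amortization that exploits that, in a doubling space, this localized region around $X_{\tau_k}$ contracts geometrically each time $X_{\tau_k}$ is hit at an ever-finer scale: the same packing lemma, now applied below scale $J$, caps by $O(N_0)$ the number of close hits $X_{\tau_k}$ can absorb while the localized region sits at a given dyadic mass level. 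A close hit at mass level $[2^{-i-1}\delta,2^{-i}\delta)$ therefore lands in an $\Fcal_{n-1}$-predictable set of mass $\le2^{-i}\delta$, and at each time step at most one hit occurs, so level by level one can charge the close hits to a predictable sequence of small-mass sets and invoke \Cref{lem:uniform-to-ergodic}; the excess over the first $O(\log(1/\delta))$ levels is folded back into the far-hit count, leaving an aggregate close-hit rate of $\epsilon(\delta)$. Combining the two bounds yields the claimed estimate.

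\textbf{Main obstacle.} The crux is exactly this close-hit accounting. Because the nearest neighbor rule never forgets, indicated instances pile up and the instantaneous ``bad region'' need not have small $\nu$-mass, so the argument must be genuinely dynamic rather than a one-shot use of uniform absolute continuity; the doubling geometry is what lets each close hit be charged against the shrinkage of a Voronoi-cell-localized region of small mass. The delicate points are (i) arranging the stratification so the relevant sets are $\Fcal_n$-predictable, hence admissible in \Cref{lem:uniform-to-ergodic}, and (ii) controlling the bookkeeping over dyadic scales so that it leaks neither more than $\epsilon(\delta)$ nor more than $\gamma\log(1/\delta)$; by contrast the side conditions are routine (bounded diameter is assumed here, and $\nu$-null coincidences of instances occur with probability zero under uniform domination).
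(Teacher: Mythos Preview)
Your far-hit analysis is correct and in fact cleaner than the paper's: the observation that scale-$j$ hits on a fixed $y$ form a $2^{-j-1}D_0$-packing of $B(y,2^{-j}D_0)$, hence number at most $4^d$, gives the $\gamma\cdot O(\log(1/\delta))$ term directly without any cover-tree machinery. The paper obtains the same term via a cover-tree decomposition (\Cref{lem:nn-decomposition}) and the packing bound on $(2B,r/2)$-separated events.

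The gap is in the close-hit term. You correctly identify the obstacle: the instantaneous close-hit region is a union over \emph{all} $k(n)$ indicated instances of balls each of mass $\le\delta$, so its total mass can be of order $k(n)\delta\to\infty$, and \Cref{lem:uniform-to-ergodic} needs a single $\Fcal_{n-1}$-predictable set $A_n$ with $\nu(A_n)<\delta$. Your proposed amortization is \emph{per-instance}: it tracks how hits on a fixed $X_{\tau_k}$ force finer scales around that point. But this cannot control the union across $k$; nothing prevents the indicated instances from being spread out so that their small balls are disjoint and their masses simply add. Concretely, if one tries a uniform tail radius and shrinks it to keep the union under $\delta$, the required radius is $\asymp(\delta/k(n))^{1/d}$, which pushes $\Theta(\log k(N))$ extra scales into the ``far'' count per instance, and the far-hit rate blows up like $\gamma\log N$.

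The paper's resolution is an amortization \emph{across} indicated instances, not within one: it builds a cover tree on $(X_{\tau_k})_k$, assigns each instance a tail radius depending on its insertion rank $L_k$ \emph{and} on how many generations of indicated children it has acquired (\Cref{eqn:adaptive-tail-rank}), and proves inductively on the tree that the union of tail balls always has mass $<\delta$ (\Cref{lem:delta-tail}). The point is that if many indicated instances pile up near $X_{\tau_k}$ they are inserted at high rank (so their own tails are tiny), while if they are far apart the doubling bound on generation sizes ($<2^d$ per generation) keeps the count of non-tail balls at $O(k(N))$. This spatial bookkeeping among the indicated instances is the missing ingredient; your per-instance packing argument does not supply it.
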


The ergodic continuity of $\smash{\tilde{\XX}}$ follows when we take $I_n$ to be $\ind\{X_n \in A\}$ and optimize the bound.

\subsection{A metric bound for nearest neighbor events}
To prove \Cref{thm:long-term-influence}, we need to balance two opposing dynamics. On one hand, more and more indicated instances fill the space as time goes on. On the other, the Voronoi cells of these instances---regions in which they are nearest neighbors---tend to shrink as they are hit. Mathematically, this can be seen by decomposing the event that an indicated instance is a nearest neighbor like so:
\begin{equation} \label{eqn:voronoi-decomposition}
\left\{\tilde{X}_n \in \XX\big[\II_{<n}\big]\right\} \ = \bigcup_{x \in \XX[\II_{<n}]} \big\{X_n \in \textrm{Voronoi cell of $x$ w.r.t.\@ $\XX_{<n}$}\big\}.
\end{equation}
The natural proof strategy following this would be to bound the probability that the left-hand side event occurs by arguing that the indicated Voronoi cells have small $\nu$-mass, so that these cells would be rarely hit if $\XX$ is uniformly dominated. However, this decomposition does not seem to be very fruitful as it is difficult to directly control how the mass of Voronoi cells evolve over time.

Instead, our proof strategy makes use of both notions of size available to us in metric measure spaces. Besides the \emph{measure} of a set, recall the \emph{packing number} of a set, a notion of metric entropy:

\begin{definition}[Packing and packing number]
    Let $r > 0$. A set $Z \subset \Xcal$ is an \emph{$r$-packing} if all of its points are bounded away from each other by a distance $r$,
    \[\inf_{z,z' \in Z}\, \rho(z,z') \geq r.\]
    The \emph{$r$-packing number} $\Pcal_r(U)$ of $U$ is the maximum possible size of an $r$-packing $Z$ contained in $U$.
\end{definition}

The packing number bounds the number of times that the nearest neighbor distance $\rho(X_n, \smash{\tilde{X}}_n)$ is large. In particular, for any $r > 0$, the nearest neighbor distance can exceed $r$ at most $\Pcal_r(\Xcal)$ times. This is because such a set of instances must then form an $r$-packing. As a slight generalization:

\begin{definition}[$r$-separated event]
    Let $\XX$ be a process and $r > 0$. The \emph{$r$-separated event} at time $n$ is the event $E_n^r$ that $X_n$ is $r$-separated from all past instances $\XX_{< n}$,
    \[E_n^r := \big\{\rho(X_n, \tilde{X}_n) \geq r\big\}.\]
    Given a subset $U$, the \emph{$(U,r)$-separated events} are the events $E_n^{U,r} := E_n^r \cap \big\{X_n \in U\big\}$.
\end{definition}

\begin{restatable}[Packing bound]{lemma}{hitpacking} \label{lem:hit-packing}
    Let $(\Xcal, \rho)$ be a metric space, $U \subset \Xcal$ be a subset, and $r > 0$. For any process $\XX$, the number of $(U,r)$-separated events is bounded by the $r$-packing number of $U$,
    \[\sum_{n = 1}^\infty \ind\big\{E^{U,r}_n \textrm{ occurs}\big\} \leq \Pcal_r(U).\]
\end{restatable}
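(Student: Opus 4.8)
The plan is to show that the set of instances $X_n$ that witness a $(U,r)$-separated event forms an $r$-packing inside $U$, so that its cardinality cannot exceed $\Pcal_r(U)$ by definition of the packing number. Let $S = \{n : E_n^{U,r} \text{ occurs}\}$ and let $Z = \{X_n : n \in S\}$. Every $X_n$ with $n \in S$ lies in $U$ by the definition of $E_n^{U,r}$, so $Z \subset U$; it remains to check that $Z$ is an $r$-packing, i.e.\@ that any two distinct points of $Z$ are at distance at least $r$.

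First I would observe that the sum in the statement counts $|S|$ (assuming the $X_n$ for $n \in S$ are distinct, which the next step forces). Take two indices $m < n$ both in $S$. Since $n \in S$, we have $\rho(X_n, \tilde{X}_n) \geq r$, where $\tilde{X}_n \in \argmin_{x \in \XX_{<n}} \rho(X_n, x)$ is a nearest neighbor of $X_n$ among $X_1, \dots, X_{n-1}$. Because $m < n$, the point $X_m$ belongs to $\XX_{<n}$, hence $\rho(X_n, X_m) \geq \rho(X_n, \tilde{X}_n) \geq r$ by minimality of the nearest neighbor distance. In particular $X_m \neq X_n$, so distinct indices in $S$ give distinct points in $Z$, and any two of them are $\geq r$ apart. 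Therefore $Z$ is an $r$-packing contained in $U$, and $|S| = |Z| \leq \Pcal_r(U)$, which is exactly the claimed bound
\[\sum_{n=1}^\infty \ind\big\{E_n^{U,r} \text{ occurs}\big\} = |S| \leq \Pcal_r(U).\]

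There is essentially no obstacle here — the argument is a direct unwinding of the definitions, and the only mild subtlety is handling the degenerate case $n \in S$ when $\XX_{<n} = \emptyset$ (i.e.\@ $n = 1$, where $\tilde{X}_1$ is undefined): one should adopt the convention $\rho(X_1, \emptyset) = \infty$ so that $E_1^r$ always occurs, which is harmless since a single extra point is still part of (or extends to) an $r$-packing as long as $\Pcal_r(U) \geq 1$, and if $U = \emptyset$ the sum is vacuously zero. The same convention is already implicit in the definition of $\rho(x,Z)$ in the notation paragraph, so no new assumption is needed.
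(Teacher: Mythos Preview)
Your proposal is correct and follows essentially the same approach as the paper: define the set of instances witnessing $(U,r)$-separated events, and show it is an $r$-packing of $U$ because for any pair $m<n$ of such indices, $\rho(X_n,X_m)\geq \rho(X_n,\tilde X_n)\geq r$. The paper's version also writes out the symmetric case $m>n$ explicitly, but that is redundant; your treatment of the $n=1$ edge case is a nice addition.
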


Thus, we now have two ways of bounding how often an instance can appear in the nearest neighbor process: the number of times it can be a distant nearest neighbor is bounded by the packing number, while the rate it can be a close nearest neighbor can be limited by the measure of small balls. The basic proof idea for \Cref{thm:long-term-influence} will be to slowly trade off the measure bound for the metric bound via an alternative decomposition of the event $\smash{\tilde{X}_n \in \XX\big[\II_{<n}\big]}$ based on the cover tree data structure.

\subsection{The cover-tree decomposition}
\paragraph{Definitions} In this section, let $\Xcal$ be a bounded metric space, which we may rescale to have unit diameter. Let us recall the \emph{cover tree} data structure, introduced by \citep{beygelzimer2006cover}, which is an efficient multi-scale ball cover for a given dataset: each data point is covered by a ball of every radius $2^{-\ell}$ for all $\ell \in \NN_0 \equiv \NN \cup \{0\}$. First, we'll introduce the \emph{dyadic cone} as a multi-scale cover of a single data point. Then, we inductively construct the cover tree as a union of dyadic cones.

\begin{definition}[Dyadic cone]
    Let $(\Xcal, \rho)$ have unit diameter and let $x \in \Xcal$. A \emph{dyadic cone} centered at $x$ of rank $L \in \NN_0$ is the discrete collection of balls:
    \[\mathrm{cone}(x;L) = \{B(x, 2^{-\ell}): \ell \geq L\textrm{ and }\ell \in \NN_0\}.\]
    When $L' \geq L$, we also refer to $\mathrm{cone}(x;L')$ within $\mathrm{cone}(x;L)$ as its rank-$L'$ \emph{tail}.
\end{definition}

One possible way of constructing a multi-scale covering of a dataset is to take the union of all dyadic cones of rank zero centered all data points. However, this cover is not particularly efficient, with many redundant large-scale balls. Instead, the cover tree is defined as a union of dyadic cones with adaptive ranks. Each rank is chosen to be minimal while still covering each point at all scales:

\begin{definition}[Sequentially-constructed cover trees]
    Let $(\Xcal, \rho)$ have unit diameter and $\AA = (a_k)_{k}$ be a dataset in $\Xcal$ without duplicates. The \emph{cover trees} $(\Ccal_k)_k$ with \emph{insertion ranks} $(L_k)_k$ are defined:
    \begin{alignat*}{3} 
        &\Ccal_1 = \mathrm{cone}(a_1;L_1), && L_1 = 0,
        \\&\Ccal_k = \Ccal_{k-1} \cup \mathrm{cone}(a_k; L_k), \qquad && L_k = \min \, \big\{ \ell \in \NN: \textrm{no ball of radius $2^{-\ell}$ in $\Ccal_{k-1}$ contains $x$}\big\}.
    \end{alignat*}
    We say that $a_k$ was \emph{inserted} into the cover tree at the $L_k$th rank.
\end{definition}

Points in $\AA_{\leq k}$ are covered by the cover tree $\Ccal_k$ at all scales. All other points are also covered by balls in the doubled cover tree, where the radii are comparable to their distances to $\AA_{\leq k}$. In particular, we define a \emph{cover-tree neighbor} map $\mathfrak{c}_k : \Xcal \setminus \AA_{\leq k} \to \Ccal_k$ as any one satisfying:
\begin{equation} \label{eqn:double-cover}
    \mathfrak{c}_k(x) = B(a,r) \qquad \implies \qquad x \in B(a, 2r) \quad\textrm{ and }\quad r/2 \leq \rho(x, \AA_{\leq k}) < r.
\end{equation}
The following lemma shows that such a map always exist.

\begin{lemma}\label{lem:cover-tree-neighbor}
    The cover tree $\Ccal_k$ for $\AA_{\leq k}$ has a cover-tree neighbor map $\mathfrak{c}_k$.
\end{lemma}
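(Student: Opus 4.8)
The plan is to isolate a single structural fact about cover trees and then read off $\mathfrak{c}_k$ from it with one triangle-inequality computation. The fact is: \textbf{for every $j \le k$ and every scale $\ell \in \NN_0$, the point $a_j$ lies in some ball of radius $2^{-\ell}$ belonging to $\Ccal_k$.} I would prove this by induction on the construction of $\Ccal_k$. Passing from $\Ccal_{k-1}$ to $\Ccal_k = \Ccal_{k-1} \cup \mathrm{cone}(a_k; L_k)$, every previously inserted $a_j$ ($j<k$) is still covered at every scale because $\Ccal_{k-1} \subseteq \Ccal_k$ and by the inductive hypothesis. For the new point $a_k$: when $\ell \ge L_k$ the ball $B(a_k, 2^{-\ell}) \in \mathrm{cone}(a_k; L_k) \subseteq \Ccal_k$ trivially contains $a_k$; when $1 \le \ell < L_k$ the minimality in the definition of $L_k$ guarantees that $\Ccal_{k-1}$ (hence $\Ccal_k$) already contains a radius-$2^{-\ell}$ ball covering $a_k$; and the coarsest scale $\ell = 0$ is handled by the very first cone, which contributes $B(a_1, 2^0) = B(a_1, 1)$, a ball that covers all of $\AA_{\le k}$ once the (bounded) space has been rescaled to have diameter strictly below $1$. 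The base case $k=1$ is immediate since $\mathrm{cone}(a_1;0)$ already contains $B(a_1, 2^{-\ell})$ for all $\ell \ge 0$.

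Granting the structural fact, I would build $\mathfrak{c}_k$ pointwise. Fix $x \in \Xcal \setminus \AA_{\le k}$. Since $\AA_{\le k}$ is finite and omits $x$, the distance $d := \rho(x, \AA_{\le k})$ is strictly positive and is attained at some $a_j \in \AA_{\le k}$. Because $d \le \mathrm{diam}(\Xcal) < 1$, there is a unique $\ell \in \NN_0$ with $2^{-\ell-1} \le d < 2^{-\ell}$; set $r = 2^{-\ell}$. Applying the structural fact to $a_j$ at scale $\ell$ yields a ball $B(a,r) \in \Ccal_k$ with $\rho(a_j, a) < r$, and I define $\mathfrak{c}_k(x) := B(a,r)$ (making an arbitrary selection when several such balls, or several nearest points $a_j$, exist). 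Then $r/2 = 2^{-\ell-1} \le d = \rho(x,\AA_{\le k}) < 2^{-\ell} = r$, giving the second condition of \eqref{eqn:double-cover}, and $\rho(x,a) \le \rho(x,a_j) + \rho(a_j,a) < d + r < 2r$, so $x \in B(a,2r)$, giving the first. Thus $\mathfrak{c}_k$ is a cover-tree neighbor map.

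The only genuine content is the inductive structural claim; the rest is bookkeeping. I expect the sole (minor) obstacle to be stating the coarsest-scale normalization cleanly: one must ensure that the rank-$0$ ball $B(a_1,1)$ truly covers every inserted point, which is why I would rescale the bounded space to diameter strictly less than $1$ rather than exactly $1$ — otherwise the scale-$\ell=0$ instance of the structural claim (and hence the definition of $\mathfrak{c}_k$ at points whose nearest-neighbor distance equals the diameter) could fail on a space whose diameter is attained.
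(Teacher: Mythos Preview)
Your proposal is correct and follows essentially the same route as the paper's proof: pick a nearest neighbor $a_j$ of $x$ in $\AA_{\le k}$, locate the dyadic scale $\ell$ with $2^{-\ell-1}\le \rho(x,a_j) < 2^{-\ell}$, invoke the ``covered at all scales'' property to obtain a radius-$2^{-\ell}$ ball in $\Ccal_k$ containing $a_j$, and finish with the triangle inequality. The paper simply asserts the structural fact (stated just before the lemma) without proof; you supply the easy induction, and you are also more careful than the paper about the coarsest-scale edge case---the paper writes $\ell\in\NN$ and does not address what happens when $\rho(x,a_j)\ge 1/2$, whereas your rescaling to diameter strictly less than $1$ cleanly handles $\ell=0$.
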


\begin{proof}
    Let $x \in \Xcal \setminus \AA_{\leq k}$ and let $a \in \AA_{\leq k}$ be a nearest neighbor, so that $\rho(x,a) = \rho(x,\AA_{\leq k})$. There exists $\ell \in \NN$ such that $2^{-(\ell + 1)} \leq \rho(x, a) < 2^{-\ell}$ since $x$ is not in $\AA_{\leq k}$. As $a$ is covered at all scales, there is a ball $B \in \Ccal_k$ of radius $2^{-\ell}$ that contains $a$. By triangle inequality, $x$ is contained in $2B$.
\end{proof}

Later, we shall also make use of the rooted tree structure on $\AA$ induced by the cover trees:

\begin{definition}[Tree structure] \label{def:generation}
    For the above sequence of cover trees and insertion ranks, we let $a_1$ be the \emph{root} of $\AA$. For all $k > 1$, there is a ball $B(a_j, 2^{-L_k + 1}) \in \Ccal_{k-1}$ containing $a_k$. We assign such an $a_j$ to be the \emph{parent} of $a_k$, and we say that $a_k$ is its \emph{child}. A set of instances inserted at the same rank to the same parent is called a \emph{generation} of children. The number of generations of children $a_k$ has at time $n$ defines the upper triangular array $(G_{k,n})_{k\leq n}$:
    \[\qquad G_{k,n} = \big| \big\{L_{k'} : a_k \textrm{ is the parent of } a_{k'} \textrm{ where } k' \leq n\big\}\big|.\]
\end{definition}

\paragraph{Application to nearest neighbor analysis} Let $(\Ccal_k)_k$ be a sequence of cover trees for the sequence of indicated instances $\AA = (X_{\tau_k})_k$. Using cover-tree neighbors, we obtain a decomposition of the event that an indicated instance is a nearest neighbor (\Cref{lem:nn-decomposition}). Whereas the earlier Voronoi decomposition proposed in \Cref{eqn:voronoi-decomposition} corresponds to the partition of $\Xcal$ induced by the nearest neighbor map, this one is induced by the cover-tree neighbor map given by \Cref{eqn:double-cover}.

\begin{restatable}[Cover tree decomposition]{lemma}{nndecomp} \label{lem:nn-decomposition}
    Let $(\Xcal, \rho)$ have unit diameter, let $\XX$ be a process in $\Xcal$, and let $\II$ be an indicator process. For any $n$, let $\smash{\Ccal}$ be a cover tree for $\smash{\XX\big[\II_{<n}\big]}$ with a cover-tree neighbor map $\smash{\mathfrak{c}}$. Assume that $X_n \notin \XX[\II_{<n}]$ is not equal to one of the indicated instances. Then:
    \[\left\{\tilde{X}_n \in \XX\big[\II_{<n}\big]\right\} \ \subset \bigcup_{B(a,r) \in \Ccal} \big\{\textrm{$X_n$ is $r/2$-separated from $\XX_{<n}$ and }\mathfrak{c}(X_n) = B(a,r)\big\}.\]
    In particular, the event within the union indexed by $B = B(a,r) \in \Ccal$ is contained in $E_n^{2B, r/2}$.
\end{restatable}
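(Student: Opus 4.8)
The plan is to fix a time $n$ with $X_n \notin \XX[\II_{<n}]$ and show that whenever $\tilde{X}_n$ is one of the indicated instances, the cover-tree neighbor $\mathfrak{c}(X_n) = B(a,r)$ automatically witnesses both an $r/2$-separation event and membership in the ball $2B$. Since $X_n$ is not among the indicated instances, the quantity $\rho(X_n, \XX[\II_{<n}])$ is well-defined, and the cover-tree neighbor map $\mathfrak{c}$ is defined at $X_n$ by \Cref{lem:cover-tree-neighbor}. Write $\mathfrak{c}(X_n) = B(a,r)$; by the defining property \eqref{eqn:double-cover} of a cover-tree neighbor, we have $X_n \in B(a,2r)$ and $r/2 \leq \rho(X_n, \XX[\II_{<n}]) < r$. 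This immediately gives $X_n \in 2B$, which handles the containment in $E_n^{2B, r/2}$ once we establish the separation.

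The key step is the separation bound. Suppose $\tilde{X}_n \in \XX[\II_{<n}]$, i.e. the nearest neighbor of $X_n$ among all of $\XX_{<n}$ happens to be an indicated instance. Then $\rho(X_n, \tilde{X}_n) = \rho(X_n, \XX_{<n})$. But since $\tilde{X}_n$ lies in $\XX[\II_{<n}] \subset \XX_{<n}$, we also have $\rho(X_n, \XX_{<n}) \geq \rho(X_n, \XX[\II_{<n}]) \geq r/2$ — wait, this inequality goes the wrong way; the infimum over a larger set $\XX_{<n}$ is $\leq$ the infimum over the subset. The correct observation is: because $\tilde{X}_n \in \XX[\II_{<n}]$, the global nearest-neighbor distance $\rho(X_n,\XX_{<n})$ equals $\rho(X_n, \tilde X_n)$, and since $\tilde X_n$ is an indicated instance this is $\geq \rho(X_n, \XX[\II_{<n}]) $ is again the wrong direction. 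The clean argument: $\rho(X_n, \XX_{<n}) = \rho(X_n,\tilde X_n)$ and $\tilde X_n$ being indicated means $\rho(X_n,\tilde X_n) \ge \inf_{x\in \XX[\II_{<n}]}\rho(X_n,x) = \rho(X_n, \XX[\II_{<n}])$ fails as well since $\tilde X_n$ is just one point of that set, so $\rho(X_n,\tilde X_n) \ge \rho(X_n, \XX[\II_{<n}])$ does hold (a single point's distance is at least the infimum). Combined with the lower bound $\rho(X_n, \XX[\II_{<n}]) \geq r/2$ from \eqref{eqn:double-cover}, we conclude $\rho(X_n, \XX_{<n}) = \rho(X_n, \tilde X_n) \geq r/2$, which is exactly the $r/2$-separated event $E_n^{r/2}$. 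So on the event $\{\tilde{X}_n \in \XX[\II_{<n}]\}$, writing $B(a,r) = \mathfrak{c}(X_n)$, both ``$X_n$ is $r/2$-separated from $\XX_{<n}$'' and ``$\mathfrak{c}(X_n) = B(a,r)$'' hold, placing the event inside the term indexed by $B(a,r)$ in the union.

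For the final claim, note that on the event indexed by $B = B(a,r)$ we have $\mathfrak{c}(X_n) = B(a,r)$, hence $X_n \in B(a,2r) = 2B$, together with the $r/2$-separation; by definition this is precisely $E_n^{2B, r/2} = E_n^{r/2} \cap \{X_n \in 2B\}$. The main obstacle here is genuinely just bookkeeping — making sure the direction of the infimum inequalities is right and that the cover-tree neighbor is well-defined, which requires the hypothesis $X_n \notin \XX[\II_{<n}]$; there is no analytic difficulty. One subtlety worth stating carefully is that the cover tree $\Ccal$ and map $\mathfrak{c}$ are built for the dataset $\XX[\II_{<n}]$ (the indicated instances), so all distances to $\XX[\II_{<n}]$ in \eqref{eqn:double-cover} refer to that sub-dataset, while the nearest-neighbor process $\tilde{X}_n$ and the separation event $E_n^{r/2}$ refer to the full history $\XX_{<n}$; the bridge between them is exactly the assumption that $\tilde X_n$ lands in the indicated set, which forces the two nearest-neighbor distances to coincide.
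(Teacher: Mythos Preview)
Your argument is correct and follows exactly the paper's approach: use the cover-tree neighbor property \eqref{eqn:double-cover} to get $X_n \in 2B$ and $\rho(X_n,\XX[\II_{<n}]) \ge r/2$, then observe that when $\tilde X_n$ is indicated the full nearest-neighbor distance $\rho(X_n,\XX_{<n}) = \rho(X_n,\tilde X_n)$ coincides with $\rho(X_n,\XX[\II_{<n}])$, yielding the $r/2$-separation event. The only comment is stylistic: the written back-and-forth about inequality directions should be excised in a final version, leaving just the clean line $\rho(X_n,\XX_{<n}) = \rho(X_n,\tilde X_n) \ge \rho(X_n,\XX[\II_{<n}]) \ge r/2$.
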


\begin{proof}[Proof sketch of \Cref{thm:long-term-influence}]
    Fix $N$. Let $\Ccal$ be a cover tree for $\XX[\II_{<N}]$. A purely metric bound is:
    \begin{align*} 
        \sum_{n=1}^N \ind\left\{\tilde{X}_n \in \XX\big[\II_{<n}\big]\right\} &\overset{(i)}{\leq} \sum_{n=1}^N \sum_{B_r \in \Ccal} \ind\big\{E_n^{2B_r, r/2} \textrm{ occurs}\big\} 
        \\&\overset{(ii)}{\leq} \sum_{B_r \in \Ccal} \sum_{n=1}^\infty \ind\big\{E_n^{2B_r, r/2} \textrm{ occurs}\big\} \overset{(iii)}{\leq} 2^{2d} \cdot |\Ccal|.
    \end{align*}
    Here, (i) follows from \Cref{lem:nn-decomposition}, where $B_r$ denotes a ball of radius $r$ in $\Ccal$, (ii) is a larger summation, and (iii) applies the metric bound \Cref{lem:hit-packing} and the fact that $\Xcal$ is a $d$-doubling space. 
    
    Of course, this bound is vacuous since the cover tree has infinitely many balls. Instead of paying for every ball in $\Ccal$ via the metric bound, we can decompose $\Ccal$ into two pieces: a relatively slow-growing collection of large balls for which we apply the metric bound, and a collection of dyadic tails with small combined measure. In particular, we adaptively adjust the tail for each indicated instance so that the combined tail events always have $\nu$-mass at most $\delta$. As $\XX$ is uniformly dominated, the asymptotic rate at which the tail events occur is no more than $\epsilon(\delta)$.
    
    The basic idea for choosing the tails is this. First, when an indicated instance is inserted into the cover tree, we immediately shrink the tail of this new instance by removing $O(\frac{1}{d}\log \frac{c}{\delta})$ of its largest balls, where $(c,d)$ are the parameters associated to the upper doubling condition. Secondly, if this new instance is the first of a new generation of children, we also remove the largest ball from the tail of its parent's dyadic cone. We account for the events corresponding to these removed balls via the packing bound, contributing $2^{2d}$ to the finite bound per removed ball. The indicated instances appear at a rate asymptotically bounded by $\gamma$, so the cumulative packing bound grows at a rate of $\smash{O(\gamma \cdot 2^{2d} \log \frac{c}{\delta})}$. Treating $c$ and $d$ as constants, we obtain an overall bound of:
    \[\limsup_{N \to \infty}\, \frac{1}{N} \sum_{n=1}^N \ind\left\{\tilde{X}_n \in \XX\big[\II_{<n}\big]\right\} = O\left(\gamma \log \frac{1}{\delta} + \epsilon(\delta)\right).\]
    See \Cref{fig:cover-tree} for a picture. Mathematically, we select the rank $T_{k,n}$ of the tail of the $k$th indicated instance at time $n$ as follows. Let $L_k$ be the insertion rank of the $k$th indicated instance and let $k(n)$ be the number of indicated instances that have arrived by time $n$. Set the rank to:
    \begin{equation} \label{eqn:adaptive-tail-rank}
        T_{k,n} = L_k + 1 + \left\lceil  \frac{1}{d} \lg \frac{c}{\delta} \right\rceil + G_{k,k(n)}.
    \end{equation}
    \Cref{lem:delta-tail} shows that this choice ensures that the combined tail event has $\nu$-mass at most $\delta$.
\end{proof}

\section{Related work}

\paragraph{Nearest neighbor rule} The nearest neighbor rule and its generalizations form a fundamental and well-studied part of non-parametric estimation, where the vast majority of work considers the i.i.d.\@ setting \citep{fix1951discriminatory,cover1967nearest, stone1977consistent,devroye1994strong, cerou2006nearest,chaudhuri2014rates}. Also see the surveys \cite{devroye2013probabilistic,dasgupta2020nearest}. Far less is known in the non-i.i.d.\@ setting. For positive results, \cite{holst2001nearest,ryabko2006pattern} relax to conditional independence where there may be a distinct distribution over instances per class; \cite{kulkarni1995rates} study arbitrary processes. \cite{blanchard2022universal} shows a negative result: in some non-worst-case online settings where other learners succeed, the nearest neighbor learner may fail. In the batch learning setting, \cite{dasgupta2012consistency} and \cite{ben2014domain}, considered consistency when training and test data distributions differ.

\paragraph{Non-worst-case online learning} Our results echo a motif of \emph{smoothed analysis} \citep{spielman2004smoothed}:  worst-case analyses of algorithms yield safeguards by refraining from making difficult-to-test assumptions, but they can be overly-pessimistic and fail to explain the observed behavior of algorithms \citep{roughgarden2021beyond}. Recently, two independent lines of work have emerged, introducing new classes of constrained stochastic processes for non-worst-case online learning: \emph{smoothed online learning} \citep{rakhlin2011online,haghtalab2020smoothed,haghtalab2022smoothed,block2022smoothed}, and \emph{optimistic universal learning} \citep{hanneke2021online,blanchard2022bounded,blanchard2022universal}. We connect the classes in these work through the strictly increasing chain of stochastic processes:
\[\textrm{i.i.d. } \subset \textrm{ smoothed } \subset \textrm{ uniformly dominated } \subset \textrm{ ergodically dominated } \subset \textrm{ $\Ccal_1$ }\subset \textrm{ arbitrary}.\]
To summarize the behavior of  the nearest neighbor rule in these settings, (a) we characterize when the nearest neighbor rule is consistent for arbitrary sequences (\Cref{prop:nonconvergence}); (b) \cite{blanchard2022bounded} shows that the nearest neighbor process is not universally consistent for the $\Ccal_1$-processes defined by \cite{hanneke2021learning}; (c) we show, via a new and simple proof, that the original \cite{cover1967nearest} $\Fcal_0$-consistency result for i.i.d.\@ processes actually holds for a much larger class of ergodically dominated processes (\Cref{thm:consistency-F0}); (d) we show universal consistency in upper doubling spaces for uniformly dominated processes (\Cref{thm:universal-consistency}); and (e) we prove rates of convergence for smoothed processes in doubling length spaces (\Cref{thm:nn-conv-rate}).

\begin{ack}
  We thank the National Science Foundation for
  support under grant IIS-2211386.
\end{ack}

\bibliography{references}

\newpage
\appendix

\section[Proofs for Section \ref*{sec:non-convergence}]{Proofs for \Cref{sec:non-convergence}}
\label{app:non-convergence}

\nonconvergence*
\begin{proof} 
Suppose that there is a positive separation between classes, so that $\mathrm{margin}_\eta(x) > c > 0$ for all $x \in \mathcal{X}$. By Lemma~\ref{lem:mutually-labeling-balls}, the collection of open balls $B(x, c/3)$ for all $x \in \mathcal{X}$ forms a cover of $\mathcal{X}$ by mutually labeling sets. Because $\mathcal{X}$ is totally bounded, there is a finite subcover of $\mathcal{X}$ by these mutually labeling balls. By \Cref{lem:mutual-labeling-property}, each of these sets admits at most one mistake, so the nearest neighbor rule makes at most finitely many mistakes, achieving online consistency.

On the other hand, suppose the class are not positively separated. We claim that there is a sequence of pairs $(X_{2n + 1}, X_{2n + 2})$ where a nearest neighbor of $X_{2n+2}$ has a different label:
\[\eta(X_{2n+2}) \ne \eta(\tilde{X}_{2n+2}).\]
On this sequence $\XX$, the nearest neighbor rule makes a mistake at every even time step, so that:
\[\liminf_{N \to \infty} \frac{1}{N} \sum_{n=1}^N \ind\{\eta(X_n) \ne \eta(\tilde{X}_n)\} \geq \frac{1}{2}.\]
It fails to be online consistent on $(\XX,\eta)$.

To prove the claim for a fixed $n$. First, define the minimal non-zero interpoint distance in $\XX_{\leq 2n}$:
\[r = \min\,\big\{\rho(z,z') : z \ne z' \textrm{ and }z,z' \in \XX_{\leq 2n}\big\}.\]
Let $x, x' \in \Xcal$ have different labels and $\rho(x,x') < r/3$. There are two cases:
\begin{itemize}
    \item The points $x$ and $x'$ have distinct nearest neighbors $\tilde{x}$ and $\tilde{x}'$ in $\XX_{\leq 2n}$. Then, the  triangle inequality shows that:
    \[\rho(\tilde{x},x) + \rho(x,x') + \rho(x',\tilde{x}') \geq \rho(\tilde{x}, \tilde{x}') \geq r.\]
    Since $\rho(x,x') < r/3$, we obtain that:
    \[\frac{\rho(x,\tilde{x}) + \rho(x',\tilde{x}')}{2} > r/3. \]
    This implies one of $\rho(x,\tilde{x})$ or $\rho(x',\tilde{x}')$ is strictly larger than $r/3$. Without loss of generality, let $\rho(x',\tilde{x}') > r/3$. Then, set $X_{2n + 1} = x$ and $X_{2n+2} = x'$. In this case, $X_{2n+2}$ has the unique nearest neighbor $X_{2n+1}$, which has a different label.
    \item The points $x$ and $x'$ have the same nearest neighbor $z$ in $Z$. Then, by exchanging $x$ and $x'$ if necessary, we have that $\eta(z) \ne \eta(x')$. Set $X_{2n+1} = x$ and $X_{2n+2} = x'$. The nearest neighbor of $X_{2n+2}$ is either $X_{2n+1}$ or $z$. Both have different label than $X_{2n+2}$.   
\end{itemize}
\end{proof}

\newpage
\section[Proofs for Section \ref*{sec:non-worst-case}]{Proofs for \Cref{sec:non-worst-case}}

\uniformergodic*

\begin{proof}
    The following sequence $(Z_n)_n$ is a martingale-difference sequence:
    \[Z_n = \ind\big\{X_n \in A_n\} - \Pr\big(X_n \in A_n \,\big|\, \Fcal_{n-1}\big).\]
    We obtain:
    \begin{align*} 
        \limsup_{N \to \infty} \frac{1}{N} \sum_{n=1}^N \ind\big\{X_n \in A_n\big\} &\leq  \underbrace{\lim_{N \to \infty} \frac{1}{N} \sum_{n=1}^N Z_n}_{\textrm{(a)}}\, +\, \underbrace{\limsup_{N \to \infty} \frac{1}{N} \sum_{n=1}^N \Pr\big(X_n \in A_n \,\big|\, \Fcal_{n-1}\big)}_{\textrm{(b)}},
    \end{align*}
    where (a) converges to zero by the martingale law of large numbers, and (b) is bounded by $\epsilon(\delta)$ because the uniform domination implies that:
    \[\qquad\limsup_{n \to \infty}\, \Pr\big(X_n \in A_n \,\big|\, \Fcal_{n-1}\big) < \epsilon(\delta) \qquad\mathrm{a.s.}\]
\end{proof}

For completeness, we include a version of the strong law of large numbers (SLLN).

\begin{theorem}[Strong law of large numbers for martingales, {\citep[Exercise 4.4.11]{durrett2019probability}}] \label{thm:slln}
Let $(M_n)_{n \geq 0}$ be a martingale and let $Z_n = M_n - M_{n-1}$ for $n > 0$. If $\E[Z_n^2] < K < \infty$, then: 
\[\phantom{\mathrm{a.s.}\quad}M_n / n \to 0 \quad \mathrm{a.s.}\]
\end{theorem}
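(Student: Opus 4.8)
The plan is to reduce the claim to the almost-sure convergence of an auxiliary $L^2$-bounded martingale, and then extract the Cesàro statement via Kronecker's lemma. Write $S_n = \sum_{k=1}^n Z_k = M_n - M_0$, and introduce the weighted partial sums $W_n = \sum_{k=1}^n Z_k/k$. Since $M$ is a martingale, each increment satisfies $\E[Z_k \mid \Fcal_{k-1}] = 0$ for the natural filtration $(\Fcal_k)_k$, so $(W_n)_n$ is again a martingale, with increments $Z_k/k$.

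First I would check that $(W_n)_n$ is bounded in $L^2$. Martingale increments are orthogonal in $L^2$: for $j < k$, conditioning on $\Fcal_{k-1}$ gives $\E[Z_j Z_k] = \E[Z_j\,\E[Z_k\mid\Fcal_{k-1}]] = 0$, with $Z_j Z_k$ integrable by Cauchy--Schwarz since both factors lie in $L^2$. Hence $\E[W_n^2] = \sum_{k=1}^n \E[Z_k^2]/k^2 \le K \sum_{k\ge 1} k^{-2} = K\pi^2/6 < \infty$, uniformly in $n$. By the $L^2$ martingale convergence theorem, $W_n$ converges almost surely to a finite limit.

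Next I would invoke Kronecker's lemma: if $b_n \uparrow \infty$ and $\sum_k a_k/b_k$ converges, then $b_n^{-1}\sum_{k=1}^n a_k \to 0$. Applied with $a_k = Z_k$ and $b_k = k$, the almost-sure convergence of $W_n = \sum_k Z_k/k$ yields $S_n/n \to 0$ a.s. Finally, since $M_0$ is integrable it is finite almost surely, so $M_0/n \to 0$, and therefore $M_n/n = M_0/n + S_n/n \to 0$ almost surely, as claimed.

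There is no genuine obstacle here --- this is a textbook argument --- but if one prefers to stay strictly within the toolkit of the cited section (Doob's inequalities) rather than appeal to Kronecker's lemma, the alternative is a dyadic-blocking argument: Doob's $L^2$ maximal inequality gives $\E[\max_{n\le m} S_n^2] \le 4\,\E[S_m^2] \le 4Km$, hence $\Pr(\max_{2^{k-1} < n \le 2^k} |S_n| > \varepsilon 2^{k-1}) \le 16K/(\varepsilon^2 2^k)$, which is summable in $k$; Borel--Cantelli then forces $\limsup_n |S_n|/n \le \varepsilon$ for every $\varepsilon > 0$, and one concludes as before. The only point requiring a moment's care in either route is that dividing the increments by $k$ makes their contributions square-summable, which is precisely what the uniform bound $\E[Z_k^2] < K$ buys.
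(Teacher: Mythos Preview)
Your argument is correct and entirely standard: the reduction to an $L^2$-bounded martingale via the weighting $Z_k/k$, followed by martingale convergence and Kronecker's lemma, is exactly the intended solution to the cited exercise in Durrett, and your alternative dyadic-blocking route via Doob's maximal inequality is equally valid.

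There is nothing to compare against, however: the paper does not supply its own proof of this statement. It merely records the theorem ``for completeness'' with a citation to \citep[Exercise 4.4.11]{durrett2019probability}, and then invokes it in the proof of \Cref{lem:uniform-to-ergodic} to conclude that the martingale-difference average $\frac{1}{N}\sum_{n=1}^N Z_n$ vanishes almost surely. So your write-up is not redundant with anything in the paper; it simply fills in a proof the authors chose to omit.
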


\newpage 

\begin{figure}[ht]
    \centering
    \includegraphics[width=0.4\textwidth]{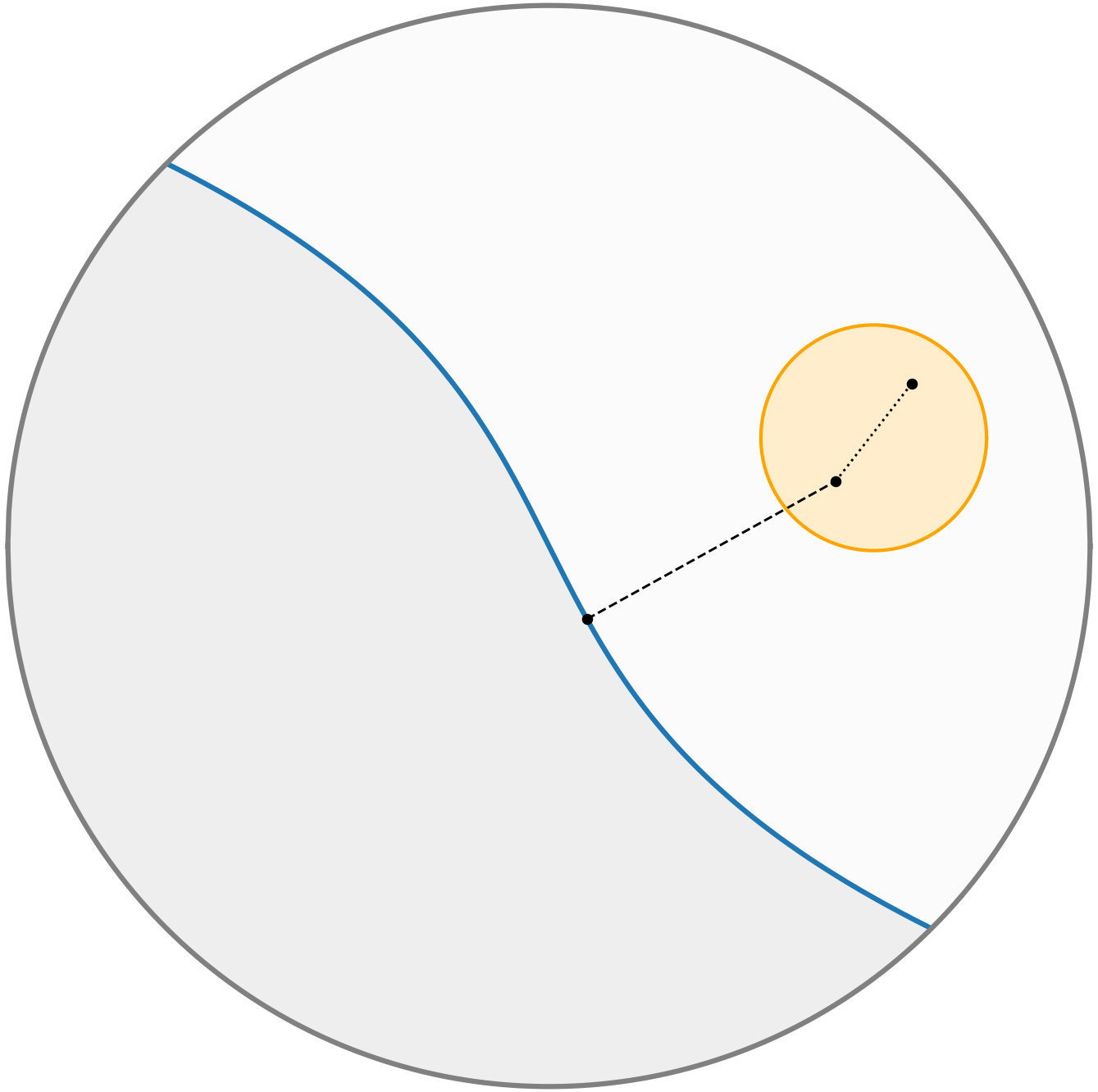} \hspace{24pt}
    \includegraphics[width=0.4\textwidth]{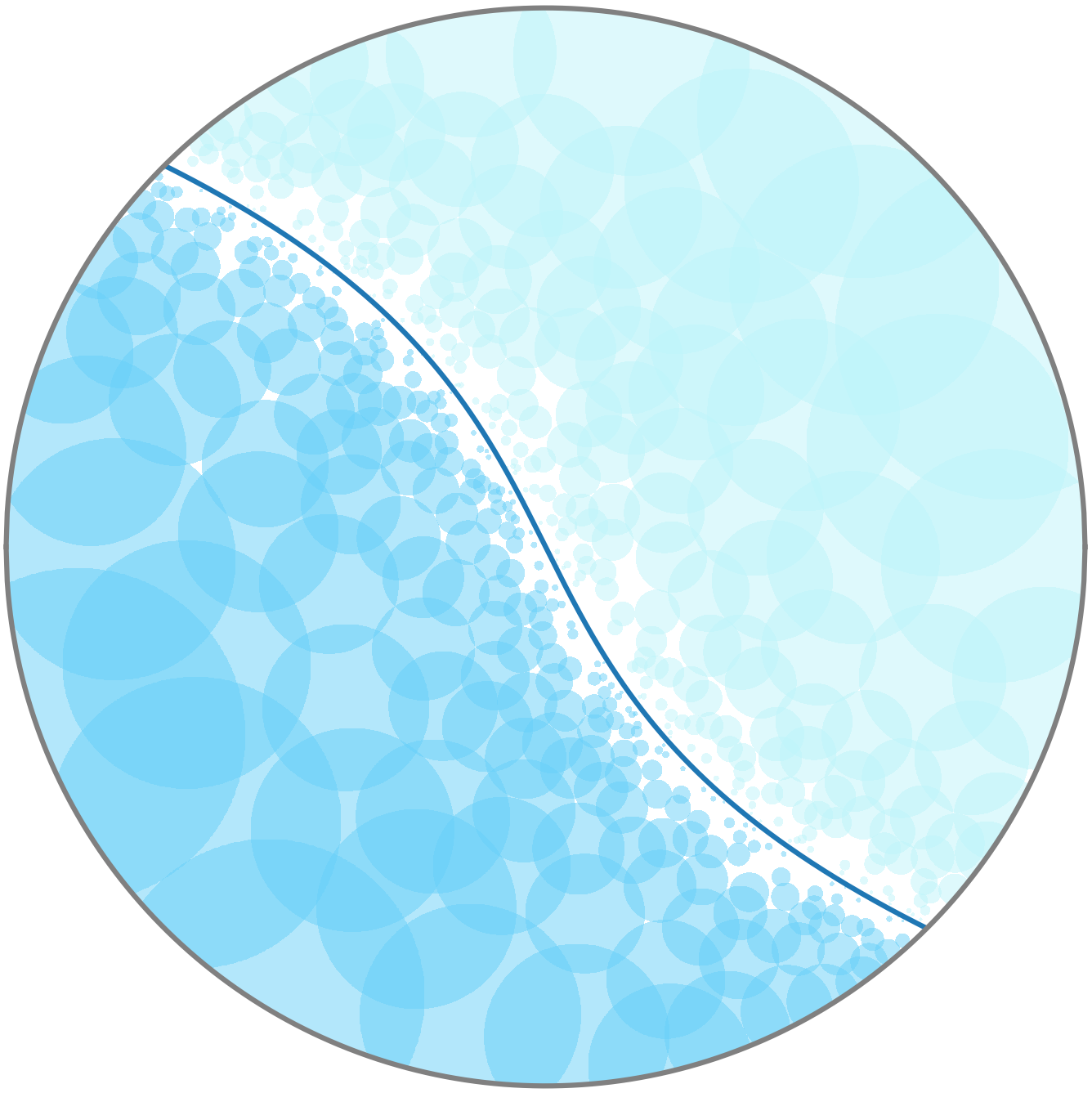}
    \caption{(\textit{Left}) A visualization of a mutually-labeling set (orange ball). There are two classes (dark and light gray) separated by a decision boundary (blue line). The length of the dashed line measures the margin of a point in the set. The length of the dotted line is bounded above by the diameter of the set. (\textit{Right}) An example of a collection of mutually-labeling sets (dark and light blue balls) covering all but a region of small mass. Since the nearest neighbor rule makes at most one mistake per ball, eventually all mistakes must come from the white, uncovered region.}
    \label{fig:mutually-labeling}
\end{figure}

\section[Proofs for Section \ref*{sec:consistency-boundaryless}]{Proofs for \Cref{sec:consistency-boundaryless}}

\mlp*

\begin{proof}
For the sake of contradiction, suppose that there are two times $n < m$ for which the indicated event occurs. In particular, $X_n, X_m \in U$ and a mistake was made on the latter point:
\[\eta(X_m) \ne \eta(\tilde{X}_m).\]
As both are contained in $U$, we have:
\[\rho(X_m, \tilde{X}_m) \leq \rho(X_m, X_n) \leq \mathrm{diam}(U).\]
On the other hand, the margin of $X_m$ is upper bounded:
\[\mathrm{margin}_\eta(X_m) <\rho(X_m, \tilde{X}_m).\]
Together, they contradict the mutually-labeling property of $U$.
\end{proof}

\mlb*
\begin{proof}
Let $x' \in B(x,r)$. Since $x'$ is within the margin of $x$, they must share the same label. For any point $z$ from a different class $\eta(z) \ne \eta(x)$, we obtain by the definition of the margin of $x$:
\[\mathrm{margin}_\eta(x) \leq \rho(x,z).\]
Subtract $\rho(x,x')$ from both sides and apply the triangle inequality $\rho(x,z) - \rho(x,x') \leq \rho(x',z)$:
\[\mathrm{margin}_\eta(x) - r/3 \leq \rho(x',z).\]
Finally, taking infimum over all points $z$ in other classes implies:
\[2 \cdot  \mathrm{margin}_\eta(x)/3 < \mathrm{margin}_\eta(x').\]
On the other hand, the diameter of the ball $B(x,r)$ is at most $2r < 2 \cdot  \mathrm{margin}_\eta(x)/3$. Combining this with the above proves that $B(x,r)$ is mutually labeling.
\end{proof}

\newpage
\section[Proofs for Section \ref*{sec:universal-consistency}]{Proofs for \Cref{sec:universal-consistency}}

\fzerodense*

The proof of this follows the standard approach: for any $\eta \in L^1(\Xcal, \nu)$, we can construct a countable partition of almost all of $\Xcal$ such that $\eta$ overwhelmingly assigns each partition the same label, by which we mean up to a $\delta$-fraction of that partition. The existence of such a partition requires a version of the Lebesgue differentiation theorem. The theorem in basic settings applies to doubling metric measure spaces (e.g.\@ \cite{heinonen2001lectures}), in which all balls satisfy the doubling property:
\[\nu\big(B(x, 2r)\big) \leq C \nu\big(B(x,r)\big).\]
As our setting only assumes a doubling metric space with a finite Borel measure (not necessarily a doubling measure), this property is not guaranteed to hold. However, \cite{hytonen2010framework} shows that there are still a plethora of balls that satisfy such a property, which is defined as:

\begin{definition}[Doubling ball, \cite{hytonen2010framework}]
    Let $(\Xcal, \rho, \nu)$ be a metric measure space. We say that a ball $B(x,r)$ in $\Xcal$ is \defstyle{$(\alpha,\beta)$-doubling} for some $\alpha, \beta > 1$ whenever:
    \[\nu\big(B(x,\alpha r)\big) \leq \beta \nu\big(B(x,r)\big).\]
\end{definition}

To prove that $\Fcal_0$ is dense in $L^1$, we apply a version of the Lebesgue differentiation theorem restricted to $(5,\beta)$-doubling balls in a space (\Cref{thm:lebesgue-differentiation}). This is followed by a corresponding Vitali covering theorem for such balls (\Cref{thm:vitali-covering}). With these ingredients, the proof is routine:

\begin{proof}
    Let $\eta \in L^1(\Xcal, \nu)$ and fix $\delta >0$. We show that we can construct a function $\eta' \in \Fcal_0$ with negligible boundary such that $\|\eta - \eta'\|_{L^1(\Xcal, \nu)} < \delta$.

    To do so, we make use of a version of the Lebesgue differentiation theorem for doubling spaces with finite Borel measures, \Cref{thm:lebesgue-differentiation}. It states almost every $x \in \Xcal$ has a positive sequence $r_n \downarrow 0$ such that the following Lebesgue differentiation condition is satisfied:
    \[\lim_{r_n \downarrow 0}\, \frac{1}{\nu\big(B(x,r_n)\big)}\int_{B(x,r_n)} \ind\big\{\eta(x) \ne \eta(z)\big\}\, \nu(dz) = 0,\]
    and where all balls $B(x,r_n)$ are $(5, \beta)$-doubling.  Let's denote by $\mathrm{Leb}(\eta)$ the set of points satisfying this Lebesgue condition. In this case, for each $x \in \mathrm{Leb}(\eta)$, we can choose a ball $B(x,r_x)$ that is $(5,\beta)$-doubling and whose labels overwhelming agree with $\eta(x)$:
    \[\frac{1}{\nu\big(B(x,r_x)\big)}\int_{B(x,r_x)} \ind\big\{\eta(x) \ne \eta(z)\big\}\, \nu(dz) < \delta.\]
    By \Cref{thm:vitali-covering}, which is a version of the Vitali covering lemma for collections of $(5,\beta)$-doubling balls, there is a countable disjoint subfamily of balls $\Ccal \subset \{B(x,r_x) : x\in \mathrm{Leb}(\eta)\}$ that covers almost all of $\Xcal$. Enumerate the balls in $\Ccal$ by $B(x_i, r_i)$, and define the function:
     \[\eta' = \sum_{i = 1}^\infty \eta(x_i) \cdot \ind_{B(x_i, r_i)}.\]
     By the dominated convergence theorem, we have that $\|\eta - \eta'\|_{L^1(\Xcal, \nu)} < \delta$. Furthermore, points in the balls $B(x_i, r_i)$ are not boundary points; $\eta'$ has negligible boundary.
\end{proof}

\begin{theorem}[Lebesgue differentiation theorem, Corollary 3.6, \cite{hytonen2010framework}] \label{thm:lebesgue-differentiation}
Suppose that $(\Xcal, \rho, \nu)$ is a metric measure space where $\rho$ is a doubling metric and $\nu$ is a finite Borel measure. There exists $\beta > 1$ such that the following holds. Let $\eta \in L^1(\Xcal, \nu)$ be bounded. For $\nu$-almost every $x \in \Xcal$, there exists a positive sequence $r_n \downarrow 0$ such that:
\[\eta(x) = \lim_{r_n \downarrow 0} \, \frac{1}{\nu\big(B(x,r_n)\big)} \int_{B(x,r_n)} \eta d \nu,\]
and the balls $B(x,r_n)$ are $(5, \beta)$-doubling.
\end{theorem}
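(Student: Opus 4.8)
The plan is to recover this from Hytönen's key observation that, although $\nu$ need not be globally doubling, a doubling metric space contains an abundance of $(5,\beta)$-doubling balls at every location and every small scale, for a single $\beta$ depending only on the doubling dimension $d$. So the first step I would isolate is a \emph{plenty-of-doubling-balls lemma}: there is $\beta > 5^d$ such that for $\nu$-almost every $x \in \Xcal$, the set of radii $r$ for which $B(x,r)$ is $(5,\beta)$-doubling accumulates at $0$. Granting this, the theorem reduces to a restricted version of the classical differentiation argument, carried out only over these special balls.

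To prove the lemma, I would fix $\beta > 5^d$, a reference ball $B_0 = B(x_0,R)$, and a scale $\rho > 0$, and let $N_\rho$ be the set of $x \in B_0$ such that \emph{no} ball $B(x,r)$ with $r < \rho$ is $(5,\beta)$-doubling. For $x \in N_\rho$ and small $r$, iterating the failure of the doubling inequality along the geometric sequence $r, 5r, \dots, 5^M r < \rho$ gives $\nu(B(x,r)) < \beta^{-M}\nu(\Xcal)$ with $M \asymp \log_5(\rho/r)$, i.e. $\nu(B(x,r)) \lesssim (\rho/r)^{-\log_5\beta}\,\nu(\Xcal)$. On the other hand, the doubling metric lets me cover $B_0$ by $O((R/r)^d)$ balls of radius $r$, which after recentering those meeting $N_\rho$ at points of $N_\rho$ yields $\nu(N_\rho) \lesssim (R/r)^d (\rho/r)^{-\log_5\beta}\,\nu(\Xcal)$; since $\log_5\beta > d$ this tends to $0$ as $r \downarrow 0$, so $\nu(N_\rho) = 0$. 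Letting $\rho \downarrow 0$ and exhausting $\Xcal$ by balls $B_0$ along countable sequences gives the claim $\nu$-almost everywhere.

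Next I would introduce the restricted maximal operator $M^\ast\eta(x) = \sup\{\tfrac{1}{\nu(B)}\int_B |\eta|\,d\nu : B = B(x,r) \text{ is }(5,\beta)\text{-doubling with } r \text{ small}\}$ and prove the weak $(1,1)$ bound $\nu(\{M^\ast\eta > \lambda\}) \le \tfrac{\beta}{\lambda}\|\eta\|_{L^1}$. This is the standard covering argument: cover $\{M^\ast\eta > \lambda\}$ by $(5,\beta)$-doubling balls (centered at points of the set) on which $\tfrac{1}{\nu(B)}\int_B |\eta| > \lambda$, extract via the basic $5r$-covering lemma (which needs no measure, only boundedly many radii) a countable disjoint subfamily $\{B_i\}$ with $\{5B_i\}$ still covering the set, and then use the $(5,\beta)$-doubling property itself to bound $\nu(5B_i) \le \beta\,\nu(B_i) \le \tfrac{\beta}{\lambda}\int_{B_i}|\eta|\,d\nu$; disjointness finishes it. This is the one place where the special doubling balls are indispensable.

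Finally, the differentiation statement follows by the usual density argument. Bounded Lipschitz functions are dense in $L^1(\Xcal,\nu)$ since a finite Borel measure on a metric space is inner regular with respect to closed sets, which are approximated by the Lipschitz functions $x \mapsto \max(0,1 - k\rho(x,F))$; for Lipschitz $g$ the averaged oscillation $\tfrac{1}{\nu(B(x,r))}\int_{B(x,r)}|g - g(x)|\,d\nu$ tends to $0$ at every $x$ with no restriction on the balls. Writing $\eta = g + h$ with $\|h\|_{L^1}$ small, bounding the restricted oscillation of $h$ by $M^\ast h + |h|$, and invoking the weak $(1,1)$ estimate with Chebyshev, I get that the restricted oscillation of $\eta$ exceeds any $\lambda$ only on a set of $\nu$-measure $\lesssim \|h\|_{L^1}/\lambda$, hence vanishes $\nu$-a.e. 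Combined with the plenty-of-doubling-balls lemma, at $\nu$-a.e. $x$ one can pick $r_n \downarrow 0$ with each $B(x,r_n)$ being $(5,\beta)$-doubling and $\tfrac{1}{\nu(B(x,r_n))}\int_{B(x,r_n)}\eta\,d\nu \to \eta(x)$. The main obstacle is the plenty-of-doubling-balls lemma: it is exactly the point at which the absence of a global doubling measure must be circumvented, and it hinges on the quantitative trade-off between the geometric decay $\beta^{-M}$ forced by repeated doubling failures and the polynomial covering bound $(R/r)^d$ supplied by the doubling metric.
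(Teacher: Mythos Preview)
The paper does not prove this theorem at all: it is quoted verbatim as Corollary~3.6 of \cite{hytonen2010framework} and used as a black box in the proof of \Cref{lem:F0-dense}. So there is nothing in the paper to compare your argument against.

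That said, your sketch is essentially the proof given in the cited reference. The three ingredients you isolate---the ``plenty of $(5,\beta)$-doubling balls'' lemma (with the quantitative trade-off between the $\beta^{-M}$ decay from iterated doubling failure and the $(R/r)^d$ covering count from the doubling metric), the weak-$(1,1)$ bound for the restricted maximal operator via the $5r$-covering lemma combined with the $(5,\beta)$-doubling of the selected balls, and the standard density argument through Lipschitz functions---are exactly Hyt\"onen's Lemma~3.3, Proposition~3.5, and Corollary~3.6. Your identification of the first lemma as the crux is correct: it is precisely where the doubling \emph{metric} substitutes for a doubling \emph{measure}, and your computation that $\beta > 5^d$ makes the exponent $\log_5\beta - d$ positive is the right threshold.
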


\begin{theorem}[Vitali covering theorem, Theorem 1.6, \cite{heinonen2001lectures}] \label{thm:vitali-covering}
    Let $(\Xcal, \rho, \nu)$ be a doubling metric space with a finite measure. Let $\beta > 1$. Let $A \subset \Xcal$ be any set and let $\Fcal$ be any family of balls such that for each $x \in A$, there exists some sequence $r_n \downarrow 0$ where:
    \[\qquad B(x,r_n) \in \Fcal \qquad \textrm{and}\qquad B(x,r_n) \textrm{ is $(5,\beta)$-doubling}.\]
    Then, there is a countable disjoint subfamily of $\Fcal$ that covers $\nu$-almost all of $A$.
\end{theorem}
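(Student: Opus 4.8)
The plan is to run the classical Vitali covering argument, invoking the $(5,\beta)$-doubling hypothesis exactly where a global doubling condition on $\nu$ would ordinarily be used. The engine is the \emph{basic $5r$-covering lemma}: from any collection of balls with uniformly bounded radii in a (separable) metric space, a greedy, scale-by-scale selection of maximal disjoint subcollections produces a countable disjoint subfamily $\Gcal$ such that every ball in the original collection lies inside the $5$-fold dilate of some ball of $\Gcal$. Since every $x \in A$ admits balls $B(x,r_n)\in\Fcal$ with $r_n\downarrow 0$, I would first discard from $\Fcal$ all balls of radius exceeding $1$ without harming the fine-covering property, so the lemma is applicable throughout.

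\emph{One round.} Given any $S$ finely covered by $(5,\beta)$-doubling balls of the truncated $\Fcal$ (if $\nu^*(S)=0$ there is nothing to do), I would use outer regularity of the finite Borel measure to choose an open $U\supseteq S$ with $\nu(U)\le\nu^*(S)+\epsilon$ and keep only the balls of $\Fcal$ contained in $U$ --- still a fine cover of $S$. The $5r$-lemma yields disjoint $B_1,B_2,\dots\in\Fcal$ with $S\subseteq\bigcup_i 5B_i$, and $(5,\beta)$-doubling of each $B_i$ gives $\nu^*(S)\le\sum_i\nu(5B_i)\le\beta\sum_i\nu(B_i)=\beta\,\nu\big(\bigcup_i B_i\big)$, hence $\nu\big(\bigcup_i B_i\big)\ge\nu^*(S)/\beta$. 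As $\sum_i\nu(B_i)<\infty$, finitely many balls already satisfy $\nu\big(\bigcup_{i\le k}B_i\big)\ge\nu^*(S)/\beta-\epsilon$; since $S\subseteq U$ and $\bigcup_{i\le k}B_i\subseteq U$, this forces $\nu^*\big(S\setminus\bigcup_{i\le k}B_i\big)\le\nu(U)-\nu\big(\bigcup_{i\le k}B_i\big)\le(1-1/\beta)\nu^*(S)+2\epsilon$. Choosing $\epsilon$ small enough, $\nu^*\big(S\setminus\bigcup_{i\le k}B_i\big)\le\theta\,\nu^*(S)$ for the fixed constant $\theta=1-\tfrac{1}{2\beta}<1$.

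\emph{Iteration.} Apply the round to $S=A$ to get finitely many disjoint balls with union $G_1$; then apply it again to the leftover set $A\setminus\overline{G_1}$, now drawing only on balls of $\Fcal$ disjoint from $\overline{G_1}$; and repeat. The uncovered outer measure is multiplied by $\theta$ at each round, so it tends to $0$, and the union over all rounds of the selected balls is a countable, pairwise-disjoint subfamily of $\Fcal$ covering $A$ up to a $\nu$-null set.

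The hard part will be the boundary bookkeeping in the iteration: a point of $A$ sitting on the sphere of an already-chosen ball has no small ball of $\Fcal$ avoiding it, so after round $m$ the remaining family finely covers the leftover set only off a union of finitely many spheres. I would neutralize this by selecting each ball radius $s_i$ so that $\nu\big(\{z:\rho(y_i,z)=s_i\}\big)=0$, which is possible because at any center $y$ all but countably many radii $r$ satisfy this --- indeed $\int_0^R\nu\big(\{z:\rho(y,z)=r\}\big)\,dr=0$ by Tonelli, since for each $z$ the set of radii equal to $\rho(y,z)$ is Lebesgue-null. With null spheres, $\overline{B_i}\setminus B_i$ is $\nu$-null, the accumulated boundary over all rounds is $\nu$-null, and the contraction estimate survives. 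Reconciling this choice with the stated generality of $\Fcal$ is precisely the subtlety treated in \cite{heinonen2001lectures}; in the sole use we make of the theorem, \Cref{lem:F0-dense}, the radii are ours to choose, so the issue does not even arise there.
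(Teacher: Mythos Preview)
The paper does not supply its own proof of this statement: it is quoted as Theorem~1.6 of \cite{heinonen2001lectures}, with only the remark that the version stated here is a mild generalization ``implied by the proof given there,'' the sole difference being that the $(5,\beta)$-doubling hypothesis is assumed directly rather than deduced from a stronger ambient condition. There is therefore nothing in the paper to compare your argument against.

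That said, your sketch is precisely the classical Vitali argument as it appears in \cite{heinonen2001lectures}: the $5r$-covering lemma feeds a one-round contraction estimate $\nu^*(S\setminus G)\le\theta\,\nu^*(S)$ with $\theta<1$, and iteration drives the uncovered mass to zero. The $(5,\beta)$-doubling of the individual balls is invoked at exactly the step where a global doubling measure would otherwise be needed, which is the point of the paper's remark. Your identification of the sphere-boundary bookkeeping as the delicate part is accurate, and your proposed fix (choosing radii so that spheres are $\nu$-null) is one standard way to handle it; as you note, in the only application here (\Cref{lem:F0-dense}) the radii are freely chosen, so the subtlety is moot for the paper's purposes.
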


This is a slightly more general version of the Vitali covering theorem than given by \cite{heinonen2001lectures}, although it is implied by the proof given there. In that proof, the sequence of $(5,\beta)$-doubling balls was deduced from a stronger condition; here, we directly assume its existence.

\begin{lemma} \label{lem:bounded-regime}
    Let $(\Xcal, \rho, \nu)$ be a metric measure space where $\nu$ is a finite Borel measure. Suppose that $\XX$ is a process that is uniformly dominated by $\nu$ at rate $\epsilon(\delta)$. For any $\epsilon > 0$, there exists a region $\Xcal' \subset \Xcal$ with bounded diameter such that:
    \[\qquad\limsup_{N \to \infty} \, \frac{1}{N} \sum_{n=1}^N \ind\big\{X_n \notin \Xcal' \textrm{ or }\tilde{X}_n \notin \Xcal'\big\} < \epsilon \qquad \mathrm{a.s.}\]
\end{lemma}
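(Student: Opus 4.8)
The plan is to find a single bounded ball $\Xcal' = B(x_0, R)$ that captures the vast majority of the mass of $\nu$, and then argue separately that $\XX$ rarely escapes it (directly from uniform domination) and that $\smash{\tilde{\XX}}$ rarely escapes it (because a nearest neighbor landing far out forces $X_n$ itself to be far out, or forces a large nearest-neighbor distance, which can only happen a bounded number of times by the packing bound). First I would fix $\epsilon > 0$, let $\delta = \delta(\epsilon/3)$ be the value from the uniform domination condition, and use the finiteness of $\nu$ together with continuity of measure to choose a center $x_0$ and radius $R$ large enough that $\nu(\Xcal \setminus B(x_0, R)) < \delta$. Set $\Xcal' = B(x_0, R)$. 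By \Cref{def:uniform-domination} and \Cref{lem:uniform-to-ergodic} (applied with the constant set $A_n \equiv \Xcal \setminus \Xcal'$), the asymptotic rate at which $X_n \notin \Xcal'$ is at most $\epsilon/3$ almost surely.

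Next I would handle the event $\smash{\tilde{X}_n \notin \Xcal'}$ when $X_n \in \Xcal'$. The key geometric observation is that if $\tilde{X}_n \notin B(x_0, R)$ but $X_n \in B(x_0, R')$ for a slightly smaller radius $R' < R$, then $\rho(X_n, \tilde{X}_n) \geq R - R'$, so this is an $(\Xcal', R-R')$-separated event in the language of the excerpt (strictly, $X_n$ lies in $\Xcal'$ and is $(R-R')$-separated from $\XX_{<n}$). Wait — \Cref{lem:hit-packing} bounds the number of such events by $\Pcal_{R-R'}(\Xcal')$, which is finite because $\Xcal' = B(x_0,R)$ is a bounded set in a doubling space (the excerpt's doubling hypothesis gives $\Pcal_r$ of a bounded ball finite for every $r>0$). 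Hence the event "$X_n \in B(x_0,R')$ and $\tilde X_n \notin B(x_0,R)$" occurs only finitely often, contributing $0$ to the asymptotic rate. It then remains to absorb the thin annulus $B(x_0,R) \setminus B(x_0,R')$: choosing $R'$ close enough to $R$ that $\nu(\Xcal \setminus B(x_0,R')) < \delta$ as well (again by continuity of measure), the rate at which $X_n$ lands outside $B(x_0,R')$ is at most $\epsilon/3$ by the same argument as before. Union-bounding the three contributions — $X_n \notin \Xcal'$, the finitely-occurring separated event, and $X_n$ in the annulus — gives an asymptotic rate at most $\epsilon/3 + 0 + \epsilon/3 < \epsilon$. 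Finally, rename $\Xcal'$ appropriately (or simply take $\Xcal' = B(x_0, R)$ with the two radii folded into one by choosing $\delta$ slightly smaller at the outset) so that the stated single-region conclusion holds.

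The main obstacle is the step bounding the escape rate of $\smash{\tilde{\XX}}$: the nearest-neighbor process is not itself uniformly (or even ergodically) dominated a priori — that is exactly the content of the harder \Cref{thm:ergodic-continuity} — so one cannot simply apply a domination hypothesis to $\tilde X_n$. The resolution, and the crux of the argument, is that escaping a large ball is a \emph{metric} event (it forces a large nearest-neighbor distance once $X_n$ is safely inside a slightly smaller ball), so it is governed by the crude packing bound of \Cref{lem:hit-packing} rather than by any measure-theoretic control, and this is precisely where the doubling hypothesis on $\rho$ (ensuring finite packing numbers of bounded sets) does the work. Everything else — the choice of $x_0, R, R'$ — is routine continuity-of-measure bookkeeping.
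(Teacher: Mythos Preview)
Your argument is essentially sound but leans on a hypothesis the lemma does not provide: to conclude that the packing number $\Pcal_{R-R'}\big(B(x_0,R)\big)$ is finite you invoke the doubling property of $\rho$, yet the lemma is stated only for a metric measure space with finite Borel measure. In a space that is not totally bounded (e.g.\ the unit ball of $\ell^2$) a bounded ball has infinite $r$-packing number for every $r>0$, so \Cref{lem:hit-packing} yields a vacuous bound and the ``finitely many separated events'' step collapses. Since the lemma's sole application (inside the proof of \Cref{thm:universal-consistency}) is to an upper doubling space, your argument would suffice there, but it does not establish the lemma as stated.

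The paper's proof avoids packing numbers altogether. It takes $B$ with $\nu(B^c)<\delta$, sets $\Xcal'=3B$, and observes (via the mutually-labeling property of $B$ for the indicator $\ind_{3B}$) that once a \emph{single} instance $X_\tau$ lands in $B$, every subsequent $X_n\in B$ has $X_\tau$ closer than any point outside $3B$, forcing $\tilde X_n\in 3B$. Hence for $n>\tau$ the event $\{\tilde X_n\notin \Xcal'\}$ is contained in $\{X_n\notin B\}$, whose rate is at most $\epsilon(\delta)$; Borel--Cantelli gives $\tau<\infty$ almost surely. Where you use a finite packing to bound the number of long-range nearest-neighbor events, the paper needs only that \emph{one} prior hit in the inner ball traps all future nearest neighbors in the outer ball---a pure triangle-inequality fact valid in any metric space. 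Your route is easily repaired in this spirit: drop the thin annulus, pick $R'$ with $\nu\big(B(x_0,R')^c\big)<\delta$, and set $\Xcal'=B(x_0,3R')$; after the first $X_m\in B(x_0,R')$, no later $X_n\in B(x_0,R')$ can have $\tilde X_n\notin\Xcal'$.
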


\begin{proof}
    Fix $0 < \epsilon < 1$. Let $\delta > 0$ be sufficiently small so that $\epsilon(\delta) < \epsilon$. Let $x \in \Xcal$. For any positive sequence $r_n \uparrow \infty$, the balls converge $B(x,r_n) \uparrow \Xcal$. As $\nu$ is finite, by the continuity of measure, there is sufficiently large $r_n$ such that $\nu(B) > 1 - \delta$, where $B = B(x,r_n)$. Let $\Xcal' = 3B$.

    Suppose that $\tau = \min\{n : X_n \in B\}$ is the first time that $X_n$ hits the ball $B$. By \Cref{lem:mutually-labeling-balls}, the ball $B$ has the mutually-labeling property for the function $\ind_{3B}$. Thus, for $n > \tau$, whenever the nearest neighbor falls outside of $3B$, this implies that $X_n$ falls outside of $B$. It follows that the chain holds:
    \[\forall n > \tau, \qquad \big\{\tilde{X}_n \notin 3B\big\} \subset \big\{X_n \notin B\big\}.\qquad\]
    And as we also have $\{X_n \notin 3B\} \subset \{X_n \notin B\}$, the bound follows:
    \[\sum_{n=1}^N \ind\big\{X_n \notin \Xcal' \textrm{ or }\tilde{X}_n \notin \Xcal'\big\} \leq (\tau \wedge N) + \sum_{n = \tau+1}^N \ind\big\{X_n \notin B\big\}.\]
    The stopping time $\tau$ is almost surely finite. The Borel-Cantelli lemma shows that $\tau$ is almost surely eventually bounded above by some $n \in \NN$, since the following sum converges:
    \[\sum_{n=1}^\infty \Pr\big(\tau \geq n\big) =  \sum_{n=1}^\infty \Pr\big(\XX_{<n} \cap B = \emptyset\big) \leq \sum_{n=1}^\infty (1 - \epsilon)^{n-1} = \frac{1}{\epsilon}.\]
    Thus, the asymptotic rate of $\XX$ or $\tilde{\XX}$ escaping $\Xcal'$ is bounded by the rate at which $\XX$ avoids $B$. By \Cref{lem:uniform-to-ergodic}, this is at most $\epsilon$.
\end{proof}

\nearestergodic*

\begin{proof}[Proof of \Cref{thm:ergodic-continuity}]
Fix any measurable set $A \subset \Xcal$ where $\nu(A) < \delta_0$. Define the indicator process $\II$ by $I_n = \ind\{X_n \in A\}$. \Cref{lem:uniform-to-ergodic} implies that $\II$ is asymptotically $\epsilon(\delta_0)$-rate-limited. Then, \Cref{thm:long-term-influence} immediately implies that there are constants $c_1, c_2 > 0$ such that:
\[\limsup_{N \to \infty}\, \frac{1}{N} \sum_{n=1}^N \ind\big\{\tilde{X}_n \in A\big\} < \epsilon(\delta_0) \cdot \left(c_1 + c_2 \log \frac{1}{\delta}\right) + \epsilon(\delta)\qquad \mathrm{a.s.}\]
Optimizing the bound implies the result.
\end{proof}

\section[Proofs for Section~\ref*{sec:ergodic-continuity}]{Proofs for \Cref{sec:ergodic-continuity}}

\ratelimited*

\begin{figure}[t]
    \includegraphics[width=0.32\textwidth]{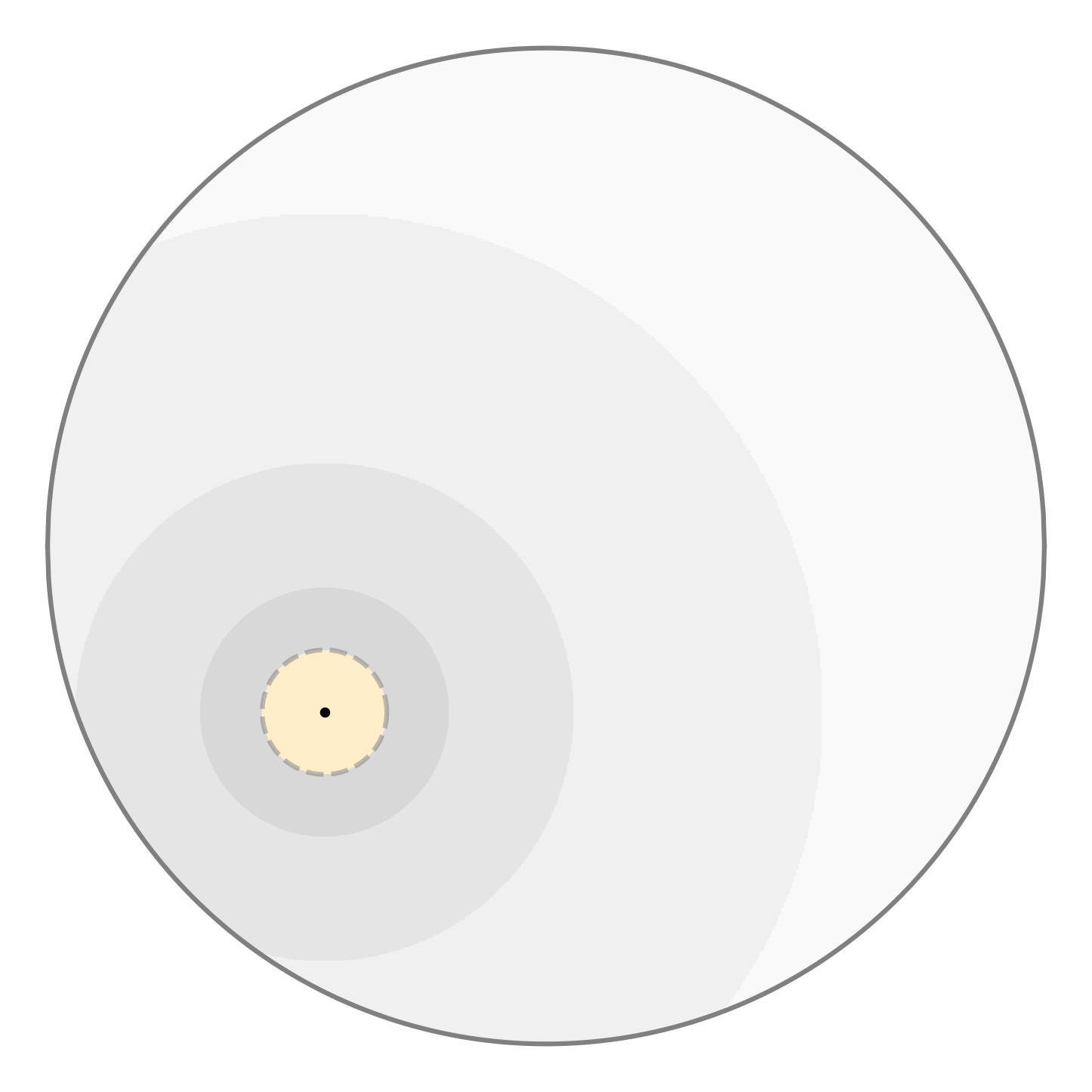}
    \includegraphics[width=0.32\textwidth]{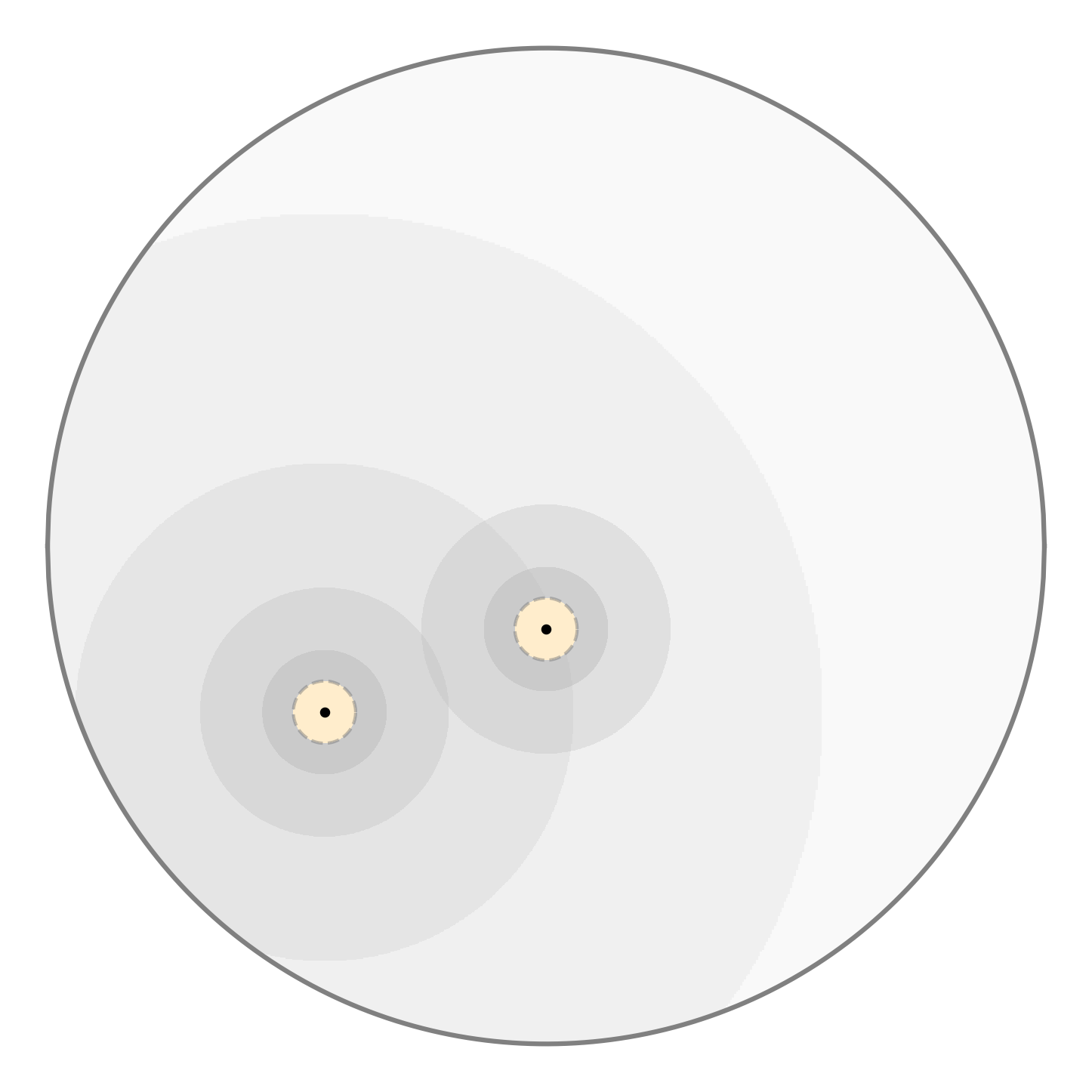}
    \includegraphics[width=0.32\textwidth]{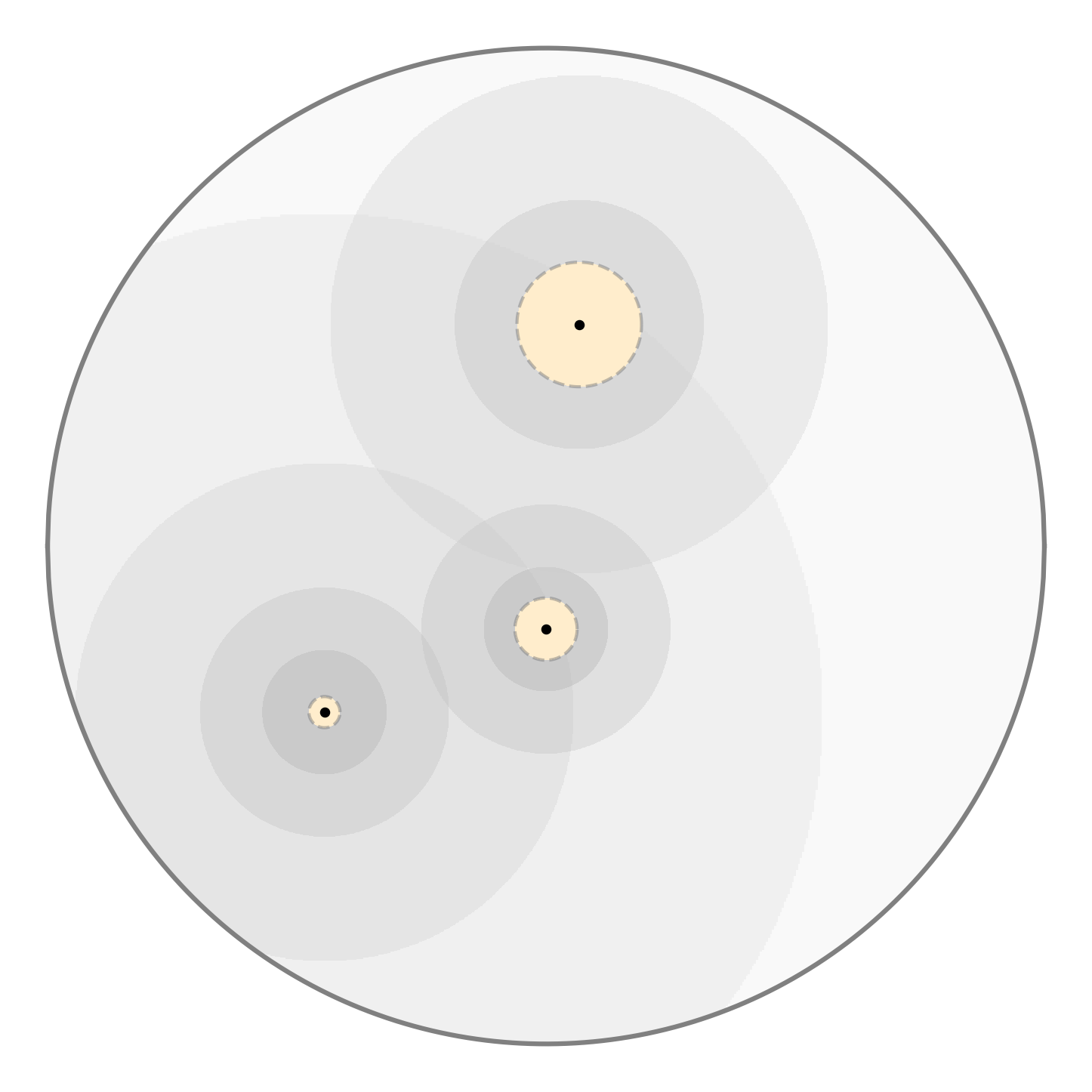}
    \caption{The left, middle, and right figures shows the cover trees for the first three indicated instance $(X_{\tau_1}, X_{\tau_2}, X_{\tau_3})$, along with their metric and measure bound trade offs. Each concentric disk corresponds to a ball in the cover tree, and the orange disk corresponds to a tail for an indicated instance. The tails are chosen so that the $\nu$-mass of the orange region remains bounded by $\delta$.}
    \label{fig:cover-tree}
\end{figure}

\begin{proof}[Proof of \Cref{thm:long-term-influence}]
    By rescaling, we assume without loss of generality that $\Xcal$ has unit diameter. Recall from \Cref{def:indicator} that given an indicator process $\II$, we defined $k(n)$ to be the counter for the number of indicated instances up through time $n$, and $\tau_k$ to be the time of the $k$th indicated instance. In particular, we have by assumption that $\displaystyle\limsup_{n\to\infty}\, k(n)/n < \gamma$ almost surely.

    Let $(\Ccal_k)_k$ be a chain of sequentially-constructed cover trees associated to the sequence of indicated instances $\AA = (X_{\tau_k})_k$. Let $L_k$ be the insertion rank of the $k$th indicated instance $X_{\tau_k}$. Recall that this means that $\Ccal_k$ is constructed by taking a union of $\Ccal_{k-1}$ with the dyadic cone centered at $X_{\tau_k}$ of rank $L_k$. For the $k$th indicated instance and for all time $n \geq \tau_k$, define the tail rank $T_{k,n}$ to:
    \begin{equation}
    T_{k,n} = L_k + 1 + \left\lceil \frac{1}{d} \lg \frac{c}{\delta}\right\rceil + G_{k,k(n)}, \tag{\ref{eqn:adaptive-tail-rank}}
    \end{equation}
    where $(c,d)$ are parameters associated to the upper doubling condition, and $G_{k,k(n)}$ is the number of generations of children it has by time $n$. Let's define the $\delta$-tail of the cover tree at time $n$ to be the union of the rank-$(T_{k,n} + 1)$ tails of $X_{\tau_k}$ and let $A_n$ be the union of the doubled balls:
    \[\Tcal_n = \bigcup_{k=1}^{k(n)} \mathrm{cone}\big(X_{\tau_k}; T_{k,n}+1\big) \qquad \textrm{and}\qquad A_n = \bigcup_{B \in \Tcal_n} 2B.\]
    By \Cref{lem:delta-tail}, the mass of the tail is $\nu(A_n) < \delta$. We now apply \Cref{lem:nn-decomposition} to bound the number of times an indicated instance is a nearest neighbor. To do so, we need that $X_n$ is almost never contained in $\XX[\II_{<n}]$. This follows from the upper doubling condition, which implies that singleton sets have zero mass. By uniform domination, $X_n$ is almost surely never equal to a previous instance. And so, \Cref{lem:nn-decomposition} implies that:
    \[\qquad \bigcup_{B_r \in \Tcal_{n-1}} E_n^{2B_r, r/2} \ \subset \ \big\{X_n \in A_{n-1} \big\} \qquad \mathrm{a.s.},\]
    and that almost surely:
    \begin{align*} 
        \sum_{n=1}^N \ind\left\{\tilde{X}_n \in \XX\big[\II_{<n}\big]\right\} 
        &\leq  \underbrace{\sum_{n=1}^N \sum_{B_r \in \Ccal_{k(n-1)} \setminus \Tcal_{n-1}} \ind\big\{E_n^{2B_r, r/2} \textrm{ occurs}\big\}}_{(a)}\, + \,\underbrace{\vphantom{\sum_{\Ccal_{k(n)}}}\sum_{n=1}^N \ind\big\{X_n \in A_{n-1}\big\}}_{\textrm{(b)}}.
    \end{align*}
    We bound (a) and (b) separately:
    \begin{enumerate}
        \item[(a)] The first term is bounded by \Cref{lem:hit-packing}:
        \[\displaystyle (a) \quad \leq \sum_{B_r \in \Ccal_{k(N-1)} \setminus \Tcal_{N-1}} \sum_{n=1}^\infty \ind\big\{E_n^{2B_r, r/2} \textrm{ occurs}\big\}\leq 2^{2d} \cdot \big|\Ccal_{k(N)} \setminus \Tcal_N\big|.\]
        The number of balls in $\Ccal_{k(N)} \setminus \Tcal_N$ can be computed by counting for each indicated instance the number of balls not in its $(T_{k,N} + 1)$-tail:
        \begin{align*} 
            \sum_{k=1}^{k(N)} (T_{k,N} + 1) - L_k &= k(N)\cdot \left(2 +  \left\lceil \frac{1}{d} \lg \frac{c}{\delta}\right\rceil\right) + \sum_{k=1}^{k(N)} G_{k,k(N)} 
            \\&\leq k(N)\cdot  \left(2 +  \left\lceil \frac{1}{d} \lg \frac{c}{\delta}\right\rceil\right) + k(N),
        \end{align*}
        where the last inequality holds because the total number of generations of children is no more than the number of nodes in the tree. As the indicator process is asymptotically rate-limited by $\gamma$, we obtain:
        \[\limsup_{N \to \infty}\, \frac{1}{N} \sum_{n=1}^N \sum_{B_r \in \Ccal_{k(n-1)} \setminus \Tcal_{n-1}} \ind\big\{E_n^{B_r, r/2} \textrm{ occurs}\big\} < \gamma \cdot 2^{2d} \cdot \left(3 + \left\lceil \frac{1}{d} \lg \frac{c}{\delta}\right\rceil\right)\quad \mathrm{a.s.}\]
        \item[(b)] Since the mass of the tail is bounded $\nu(A_n) < \delta$, the second term can be asymptotically controlled on average by \Cref{lem:uniform-to-ergodic},
        \[\limsup_{N \to \infty}\, \frac{1}{N} \sum_{n=1}^N \ind\big\{X_n \in A_{n-1}\big\} \leq \epsilon(\delta) \quad \mathrm{a.s.}\]
    \end{enumerate}
    Summing these bounds and choosing $c_1 = 2^{2d} \left(3 + \frac{1}{d} \lg c\right)$ and $c_2 = \frac{2^{2d}}{d}$ completes the proof.
\end{proof}

\begin{lemma}[Cover tree $\delta$-tail] \label{lem:delta-tail}
    Let $(\Xcal, \rho,\nu)$ be a upper doubling metric space with unit diameter. Let $(\Ccal_k)_k$ be a chain of cover trees for the sequence $\AA = (a_k)_k$ and let $(L_k)_k$ be its sequence of insertion ranks. Define the array of tail ranks $(T_{k,n})_{k\leq n}$ and tail sets $(A_n)_n$,
    \[T_{k,n} = L_k + 1 + \left\lceil \frac{1}{d} \lg \frac{c}{\delta} \right\rceil + G_{k,n} \qquad \textrm{and}\qquad A_n = \bigcup_{k=1}^n B\big(a_k, 2^{-T_{k,n}}\big),\]
    where $d$ is the doubling dimension and $c$ is the upper doubling constant in \Cref{def:roughly-doubling}. For all $n$, the mass of the tail is bounded $\nu(A_n) < \delta$.
\end{lemma}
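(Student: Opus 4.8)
The plan is to peel off a union bound, use upper doubling to turn each ball's mass into a power of two, and thereby reduce the lemma to a single combinatorial inequality about cover trees that mentions neither $\nu$ nor $\delta$. By the union bound and upper doubling,
\[\nu(A_n)\ \le\ \sum_{k=1}^n \nu\bigl(B(a_k,2^{-T_{k,n}})\bigr)\ \le\ c\sum_{k=1}^n 2^{-d\,T_{k,n}}.\]
Substituting $T_{k,n}=L_k+1+\bigl\lceil\tfrac1d\lg\tfrac c\delta\bigr\rceil+G_{k,n}$ and using $d\bigl\lceil\tfrac1d\lg\tfrac c\delta\bigr\rceil\ge\lg\tfrac c\delta$ --- so that $c\cdot 2^{-d\lceil\frac1d\lg\frac c\delta\rceil}\le\delta$ --- every summand satisfies $c\,2^{-dT_{k,n}}\le\delta\cdot 2^{-d(L_k+1+G_{k,n})}$. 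Hence it suffices to establish the dimension-calibrated, $\delta$-free inequality
\[\sum_{k=1}^n 2^{-d\,(L_k+1+G_{k,n})}\ \le\ 1 ,\]
which is the precise sense in which the balls that survive the pruning into the $\delta$-tail carry mass at most $\delta$.

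The heart of the matter is this last inequality, which I would prove by an amortized accounting over the rooted tree that the cover trees induce on $\AA$. The relevant structure is: (i) every non-root $a_k$ has a parent of strictly smaller insertion rank, so insertion ranks strictly increase along every root-to-node path; (ii) the generations of a fixed node occupy pairwise-distinct ranks, all strictly above that node's own rank, so if $a_k$ has $G_{k,n}=g$ generations by time $n$ then its $i$-th generation lies at rank $\ge L_k+i$; and (iii) the children of one node in a single generation form a $2^{-\ell}$-packing inside a ball of radius $2^{-\ell+1}$ (this is exactly what the insertion-rank and parent definitions give), so a generation has only boundedly many children, the bound depending only on the doubling dimension. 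The definition of $T_{k,n}$ is calibrated against these facts: the offset $\lceil\frac1d\lg\frac c\delta\rceil+1$ is the pruning performed when $a_k$ is inserted (its cost charged against packing numbers, exactly as in the proof of \Cref{thm:long-term-influence}), and the term $G_{k,n}$ is one further halving of $a_k$'s cone for each generation it opens. One then follows how $\sum_k 2^{-d(L_k+1+G_{k,n})}$ changes as the tree grows --- each new $a_n$ adds its own term $2^{-d(L_n+1)}$ (with $G_{n,n}=0$), and if $a_n$ opens a generation for its parent $a_j$ it simultaneously divides $a_j$'s term by $2^d$ --- and shows, using the geometric decay of $2^{-d\ell}$ across ranks together with (iii), that the total never exceeds $1$, with the root term $2^{-d(L_1+1+G_{1,n})}$ providing the final slack; the strict inequality $\nu(A_n)<\delta$ then comes from the slack in the doubling and ceiling bounds.

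The main obstacle is exactly this balancing act: as $n\to\infty$ the tree gains unboundedly many nodes, each supplying a fresh tail ball that is never removed, so a naive bound diverges; the saving feature is that a node's cone is pruned again every time a descendant opens a new generation, and one must verify the prunings keep pace with the insertions. The delicate case is when a new child opens a generation whose rank is only slightly above its parent's rank while the parent already has several generations --- a ``gap'' generation --- since then the per-insertion offset alone does not cover the new term, and closing the accounting requires carefully exploiting both the rank-gap geometry and the per-generation packing bound (iii) at once. Making this bookkeeping close uniformly in $n$ is the technical crux.
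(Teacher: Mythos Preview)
Your reduction to the purely combinatorial inequality $\sum_{k=1}^n 2^{-d(L_k+1+G_{k,n})}\le 1$ is correct and is exactly where the paper lands as well. The gap is in how you propose to prove it. Tracking the sum as the tree grows is the wrong framing: the sum is not monotone, and the increases cannot be locally amortized in the way you sketch. When a new child joins an \emph{existing} generation of its parent, no $G_{\cdot,n}$ changes at all, so the new term $2^{-d(L_n+1)}$ is a pure increase with nothing to offset it. And in the ``gap generation'' case you flag, the parent's decrease $2^{-d(L_j+1+g)}(1-2^{-d})$ can be arbitrarily small compared to the new term: take $a_1$ at rank $0$ with single-child generations already at ranks $2,\ldots,g{+}1$, then insert a child at rank $1$; the sum jumps by roughly $2^{-2d}$, independent of $g$ and of $n$. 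You correctly identify this as the crux but do not close it, and I do not see how to close the over-time accounting without introducing a per-subtree budget---which is precisely what the paper does, but statically rather than dynamically.

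The paper's argument freezes $n$ and inducts structurally on rank, in decreasing order: for every generation $S$ at rank $L$, the contribution of $S$ together with all its descendants is at most $\delta\cdot 2^{-dL}$. The inductive step is short. A node $a_k\in S$ has $G_{k,n}$ generations of children at \emph{distinct} integer ranks $>L$, each contributing at most $\delta\cdot 2^{-dL(S')}$ by hypothesis; together with $a_k$'s own term $\delta\cdot 2^{-d(L+1+G_{k,n})}$, these $G_{k,n}+1$ numbers are dominated by the geometric tail $\sum_{\ell>L}\delta\cdot 2^{-d\ell}=\delta\cdot 2^{-dL}/(2^d-1)$, since the worst case packs the generation ranks at $L{+}1,\ldots,L{+}G_{k,n}$ and the own term then slots in at $L{+}G_{k,n}{+}1$. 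Multiplying by the packing bound $|S|\le 2^d-1$ closes the induction; taking $S=\{a_1\}$, $L=0$ gives the result. The time order in which generations arrived never enters, so the gap-generation difficulty you worried about simply does not appear.
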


\begin{proof}
    Fix $n$ and let $S \subset \AA_{\leq n}$ a generation of children with rank $L$. This set of instances along with their parent forms a $2^{-L}$ packing of a $2^{-L + 1}$ ball. Therefore, $|S| \leq 2^d - 1$ since $\Xcal$ is doubling.
    
    We extend $S$ to include all its descendants: let $\Scal$ be the subset in $\AA_{\leq n}$ that contains $S$ and has the property that if $x'$ is a child of $x \in \Scal$, then $x' \in \Scal$. Let the tail centered at $\Scal$ be the following union:
    \[A_{\Scal, n} = \bigcup_{a_k \in \Scal} B\big(a_k, 2^{-T_{k,n}}\big).\]
    In particular, when $S = \{a_1\}$, then $L = 0$ and $\Scal = \{a_1,\ldots, a_n\}$. It suffices to show: \[\nu(A_{\Scal,n}) \leq \delta \cdot 2^{-dL}.\]
    We proceed by induction on the rank $L$ in decreasing order. 
    \begin{itemize}
        \item \underline{Base case}: let $L$ be the maximal rank of any instance in $\AA_{\leq n}$ and let $S$ be any generation of children with maximal rank. These instances have no further descendants, so $\Scal = S$. For each $a_k \in S$, we also have $G_{k,n} = 0$. By a union bound:
        \[\nu(A_{\Scal,n}) \leq \sum_{a_k \in \Scal} c 2^{-d T_{k,n}} \leq \delta \cdot 2^{-dL},\]
        where the first inequality uses the measure condition of upper doubling spaces, and the second inequality follows from our choice of $T_{k,n}$ and that $|S| < 2^d$.
        \item \underline{Inductive step}: suppose that the claim holds for all generations with rank $\ell > L$. Let $S$ be any generation with rank $L$. For each $a_k \in S$, let $\Gcal_{k,n}$ be the collection of its generations of children, which all have rank strictly greater than $L$. When $S' \in \Gcal_{k,n}$ is a generation, let $\Scal'$ extend $S'$ with its descendants and $L(S')$ be the rank of $S'$. Then:
        \[A_{\Scal, n} = \bigcup_{a_k \in S} \left(B(a_k, 2^{-T_{n,k}})\, \cup \bigcup_{S' \in \Gcal_{k,n}} A_{\mathcal{S}',n}\right).\]
        By the inductive hypothesis, we have:
        \begin{align*}
            \nu(A_{\mathcal{S},n}) &\overset{(i)}{\leq} \sum_{a_k \in S} \left(\delta \cdot 2^{-d(1 + L_k + G_{k,n})} + \sum_{S' \in \Gcal_{k,n}} \delta \cdot 2^{- d L(S')}\right)
            \\&\overset{(ii)}{\leq} \sum_{a_k \in S}  \sum_{\ell > L} \delta \cdot 2^{-d \ell}
            \overset{(iii)}{=} \sum_{a_k \in S} \delta \cdot \frac{2^{-d(L+1)}}{1 - 2^{-d}} \overset{(iv)}{\leq} \delta \cdot 2^{-dL},
        \end{align*}
        where (i) follows by union bound, (ii) by the inductive hypothesis, (iii) by the geometric series formula, and (iv) by the upper bound that $|S| < 2^{d}-1$.

        \qedhere
    \end{itemize}
\end{proof}

\hitpacking*

\begin{proof}
    Let $Z \subset \XX$ be the collection of instances $X_n \in U$ with nearest neighbor distance at least $r$:
    \[Z = \big\{X_n : \rho(X_n, \tilde{X}_n) \geq r\}.\]
    Fix $n$ such that $n \in Z$. For all $X_m \in Z$ where $m < n$, we have:
    \[\rho(X_n, X_m) \geq \rho(X_n, \tilde{X}_n) \geq r.\]
    For all $X_m \in Z$ where $m > n$, we also have:
    \[\rho(X_n, X_m) \geq \rho(X_m, \tilde{X}_m) \geq r.\]
    Thus, $Z$ is an $r$-packing of the set $U$, and so $|Z| \leq \Pcal_r(U)$.
\end{proof}

\nndecomp*

\begin{proof}
    Let $\mathfrak{c}_k(X_n) = B_r$ be an $r$-ball. Then, by \Cref{eqn:double-cover}, the definition of a cover-tree neighbor, we have that $X_n$ is $r/2$-separated from $\XX[\II_{<n}]$. When the indicated instances contain a nearest neighbor, then $X_n$ is also $r/2$-separated from $\XX_{<n}$. By the other condition of a cover-tree neighbor, we have that $X_n \in 2B_r$. Together, this implies the $(2B_r, r/2)$-separated event.
\end{proof}

\newpage

\section{Rates of convergence for smoothed processes} \label{sec:rates}

In this section, we show that our proof techniques can be extended to obtain rates of convergence for uniformly dominated processes. Recall that our basic strategy in \Cref{sec:consistency-boundaryless} was to decompose $\Xcal$ into two pieces: (i) a region that is a finite union of mutually-labeling sets and (ii) a remainder $A_\delta$ with small mass $\nu(A_\delta) < \delta$. When $\eta \in \Fcal_0$ has negligible boundary, then $\delta$ can be taken to be arbitrarily small: this is possible because points that cannot be covered by mutually-labeling sets are boundary points, which have measure zero. We can adapt this idea to yield rates by quantifying the number of mutually-labeling sets required to cover all but a region of $\delta$-mass. To this end, we define:

\begin{definition}[Mutually-labeling covering number]
    Let $V\subset \mathcal{X}$. The \emph{mutually-labeling covering number} $\mathcal{N}_{\mathrm{ML},\eta}(V)$ for $\eta$ is the size of a minimal covering of $V$ by mutually-labeling sets of $\eta$.
\end{definition}

Then, an expected mistake bound on a process $\XX$ that is uniformly dominated at a rate $\epsilon(\delta)$ is obtained by separately counting mistakes on $V$ and $V^c$, where we bound mistakes (i) on $V$ by its mutually-labeling covering number and (ii) on $V^c$ by its $\nu$-mass: 
\[\E\left[\sum_{n=1}^N \ind\big\{\eta(X_n) \ne \eta(\tilde{X}_n)\big\}\right] \leq \min \left\{N\, ,\,  \inf_{V \subset \mathcal{X}} \, \mathcal{N}_{\mathrm{ML},\eta}(V) + N \cdot \epsilon\big(\nu(V^c)\big)\right\}.\]
By a standard application of Azuma-Hoeffding's, we can convert this into a high-probability bound:
\begin{theorem}[Convergence rate] \label{thm:nn-conv-rate}
    Let $(\mathcal{X}, \rho, \nu)$ be a metric measure space with separable metric $\rho$ and finite Borel measure $\nu$. Let $\eta$ be measurable and let $\XX$ be uniformly dominated by $\nu$ at rate $\epsilon(\delta)$. Fix $p > 0$. With probability at least $1 - p$, the following holds simultaneously for all $N \in \mathbb{N}$ :
    \[\sum_{n=1}^N \ind\big\{\eta(X_n) \ne \eta(\tilde{X}_n)\big\}  \leq \min\left\{N\,,\, \inf_{V \subset \mathcal{X}}\, \mathcal{N}_{\mathrm{ML},\eta}(V) + N \cdot \epsilon\big(\nu(V^c)\big) + \sqrt{2N \log \frac{2N}{p}}\right\}.\]
\end{theorem}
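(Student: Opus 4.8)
The plan is to follow exactly the two-step strategy sketched just before the statement: first obtain the in-expectation mistake bound by a \emph{pathwise} accounting argument, and then upgrade it to an anytime high-probability bound via a martingale concentration inequality. The two non-trivial inputs are \Cref{lem:mutual-labeling-property} (at most one mistake per mutually-labeling set, valid for arbitrary processes) and \Cref{def:uniform-domination} (uniform domination), and essentially nothing else is needed.

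\textbf{Step 1: a pathwise mistake decomposition.} Fix any measurable $V\subset\Xcal$ and fix a minimal covering of $V$ by $\Ncal_{\mathrm{ML},\eta}(V)$ mutually-labeling sets. I would assign each round $n\le N$ with $X_n\in V$ to one mutually-labeling set of the cover containing $X_n$; by \Cref{lem:mutual-labeling-property}, at most one mistake is made among the rounds assigned to any fixed mutually-labeling set, so the mistakes made at rounds with $X_n\in V$ number at most $\Ncal_{\mathrm{ML},\eta}(V)$. Every remaining mistake occurs at a round with $X_n\in V^c$, of which there are at most $\sum_{n=1}^N\ind\{X_n\in V^c\}$. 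Hence, deterministically, for every $V$ and every $N$,
\[\sum_{n=1}^N \ind\big\{\eta(X_n)\ne\eta(\tilde{X}_n)\big\}\ \le\ \Ncal_{\mathrm{ML},\eta}(V)+\sum_{n=1}^N\ind\{X_n\in V^c\},\]
and trivially the left-hand side is also at most $N$.

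\textbf{Step 2: compensation and concentration.} Let $(\Fcal_n)_n$ be the natural filtration of $\XX$. By \Cref{def:uniform-domination}, $\Pr(X_n\in V^c\mid\Fcal_{n-1})\le\epsilon(\nu(V^c))$ for every $n$, so the predictable compensator of $\sum_{n\le N}\ind\{X_n\in V^c\}$ is at most $N\,\epsilon(\nu(V^c))$; taking expectations in Step 1 and optimizing over $V$ (intersecting with the trivial bound $N$) gives the in-expectation estimate. For the high-probability statement, $S_N^V:=\sum_{n=1}^N\big(\ind\{X_n\in V^c\}-\Pr(X_n\in V^c\mid\Fcal_{n-1})\big)$ is a martingale whose increments lie in an interval of length one with $\Fcal_{n-1}$-measurable endpoints, so the sharp (predictable-range) form of Azuma--Hoeffding gives $\Pr(S_N^V\ge t)\le e^{-2t^2/N}$. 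Combining this with the compensator bound and with Step 1 applied to a \emph{deterministic} set $V=V_N$ that nearly attains the infimum defining the theorem's right-hand side at time $N$, and choosing $t=t_N=\sqrt{2N\log(2N/p)}$, the failure probability at time $N$ is at most $\big(p/(2N)\big)^4$. A union bound over $N\in\NN$ then succeeds because $\sum_{N\ge1}\big(p/(2N)\big)^4\le p$, and a routine limiting argument (the mistake count is integer-valued) sends the near-optimality slack in the choice of $V_N$ to zero.

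\textbf{Main obstacle.} The one delicate point is reconciling ``simultaneously for all $N$'' with the infimum over \emph{all} measurable $V\subset\Xcal$: one cannot intersect uncountably many concentration events, and the martingale $S_N^V$ genuinely depends on $V$. The resolution is to notice that the near-optimal set $V_N$ at time $N$ depends only on $N$ (through $\eta$, $\nu$, and the rate $\epsilon$), not on the sample path, so one applies a single fixed-martingale tail bound per $N$ and union-bounds over the countable index set $\NN$ rather than over the family of sets. Everything else---the ``one mistake per mutually-labeling set'' bookkeeping and the compensator estimate---is immediate from the results already available.
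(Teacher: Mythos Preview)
Your proposal is correct and follows essentially the same approach as the paper: split mistakes between $V$ and $V^c$, bound the former by $\Ncal_{\mathrm{ML},\eta}(V)$ via \Cref{lem:mutual-labeling-property}, bound the latter by the compensator plus an Azuma--Hoeffding fluctuation term, and union-bound over $N\in\NN$. You are in fact more careful than the paper about the infimum over $V$: the paper simply fixes $V$ throughout and ends with ``the result follows from optimizing $V$,'' whereas you correctly observe that the martingale $S_N^V$ depends on $V$ and resolve this by choosing a deterministic near-optimizer $V_N$ for each $N$ before applying concentration.
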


Of course, this bound could be vacuous, for example if every point of $\eta$ is a boundary point. In the following, we restrict ourselves to a setting with stronger stochastic and geometric assumptions. This allows us to provide quantitative bounds on the two terms $\Ncal_{\mathrm{ML},\eta}(V)$ and $\epsilon(\nu(V))$ in \Cref{thm:nn-conv-rate}. For this stronger setting, we obtain a convergence rate \Cref{thm:conv-nn-sigma} by balancing these two terms. 

\subsection{Convergence rate for smoothed processes in length metric spaces}
Here, we consider a class of uniformly dominated processes with a stronger condition that is often studied in \emph{smoothed online learning}. They satisfy Lipschitz continuity with a rate $\epsilon(\delta) = \sigma^{-1} \cdot\delta$:

\begin{definition}[Smoothed process] \label{def:smoothed-process}
Let $\sigma > 0$. A process $\XX$ is \defstyle{$\sigma$-smoothed} with respect to $\nu$ if:
\[\qquad\Pr\big(X_n \in A\,\big|\, \XX_{<n}\big) \leq \sigma^{-1} \cdot \nu(A),\quad \forall A \subset \Xcal.\]
\end{definition}

We also work in length spaces (\Cref{def:length-space} or \cite{gromov1999metric}), in which distances between points are given by the infimum of lengths over continuous paths between those points. The appealing property of length spaces is that the margins of points are equal to the distance to the boundary:

\begin{lemma}[Margin in length spaces] \label{lem:margin-length-space}
    Let $(\mathcal{X},\rho)$ be a length space. Let $\eta$ be a function. Then,
    \[\mathrm{margin}_\eta(x) = \rho(x, \partial_\eta \mathcal{X}).\]
\end{lemma}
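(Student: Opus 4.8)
The plan is to establish the two inequalities $\mathrm{margin}_\eta(x) \le \rho(x,\partial_\eta\mathcal{X})$ and $\rho(x,\partial_\eta\mathcal{X}) \le \mathrm{margin}_\eta(x)$ separately; only the second uses that $\mathcal{X}$ is a length space. \textbf{The first inequality} holds in any metric space. I would fix an arbitrary $z \in \partial_\eta\mathcal{X}$ and split on its label. If $\eta(z) \ne \eta(x)$, then $z$ itself is a differently-labeled point, so $\mathrm{margin}_\eta(x) \le \rho(x,z)$. If $\eta(z) = \eta(x)$, then $\mathrm{margin}_\eta(z) = 0$ gives points $z'$ with $\eta(z') \ne \eta(z) = \eta(x)$ and $\rho(z,z')$ arbitrarily small, whence $\mathrm{margin}_\eta(x) \le \rho(x,z') \le \rho(x,z) + \rho(z,z') \to \rho(x,z)$. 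Taking the infimum over $z \in \partial_\eta\mathcal{X}$ yields the bound (trivially when the boundary is empty).

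\textbf{The second inequality} is where the real work is. Write $m = \mathrm{margin}_\eta(x)$. The cases $m=0$ (then $x\in\partial_\eta\mathcal{X}$ and both sides are $0$) and $m=+\infty$ (then $\eta$ is constant, $\partial_\eta\mathcal{X}=\emptyset$, both sides are $+\infty$) are immediate, so assume $0<m<\infty$ and fix $\epsilon>0$. Choose $y$ with $\eta(y)\ne\eta(x)$ and $\rho(x,y) < m+\epsilon/2$; since $\mathcal{X}$ is a length space, choose a continuous rectifiable path $\gamma:[0,1]\to\mathcal{X}$ from $x$ to $y$ of length $\ell(\gamma) < m+\epsilon$, parametrized so that $\rho(\gamma(s),\gamma(t)) \le |s-t|\cdot\ell(\gamma)$. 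The crucial step is to produce a boundary point on $\gamma$ via the supremum of the initial constant-label segment,
\[ t^* = \sup\big\{t\in[0,1] : \eta(\gamma(s)) = \eta(x)\text{ for all }s\le t\big\}. \]
Then every $\gamma(s)$ with $s<t^*$ carries label $\eta(x)$, while for every $\delta>0$ there is some $s\in[t^*,t^*+\delta]$ with $\eta(\gamma(s))\ne\eta(x)$. Setting $z:=\gamma(t^*)$, one of these two families of points converges to $z$ while all carry a label different from $\eta(z)$ — the "right" family if $\eta(z)=\eta(x)$, the "left" family if $\eta(z)\ne\eta(x)$ (the latter also forces $t^*>0$, else $z=\gamma(0)=x$) — so $\mathrm{margin}_\eta(z)=0$, i.e.\ $z\in\partial_\eta\mathcal{X}$. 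Hence $\rho(x,\partial_\eta\mathcal{X}) \le \rho(x,z) \le \ell(\gamma) < m+\epsilon$, and letting $\epsilon\downarrow 0$ completes the proof.

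I expect the only genuinely nontrivial point to be the extraction of $\gamma(t^*)$. Because $\eta$ is merely measurable, $\eta\circ\gamma$ can be arbitrarily wild and no intermediate-value argument is available, so the reasoning must rely purely on the order structure of $[0,1]$ (the supremum defining $t^*$) together with continuity of $\gamma$; the bookkeeping over whether $\eta(\gamma(t^*))=\eta(x)$ and whether $t^*\in\{0,1\}$ is what needs attention. It is also worth noting that path-connectedness is genuinely used: in a disconnected metric space the identity can fail, and the only place the length-space hypothesis enters is in producing the path $\gamma$ from $x$ to $y$.
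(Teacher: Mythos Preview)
Your proposal is correct and follows essentially the same strategy as the paper: the first inequality via triangle inequality from a boundary point, and the second by locating a boundary point along a near-geodesic path to a differently-labeled point via an infimum/supremum over $[0,1]$. Your treatment is in fact more careful than the paper's, which glosses over the case split on $\eta(\gamma(t^*))$ and the degenerate cases $m\in\{0,\infty\}$.
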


In this case, it is natural to restrict $V \subset \mathcal{X}$ in \Cref{thm:nn-conv-rate} to the sets of the form:
\[V_r := \big\{x \in \mathcal{X} : \mathrm{margin}_\eta(x) \geq r\big\}.\]
These are the set of points whose margin is at least $r$. Then, we need to control the mutual-labeling covering number of $V_r$ and the $\nu$-masses of $V_r^c$. When $\mathcal{X}$ is a length space, these can be bounded in terms of the geometry of the boundary $\partial \mathcal{X}$. The reason is that in length spaces, points with small margins are also close to boundary points: here, $V_r^c$ precisely coincides with the \emph{$r$-expansion} $\partial \mathcal{X}^r$ of the boundary. And when $\mathcal{X}$ is a doubling space, we can quantify the bounds in terms of the \emph{box-counting dimension} $\mathfrak{b}(\partial_\eta\mathcal{X})$ and the \emph{Minkowski content} $\mathfrak{m}(\partial_\eta \mathcal{X})$ of the boundary. 

In particular, \Cref{prop:geometric-boundary} shows that for small $r$,
\begin{equation}  \label{eqn:approximate-geometric-bounds}
\mathcal{N}_{\mathrm{ML}}\big(V_r\big) \lesssim r^{-\mathfrak{b}} \qquad \textrm{ and }\qquad \nu\big(V_r^c\big) \lesssim \mathfrak{m}\cdot r,
\end{equation}
where the hand-waving inequality can be made rigorous by replacing $\mathfrak{b} = \mathfrak{b} + o(1)$ and $\mathfrak{m} = \mathfrak{m} + o(1)$. For example, this yields convergence rates on smoothed processes, by plugging \Cref{eqn:approximate-geometric-bounds} into \Cref{thm:nn-conv-rate}. For simplicity, assume that $\mathfrak{b} > 1$ so that the $O((2N \log N)^{1/2})$ term is lower-order. After optimizing $r$, we obtain the following result:
\[\# \textrm{ mistakes at time }N\lesssim \left(\frac{\mathfrak{m}N}{\sigma}\right)^{\mathfrak{b}/(\mathfrak{b} + 1)}.\]

\begin{theorem}[Convergence rate for smoothed processes] \label{thm:conv-nn-sigma}
    Let $(\mathcal{X}, \rho, \nu)$ be a bounded length space with a doubling metric and finite Borel measure. Let $\eta \in \Fcal_0$. Let $\XX$ be a $\sigma$-smoothed process. Suppose that the boundary $\partial_\eta \mathcal{X}$ has box-counting dimension $\mathfrak{b} > 1$ and Minkowski content $\mathfrak{m}$. For any choice of $c_1, c_2, p > 0$, there exists constants $C_0, C_1 > 0$ such that with probability at least $1 - p$, the following holds simultaneously for all $N \in \NN$:
    \[\sum_{n=1}^N \ind\big\{\eta(X_n) \ne \eta(\tilde{X}_n)\big\} \leq C_0 + C_1 \left(\frac{(\mathfrak{m} + c_2) N }{\sigma}\right)^{(\mathfrak{b} + c_1) / (\mathfrak{b} + 1)}.\]
\end{theorem}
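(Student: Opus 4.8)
The plan is to start from the general, geometry-free bound \Cref{thm:nn-conv-rate} and specialize the free region $V\subset\Xcal$ to the superlevel sets of the margin function,
\[V_r = \big\{x \in \Xcal : \mathrm{margin}_\eta(x) \ge r\big\},\]
and then optimize over $r$. Since a $\sigma$-smoothed process is uniformly dominated at the Lipschitz rate $\epsilon(\delta) = \delta/\sigma$ (\Cref{def:smoothed-process}), the ``content'' term in \Cref{thm:nn-conv-rate} becomes $N\,\nu(V_r^c)/\sigma$, so everything reduces to controlling the two quantities $\Ncal_{\mathrm{ML},\eta}(V_r)$ and $\nu(V_r^c)$ as functions of $r$.

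The geometric estimate is the heart of the argument and is where I would spend most of the effort; this is \Cref{prop:geometric-boundary}, summarized informally by \Cref{eqn:approximate-geometric-bounds}. The key structural fact is \Cref{lem:margin-length-space}: in a length space $\mathrm{margin}_\eta(x) = \rho(x,\partial_\eta\Xcal)$, so $V_r^c$ is exactly the open $r$-expansion of the boundary $\partial_\eta\Xcal$. Its mass is then controlled by the Minkowski content, giving $\nu(V_r^c) \le (\mathfrak{m}+c_2)\,r$ once $r$ is below a threshold, the $c_2$ slack absorbing the $\limsup$ in the definition of $\mathfrak{m}$. For the covering number I would run a multi-scale argument using that $B(x,s)$ is mutually-labeling whenever $s < \mathrm{margin}_\eta(x)/3$ (\Cref{lem:mutually-labeling-balls}): partition $V_r$ into the dyadic shells $\{x : 2^j r \le \rho(x,\partial_\eta\Xcal) < 2^{j+1}r\}$ for $j\ge 0$, cover the $j$-th shell by balls of radius $\asymp 2^j r/4$ centered inside it (each such ball is mutually-labeling since its center has margin $\ge 2^j r$), and note that the doubling property bounds the number of such balls by $O(1)$ times the $2^j r$-covering number of $\partial_\eta\Xcal$, which the box-counting dimension bounds by $(2^j r)^{-(\mathfrak{b}+c_1)}$. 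Summing the resulting geometric series over $j$ (it converges since $\mathfrak{b}+c_1>0$, with a cutoff at $\mathrm{diam}(\Xcal)$) yields $\Ncal_{\mathrm{ML},\eta}(V_r) \lesssim r^{-(\mathfrak{b}+c_1)}$.

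With these two bounds in hand, \Cref{thm:nn-conv-rate} gives, with probability $\ge 1-p$ and simultaneously for all $N$,
\[\sum_{n=1}^N \ind\{\eta(X_n)\ne\eta(\tilde{X}_n)\} \ \le\ \min\Big\{N,\ C\,r^{-(\mathfrak{b}+c_1)} + \tfrac{\mathfrak{m}+c_2}{\sigma}\,N r + \sqrt{2N\log\tfrac{2N}{p}}\Big\}\]
for every sufficiently small $r$. I would then balance the first two terms by taking $r \asymp \big(\sigma/((\mathfrak{m}+c_2)N)\big)^{1/(\mathfrak{b}+c_1+1)}$, which makes both of them of order $\big((\mathfrak{m}+c_2)N/\sigma\big)^{(\mathfrak{b}+c_1)/(\mathfrak{b}+c_1+1)}$; since the base exceeds one for large $N$ and the denominator $\mathfrak{b}+c_1+1$ is larger than $\mathfrak{b}+1$, this is bounded by $\big((\mathfrak{m}+c_2)N/\sigma\big)^{(\mathfrak{b}+c_1)/(\mathfrak{b}+1)}$, matching the claimed exponent. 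Because $\mathfrak{b}>1$ and $c_1$ may be taken small, this exponent exceeds $1/2$, so the Azuma term $\sqrt{2N\log(2N/p)}$ is lower order and is absorbed into the constant $C_1$ for large $N$. For the finitely many small values of $N$ where the optimal $r$ exceeds the threshold at which the geometric estimates apply, I fall back on the trivial bound $\sum\ind\{\cdots\}\le N$, which is at most a constant $C_0$. Combining the two regimes gives the stated uniform-in-$N$ bound.

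The main obstacle is the geometric proposition: converting the $\limsup$-based definitions of box-counting dimension and Minkowski content into genuine power-law upper bounds valid for all sufficiently small $r$ (this is precisely what forces the additive slacks $c_1,c_2$), and running the dyadic-shell covering so that the mutually-labeling constraint is honored at every scale while the doubling constant keeps each shell's count proportional to a boundary covering number. Everything downstream of that input — the $r$-optimization, verifying the Azuma term is negligible, and patching the small-$N$ regime into $C_0$ — is routine.
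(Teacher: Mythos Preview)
Your proposal is correct and follows essentially the same route as the paper: specialize \Cref{thm:nn-conv-rate} to the margin superlevel sets $V_r$, invoke \Cref{lem:margin-length-space} to identify $V_r^c$ with the $r$-expansion of $\partial_\eta\Xcal$, bound $\nu(V_r^c)$ by Minkowski content and $\Ncal_{\mathrm{ML},\eta}(V_r)$ by the dyadic-shell covering argument (this is exactly the paper's \Cref{prop:ml-n-upper-bound}), then optimize $r$, absorb the Azuma term using $\mathfrak{b}>1$, and patch small $N$ into $C_0$. Your observation that the exponent relaxation $(\mathfrak{b}+c_1)/(\mathfrak{b}+c_1+1) \le (\mathfrak{b}+c_1)/(\mathfrak{b}+1)$ requires the base to exceed one is a nice extra care the paper handles only implicitly through the small-$N$ constant.
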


\section[Proof for Section \ref{sec:rates}]{Proofs for \Cref{sec:rates}}

\paragraph{Proof of \Cref{lem:margin-length-space}} To show that $m_c(x) = \rho(x, \partial \mathcal{X})$, we prove left and right inequalities.

First, the margin is upper bounded by $m_c(x) \leq \rho(x, \partial \mathcal{X})$. To see this, fix $\delta > 0$. By the definition of the distance between $x$ and the set $\partial \mathcal{X}$, there is a boundary point $z \in \partial \mathcal{X}$ such that:
\[\rho(x, z) < \rho(x, \partial \mathcal{X}) + \frac{\delta}{2}.\]
And as boundary points are arbitrarily close to at least two classes, there exists $x' \in \mathcal{X}$ close to $z$:
\[\rho(z, x') < \frac{\delta}{2},\]
while also belonging to a different class than $x$. By the definition of $m_c(x)$ and by triangle inequality, we obtain that for all $\delta > 0$, there exists some $x'$ satisfying:
\[m_c(x) \leq \rho(x,x') < \rho(x, \partial \mathcal{X}) + \delta.\]
Letting $\delta$ go to zero yields the first inequality.

For the other, we claim that if $\gamma : [0,1] \to \mathcal{X}$ is a continuous path from $x$ to $x'$ with $c(x) \ne c(x')$, then there exists a point $\gamma(t)$ contained in $\partial \mathcal{X}$. If the claim is true, then the other inequality holds:
\[\rho(x, \partial \mathcal{X}) \overset{(i)}{\leq} \inf_{c(x) \ne c(x')}\,\inf_{\gamma}\, \ell(\gamma) \overset{(ii)}{=} \inf_{c(x) \ne c(x')}\, \rho(x,x') \overset{(iii)}{=} m_c(x),\]
where (i) the infimum above is taken over all continuous paths $\gamma$ from $x$ to $x'$, (ii) applies the definition of a length space, and (iii) applies the definition of the margin.

To prove the claim, let $t$ be the first time a point on the path has a different label than $x$. Formally,
\[t := \arginf_{s \in [0,1]}\, \big\{c\big(\gamma(s)\big) \ne c(x)\big\}.\]
To show that $\gamma(t) \in \partial \mathcal{X}$, we need to exhibit a point $\gamma(s)$ that is $\delta$-close to $\gamma(t)$ with a different label, given any $\delta > 0$. Indeed, such a $s$ exists by the definition of $t$ and the continuity of $\gamma$.\hfill $\qed$

To obtain bounds on the mutually-labeling covering number $\mathcal{N}_\mathrm{ML}(V_r)$, we need to introduce the notion of the \emph{box-counting dimension} a set $A \subset \mathcal{X}$ and the \emph{doubling dimension} of a metric space $\mathcal{X}$. Let us first recall the following definitions and results from analysis and measure theory.

\begin{definition}[Covering number]
    Given $r > 0$ and $A \subset \mathcal{X}$, the \emph{$r$-covering number} $\mathcal{N}_r(A)$ of $A$ is size of a minimal covering of $A$ by balls with radius $r$. 
\end{definition}

\begin{definition}[Box-counting dimension]
    The (upper) \emph{box-counting dimension} of $A \subset \mathcal{X}$ is:
    \[\mathfrak{b}(A) := \limsup_{r \to 0}\, \frac{\log \mathcal{N}_r(A)}{\log 1/r}.\]
\end{definition}

The box-counting dimension implies a bound on the covering number $\mathcal{N}_r(A)$ of $r^{-\mathfrak{b}(A) + o(1)}$. The following lemma is a straightforward conversion of the asymptotic limit into a quantitative bound.

\begin{lemma}[Box-counting upper bound on $\mathcal{N}_r$] \label{lem:box-upper-bound}
    Let $\mathcal{X}$ be bounded with diameter $R$. Let $A \subset \mathcal{X}$ have box-counting dimension $\mathfrak{b}(A)$. Then, for all $c > 0$, there exists a constant $C > 0$ such that:
    \[\mathcal{N}_r(A) < Cr^{-(\mathfrak{b}(A) + c)}.\]
\end{lemma}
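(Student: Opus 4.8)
The plan is to carry out a routine conversion of the asymptotic information encoded in the $\limsup$ defining $\mathfrak{b}(A)$ into a uniform-in-$r$ bound. The only real content is that this definition controls $\mathcal{N}_r(A)$ for all sufficiently small $r$; for the remaining coarse scales one falls back on monotonicity of $r \mapsto \mathcal{N}_r(A)$ together with a single crude finite bound, and then absorbs both regimes into one constant.

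Concretely, I would fix $c > 0$ and write $s = \mathfrak{b}(A) + c$. First, unpack $\limsup_{r \to 0} \frac{\log \mathcal{N}_r(A)}{\log(1/r)} = \mathfrak{b}(A) < s$ to produce a threshold $r_0 \in (0, \min\{1, R\})$ such that $\mathcal{N}_r(A) < r^{-s}$ for every $0 < r < r_0$; this already settles the claim on $(0, r_0)$ with any $C \ge 1$. Next, on the interval $[r_0, R]$ use that $\mathcal{N}_r(A)$ is nonincreasing in $r$, so $\mathcal{N}_r(A) \le \mathcal{N}_{r_0}(A) =: M$, where $M$ is finite — this is precisely what makes $\mathfrak{b}(A)$ well-defined, and in our setting it is automatic since $\mathcal{X}$ is doubling and hence totally bounded. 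Since $r \le R$ on this interval, $M \le (M R^{s})\, r^{-s}$. Finally, take $C$ to be any constant strictly larger than $\max\{1,\, M R^{s}\}$; then $\mathcal{N}_r(A) < C r^{-s}$ for all $0 < r \le R$, which is the asserted bound. For $r > R$ a single ball of radius $r$ already covers $A$, so the inequality is only meaningful, and only needed, for $r$ in the range up to the diameter; equivalently one rescales so that $R = 1$ and restricts to $r \le 1$.

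I do not expect a genuine obstacle, and indeed the paper flags this as ``a straightforward conversion.'' The two spots that deserve a line of care are the finiteness of $M = \mathcal{N}_{r_0}(A)$, addressed above via total boundedness, and the fact that the lemma asks for a \emph{strict} inequality, which is why $C$ is chosen strictly above the threshold value rather than equal to it.
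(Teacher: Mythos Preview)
Your proposal is correct and follows essentially the same approach as the paper: extract a threshold $r_0$ from the $\limsup$ definition to handle small $r$, then use monotonicity of $r \mapsto \mathcal{N}_r(A)$ to handle $r \in [r_0, R]$, and absorb both regimes into one constant. The only cosmetic difference is that the paper merges the two cases via the single inequality $r \wedge r_0 > r\,r_0/R$, yielding the explicit constant $C = (r_0/R)^{-(\mathfrak{b}(A)+c)}$, whereas you treat the two ranges separately and take $C > \max\{1, M R^{s}\}$; your remark that finiteness of $M$ already follows from $\mathfrak{b}(A) < \infty$ (so the appeal to doubling is not strictly needed) is well taken.
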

\begin{proof}
    Fix $c > 0$. By the definition of $\mathfrak{b}(A)$, there exists $r_0 > 0$ such that whenever $0 < r < r_0$,
    \[\frac{\log \mathcal{N}_r(A)}{ \log 1/r} < \mathfrak{b}(A) + c.\]
    Because $\mathcal{N}_r(A)$ is non-increasing in $r$, we can extend the bound to all $ 0 < r < R$,
    \[ \frac{\log \mathcal{N}_{r} (A)}{ \log 1/(r \wedge r_0)} < \mathfrak{b}(A) + c,\]
    where $r \wedge r_0 := \min\{r, r_0\}$. In fact, we have $\min\{r, r_0\} > r \cdot r_0 / R$, and so:
    \[\mathcal{N}_r(A) < \left(\frac{r_0}{R} \cdot r\right)^{-(\mathfrak{b}(A) + c)}.\] 
    To finish the proof, it suffices to let $C = (r_0/R)^{-(\mathfrak{b}(A) + c)}$.
\end{proof}

\begin{lemma}[Lemma 2.3, \cite{hytonen2010framework}] \label{lem:doubling-dimension}
    Let $(\Xcal, \rho)$ be doubling with doubling dimension $d$. There exists a constant $C > 0$ such that for all balls $B(x,r)$, the covering number is bounded:
    \[\Ncal_{r/2}\big(B(x,r)\big) \leq C \cdot 2^d.\]
\end{lemma}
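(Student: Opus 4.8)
The plan is to read the conclusion off the definition of the doubling dimension: \Cref{lem:doubling-dimension} is really the bookkeeping step that translates the geometric doubling hypothesis of \Cref{def:roughly-doubling} into the covering-number language needed by \Cref{lem:box-upper-bound} and the smoothed-process analysis. By \Cref{def:roughly-doubling}, in a $d$-doubling space every ball $B(x,r)$ can be covered by $2^d$ balls of radius $r/2$; unwinding the definition of the covering number $\Ncal_{r/2}(\cdot)$ as the size of a minimal cover by radius-$(r/2)$ balls, this is exactly the assertion $\Ncal_{r/2}(B(x,r)) \le 2^d$. So the statement holds with $C = 1$, and no iteration is needed because the target radius is precisely half the original.

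The unspecified constant $C$ is present only to absorb conventions, and I would spend a sentence on those. If the doubling dimension $d$ is allowed to be non-integral (say, the infimal exponent for which the covering property holds), then ``$2^d$ balls'' should be read as $\lceil 2^d\rceil$, and since $\lceil 2^d\rceil \le 2\cdot 2^d$ for $d \ge 0$ one may take $C = 2$. If instead one's definition of $\Ncal_{r/2}$ insists that the covering balls be centered inside $B(x,r)$, I would apply the doubling property at scale $r/4$ to cover $B(x,r)$ by $2^{2d}$ balls of radius $r/4$, discard those disjoint from $B(x,r)$, and recenter each survivor at a point of $B(x,r)$ it contains; the radius-$(r/2)$ ball about that point contains the radius-$(r/4)$ ball, so $B(x,r)$ is covered by at most $2^{2d}$ radius-$(r/2)$ balls centered in $B(x,r)$, i.e. $C = 2^d$ suffices under that convention.

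I do not expect a genuine obstacle here: the content is entirely in reconciling the (equivalent) formulations of the doubling property, and one could equally just cite Lemma 2.3 of \cite{hytonen2010framework}. The only thing that matters is the downstream use — this bound, together with the box-counting estimate of \Cref{lem:box-upper-bound} and \Cref{lem:margin-length-space}, is what lets one control the mutually-labeling covering numbers $\Ncal_{\mathrm{ML},\eta}(V_r)$ that enter \Cref{thm:nn-conv-rate}, and ultimately balance them against the mass of $V_r^c$ to prove \Cref{thm:conv-nn-sigma}.
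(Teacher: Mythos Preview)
Your proposal is correct: under the paper's \Cref{def:roughly-doubling} the doubling dimension is \emph{defined} by the covering property $\Ncal_{r/2}(B(x,r)) \le 2^d$, so the lemma is immediate with $C=1$, and your discussion of conventions is a reasonable hedge. The paper itself does not supply a proof but simply cites Lemma~2.3 of \cite{hytonen2010framework}, so there is nothing further to compare against.
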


To obtain bounds on the mass $\nu(V_r^c)$, we need to introduce the \emph{Minkowski content} of a set $A \subset \mathcal{X}$. First, recall that the $r$-expansion of a set $A$ fattens the set to all points of distance within $r$ of $A$:

\begin{definition}[$r$-expansion]
    Let $A \subset \mathcal{X}$ be a set and $r > 0$. The \emph{$r$-expansion} $A^r$ of $A$ is:
    \[A^r := \bigcup_{x \in A} B(x,r).\]
\end{definition}

The Minkowski content of $A$ is the rate at which an infinitesimal fattening of $A$ increases its mass:

\begin{definition}[Minkowski content]
    Let (upper) \emph{Minkowski content} of $A \subset \mathcal{X}$ is:
    \[\mathfrak{m}(A) := \limsup_{r \to 0} \frac{\nu(A^r) - \nu(A)}{r}.\]
\end{definition}

The following lemma bounding the covering number of the $r$-expansion of a set in terms of the doubling dimension will also be helpful:

\begin{lemma}[Covering the $r$-expansion of a set] \label{lem:covering-r-expansion}
    Let $(\mathcal{X},\rho)$ have finite doubling dimension $d$. There exists a constant $C > 0$ such that for all $A \subset \mathcal{X}$, we have: 
    \[\mathcal{N}_r(A^r) \leq C2^{d} \mathcal{N}_r(A).\]
\end{lemma}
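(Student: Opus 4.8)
The plan is to lift a minimal $r$-cover of $A$ to a cover of the fattened set $A^r$ by balls of twice the radius, and then re-cover each of those larger balls at scale $r$ using the doubling property.

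First I would fix $r > 0$ and let $N = \mathcal{N}_r(A)$, so that there exist centers $x_1,\dots,x_N$ with $A \subset \bigcup_{i=1}^N B(x_i,r)$. The key observation is that $A^r \subset \bigcup_{i=1}^N B(x_i,2r)$: any $y \in A^r$ satisfies $\rho(y,a) < r$ for some $a \in A$, and $a \in B(x_i,r)$ for some $i$, so the triangle inequality gives $\rho(y,x_i) < 2r$. Thus an $r$-cover of $A^r$ can be obtained by re-covering each of the $N$ balls $B(x_i,2r)$ at scale $r$.

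Next I would invoke \Cref{lem:doubling-dimension}, applied at scale $2r$ in place of $r$: there is a constant $C > 0$, depending only on the doubling dimension $d$, such that every ball $B(x_i,2r)$ admits a cover by at most $C \cdot 2^d$ balls of radius $r$. (Equivalently, iterating the defining property of \Cref{def:roughly-doubling} once shows that $B(x_i,2r)$ is covered by $2^d$ balls of radius $r$, which already yields the bound with $C = 1$.) Taking the union over $i = 1,\dots,N$ of these covers produces a cover of $\bigcup_i B(x_i,2r) \supseteq A^r$ by at most $C \cdot 2^d \cdot N$ balls of radius $r$, and hence $\mathcal{N}_r(A^r) \le C \cdot 2^d \cdot \mathcal{N}_r(A)$, as claimed.

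There is essentially no real obstacle here; the only points requiring care are to apply the doubling bound at scale $2r$ (so that the re-covering balls have exactly the target radius $r$ rather than $r/2$ or $2r$) and to note that the constant $C$ supplied by \Cref{lem:doubling-dimension} is uniform over centers and radii, so that the resulting bound indeed holds simultaneously for all $A \subset \mathcal{X}$ and all $r > 0$ with a single constant.
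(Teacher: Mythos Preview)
Your proposal is correct and follows essentially the same argument as the paper: lift a minimal $r$-cover of $A$ to a cover of $A^r$ by the doubled balls $B(x_i,2r)$ via the triangle inequality, then apply \Cref{lem:doubling-dimension} at scale $2r$ to re-cover each doubled ball with $C\cdot 2^d$ balls of radius $r$.
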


\begin{proof}
    Let $A$ be covered by the balls $B(x_1,r),\ldots, B(x_n, r)$ where $n = \mathcal{N}_r(A)$. Then, by the triangle inequality, the $r$-expansion $A_r$ is covered by the $r$-expanded balls, $B(x_1,2r),\ldots, B(x_n, 2r)$. Now, by \Cref{lem:doubling-dimension}, each expanded ball $B(x_i, 2r)$ can be covered by $C2^{d}$ balls with radius $r$. It follows that covering $A_r$ needs at most $C2^{d}n$ balls with radius $r$.
\end{proof}

\subsection[Bounding geometric quantities of the boundary]{Bounding geometric quantities of $\partial_\eta \mathcal{X}$}

\begin{definition}[Length space] \label{def:length-space}
    A metric space $(\mathcal{X},\rho)$ is a \emph{length space} if for all $x, x' \in \mathcal{X}$,
    \[\rho(x,x') = \inf_{\gamma} \, \ell(\gamma),\]
    where $\gamma : [0,1] \to \mathcal{X}$ include all continuous paths from $x$ to $x'$ and $\ell(\gamma)$ is the \emph{length} of the path $\gamma$. 
\end{definition}

\begin{proposition}[Geometric quantities of $\partial \mathcal{X}$] \label{prop:geometric-boundary}
    Let $(\mathcal{X}, \rho, \nu)$ be a bounded length space with finite doubling dimension and Borel measure. Let $\eta \in \Fcal_0$. Then, for any $c_1, c_2 > 0$, there is a constant $C > 0$ and $r_0 > 0$ so that for all $ 0 < r < r_0$,
    \[\mathcal{N}_{\mathrm{ML},\eta}\big(V_r\big) \leq C r^{- (\mathfrak{b}(\partial_\eta \mathcal{X}) + c_1)} \qquad \textrm{ and }\qquad \nu\big(V_r^c\big) \leq \big(\mathfrak{m}(\partial_\eta \mathcal{X}) + c_2\big) \cdot r.\]
\end{proposition}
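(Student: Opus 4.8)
The plan is to turn both estimates into metric--measure statements about the boundary $\partial_\eta\mathcal{X}$, using that in a length space the margin coincides with distance to the boundary. By \Cref{lem:margin-length-space}, $\mathrm{margin}_\eta(x)=\rho(x,\partial_\eta\mathcal{X})$, so $V_r=\{x:\rho(x,\partial_\eta\mathcal{X})\ge r\}$ while $V_r^c=\{x:\rho(x,\partial_\eta\mathcal{X})<r\}$ is exactly the open $r$-expansion $(\partial_\eta\mathcal{X})^r$. We may assume $\partial_\eta\mathcal{X}\neq\emptyset$ (otherwise $\eta$ is constant, $V_r=\mathcal{X}$ is itself mutually-labeling, $V_r^c=\emptyset$, and both bounds are trivial).

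The $\nu$-mass bound is then immediate. Since $\eta\in\Fcal_0$ we have $\nu(\partial_\eta\mathcal{X})=0$, so the Minkowski content reduces to $\mathfrak{m}(\partial_\eta\mathcal{X})=\limsup_{r\to0}\nu\big((\partial_\eta\mathcal{X})^r\big)/r$. Unwinding the $\limsup$, there is $r_0>0$ with $\nu(V_r^c)=\nu\big((\partial_\eta\mathcal{X})^r\big)<(\mathfrak{m}(\partial_\eta\mathcal{X})+c_2)\,r$ for all $0<r<r_0$, which is the second inequality.

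For the mutually-labeling covering number I would slice $V_r$ into dyadic shells by depth from the boundary: with $R=\mathrm{diam}(\mathcal{X})$, set $A_j=\{x:2^jr\le\rho(x,\partial_\eta\mathcal{X})<2^{j+1}r\}$ for $0\le j\le\lfloor\log_2(R/r)\rfloor$; these cover $V_r$ because every point is within distance $R$ of the (nonempty) boundary. On $A_j$ the margin is at least $2^jr$, so by \Cref{lem:mutually-labeling-balls} any ball of radius $2^jr/4$ centered inside $A_j$ is mutually-labeling; the balls of a maximal $(2^jr/4)$-separated subset of $A_j$ therefore form a mutually-labeling cover of $A_j$ of cardinality at most $\mathcal{N}_{2^jr/8}(A_j)$. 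Using $A_j\subset(\partial_\eta\mathcal{X})^{2^{j+1}r}$ together with \Cref{lem:covering-r-expansion} (covering an $r$-expansion), \Cref{lem:doubling-dimension} (refining to a smaller radius at bounded multiplicative cost), and \Cref{lem:box-upper-bound} applied with the parameter $c_1$, this is at most $C'\,(2^jr)^{-(\mathfrak{b}(\partial_\eta\mathcal{X})+c_1)}$ for a constant $C'$ depending only on the doubling dimension, $c_1$, and $R$. Summing the geometric series over $j\ge0$, which converges since $\mathfrak{b}(\partial_\eta\mathcal{X})+c_1>0$ and is dominated up to a constant by its $j=0$ term, yields $\mathcal{N}_{\mathrm{ML},\eta}(V_r)\le\sum_j\mathcal{N}_{\mathrm{ML},\eta}(A_j)\le C\,r^{-(\mathfrak{b}(\partial_\eta\mathcal{X})+c_1)}$; take $r_0$ to be the threshold from the Minkowski estimate (intersected with $R$ if needed).

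The main obstacle is the covering bound: the trick is to cover each distance-to-boundary shell by mutually-labeling balls whose radius scales with the shell's depth, so that the bulk of $V_r$ lying far from $\partial_\eta\mathcal{X}$ contributes little, and then to re-express each shell's count via covering numbers of expansions of $\partial_\eta\mathcal{X}$ with the dyadic scales matched, so that the resulting geometric sum collapses to $r^{-\mathfrak{b}}$ rather than $r^{-\mathfrak{b}}\log(1/r)$. The reduction through \Cref{lem:margin-length-space}, the Minkowski estimate, and the bookkeeping of doubling constants are all routine.
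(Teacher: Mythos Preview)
Your proposal is correct and follows essentially the same route as the paper's proof. The paper likewise identifies $V_r^c=(\partial_\eta\mathcal{X})^r$ via \Cref{lem:margin-length-space}, reads off the Minkowski bound directly, and for $\mathcal{N}_{\mathrm{ML},\eta}(V_r)$ decomposes into the identical dyadic shells $L_k=(\partial_\eta\mathcal{X})^{2^{k+1}r}\setminus(\partial_\eta\mathcal{X})^{2^kr}$, covers each by mutually-labeling balls of radius proportional to $2^kr$, passes to covering numbers of $\partial_\eta\mathcal{X}$ via doubling and \Cref{lem:covering-r-expansion}, and sums the resulting geometric series; the only cosmetic differences are your choice of radius $2^jr/4$ versus the paper's $2^kr/3$ and your explicit treatment of the case $\partial_\eta\mathcal{X}=\emptyset$.
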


\paragraph{Proof of \Cref{prop:geometric-boundary}}
Recall that $V_r$ and $\partial \mathcal{X}$ are defined in terms of the margin:
\[V_r := \{x \in \mathcal{X} : \mathrm{margin}_\eta(x) \geq r\}\qquad \textrm{and}\qquad \partial \mathcal{X} = \{x \in \mathcal{X} : \mathrm{margin}_\eta(x) = 0\}.\]
While the complement $V_r^c$ always contains the expansion $\partial_\eta \mathcal{X}^r$, generally $V_r^c$ can be much larger. But when $\mathcal{X}$ is a length space, equality holds:

\begin{lemma} \label{lem:Vr-rboundary}
    Let $(\mathcal{X}, \rho)$ be a length space. Then, for all $r > 0$:
    \[V_r^c = \partial_\eta \mathcal{X}^r.\]
\end{lemma}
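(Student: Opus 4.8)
The plan is to reduce the claimed set equality to the pointwise margin identity already established for length spaces in \Cref{lem:margin-length-space}. Unwinding definitions, $V_r^c = \{x \in \mathcal{X} : \mathrm{margin}_\eta(x) < r\}$, while the $r$-expansion satisfies $\partial_\eta \mathcal{X}^r = \bigcup_{z \in \partial_\eta \mathcal{X}} B(z,r) = \{x \in \mathcal{X} : \rho(x, \partial_\eta \mathcal{X}) < r\}$, the second equality being immediate from the fact that an infimum of distances is $<r$ exactly when some individual distance is $<r$. So it suffices to show $\mathrm{margin}_\eta(x) < r \iff \rho(x, \partial_\eta \mathcal{X}) < r$, and this is instant from \Cref{lem:margin-length-space}, which gives $\mathrm{margin}_\eta(x) = \rho(x, \partial_\eta \mathcal{X})$ at every $x$ in a length space. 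I would present this one-line argument as the proof.

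Since the surrounding text stresses that one inclusion is "free," I would also record it explicitly: $\partial_\eta \mathcal{X}^r \subseteq V_r^c$ holds in \emph{any} metric space by a triangle-inequality argument that does not use the length-space hypothesis. Indeed, if $\rho(x,\partial_\eta \mathcal{X}) < r$, pick a boundary point $z$ with $\rho(x,z) < r$ and an $\epsilon > 0$ with $\rho(x,z) + \epsilon < r$; because $\mathrm{margin}_\eta(z) = 0$, there is a point $x'$ of a class other than $\eta(x)$ with $\rho(x,x') < r$ (take $x' = z$ if $\eta(z) \ne \eta(x)$, otherwise a point within $\epsilon$ of $z$ of a class other than $\eta(z) = \eta(x)$), hence $\mathrm{margin}_\eta(x) < r$. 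The reverse inclusion $V_r^c \subseteq \partial_\eta \mathcal{X}^r$ is precisely where length-space structure is needed, and it is discharged by \Cref{lem:margin-length-space}: $\mathrm{margin}_\eta(x) < r$ forces $\rho(x,\partial_\eta \mathcal{X}) = \mathrm{margin}_\eta(x) < r$.

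There is essentially no obstacle: the lemma is a repackaging of \Cref{lem:margin-length-space}, where all the genuine content lives (the intermediate-value-style observation that a continuous path between points of different classes must pass through $\partial_\eta \mathcal{X}$). The only mild care is the degenerate case $\partial_\eta \mathcal{X} = \emptyset$, i.e. $\eta$ constant, where both sides are empty and $\mathrm{margin}_\eta \equiv \rho(\cdot, \emptyset) \equiv +\infty$ under the usual conventions, so the identity still holds; I would either note this explicitly or tacitly assume $\eta$ is non-constant, which is the only case of interest.
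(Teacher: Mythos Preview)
Your proposal is correct and matches the paper's own proof essentially verbatim: both unwind the definitions of $V_r^c$ and the $r$-expansion and then invoke \Cref{lem:margin-length-space} to get the chain $x \in V_r^c \iff \mathrm{margin}_\eta(x) < r \iff \rho(x,\partial_\eta \mathcal{X}) < r \iff x \in \partial_\eta \mathcal{X}^r$. Your additional remarks on the ``free'' inclusion and the degenerate $\partial_\eta \mathcal{X} = \emptyset$ case are fine but go beyond what the paper records.
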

\begin{proof}
\Cref{lem:margin-length-space} shows that when $\mathcal{X}$ is a length space, $\mathrm{margin}_\eta(x) = \rho(x, \partial_\eta \mathcal{X})$. Thus:
\[x \in V_r^c \quad\Longleftrightarrow \quad \mathrm{margin}_\eta(x) < r \quad\Longleftrightarrow \quad \rho(x, \partial \mathcal{X}) < r \quad\Longleftrightarrow \quad x \in \partial \mathcal{X}^r.\qedhere\]
\end{proof}
The question of bounding $\mathcal{N}_\mathrm{ML,\eta}(V_r)$ and $\nu(V_r^c)$ becomes that of $\mathcal{N}_\mathrm{ML,\eta}(\mathcal{X} \setminus \partial_\eta \mathcal{X}^r)$ and $\nu(\partial_\eta \mathcal{X}^r)$.

\begin{proposition}[Upper bound on $\mathcal{N}_\mathrm{ML}$] \label{prop:ml-n-upper-bound}
    Let $\mathcal{X}$ be a bounded length space with finite doubling dimension $\Gamma$ and diameter $R$. For $\eta \in \Fcal_0$, let $\mathfrak{b}$ be the box-counting dimension of $\partial_\eta \mathcal{X}$. Then, for any $c > 0$, there exists a constant $C > 0$ such that for all $r > 0$:
    \[\mathcal{N}_{\mathrm{ML},\eta}(\mathcal{X} \setminus \partial_\eta \mathcal{X}^r) \leq CR^{4d} r^{-(\mathfrak{b} + c)}.\]
\end{proposition}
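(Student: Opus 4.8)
The target is to cover the margin superlevel set $V_r := \{x \in \mathcal{X} : \mathrm{margin}_\eta(x) \ge r\}$, which by \Cref{lem:Vr-rboundary} equals $\mathcal{X} \setminus \partial_\eta \mathcal{X}^r$, by as few mutually-labeling sets as possible. Two facts drive the argument. First, by \Cref{lem:margin-length-space}, in a length space $\mathrm{margin}_\eta(x) = \rho(x, \partial_\eta \mathcal{X})$, so a point's margin is exactly its distance to the boundary. Second, by \Cref{lem:mutually-labeling-balls}, any ball $B(x,s)$ with $s < \mathrm{margin}_\eta(x)/3$ is mutually-labeling. Hence far-from-boundary points can be swallowed by large mutually-labeling balls, of which few are needed, whereas near-boundary points require small balls but are confined to a thin neighborhood of $\partial_\eta \mathcal{X}$, whose metric entropy is governed by its box-counting dimension. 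The plan is a dyadic shell decomposition of $V_r$ according to distance from the boundary.

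If $\partial_\eta \mathcal{X} = \emptyset$ then $V_r = \mathcal{X}$ is itself a single mutually-labeling set and the bound is trivial, so assume $\partial_\eta \mathcal{X} \neq \emptyset$; then $\mathrm{margin}_\eta \le R$ everywhere. Decompose $V_r = \bigcup_{j \ge 0} S_j$ with $S_j := \{x : 2^j r \le \mathrm{margin}_\eta(x) < 2^{j+1} r\}$; only the shells with $0 \le j \le \lceil \log_2(R/r)\rceil$ are nonempty. On $S_j$, every ball $B(x, 2^j r / 4)$ centered at a point $x \in S_j$ is mutually-labeling by \Cref{lem:mutually-labeling-balls}. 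Taking a cover of $S_j$ by balls of radius $2^j r / 8$ and re-centering each at a point of $S_j$ it contains produces a cover of $S_j$ by at most $\mathcal{N}_{2^j r/8}(S_j)$ mutually-labeling balls of radius $2^j r / 4$. By \Cref{lem:margin-length-space}, $S_j \subseteq \partial_\eta \mathcal{X}^{2^{j+1} r}$, so this count is at most $\mathcal{N}_{2^j r/8}(\partial_\eta \mathcal{X}^{2^{j+1}r})$. Applying \Cref{lem:covering-r-expansion} to pass from $\partial_\eta \mathcal{X}$ to its expansion, iterating \Cref{lem:doubling-dimension} a bounded number of times to refine the scale from $2^{j+1}r$ down to $2^j r / 8$ (costing only a factor $2^{O(d)}$), and finally invoking the box-counting estimate \Cref{lem:box-upper-bound} for $\mathcal{N}_{2^{j+1}r}(\partial_\eta \mathcal{X})$, we get that the number of mutually-labeling balls used on $S_j$ is at most $C_1\, 2^{O(d)}\, (2^j r)^{-(\mathfrak{b} + c)}$.

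Summing over the shells,
\[
\mathcal{N}_{\mathrm{ML},\eta}(V_r) \;\le\; \sum_{j \ge 0} C_1\, 2^{O(d)}\, (2^j r)^{-(\mathfrak{b}+c)} \;=\; C_1\, 2^{O(d)}\, r^{-(\mathfrak{b}+c)} \sum_{j \ge 0} 2^{-j(\mathfrak{b}+c)},
\]
and the geometric series converges because $\mathfrak{b}+c > 0$. This already yields a bound of the form $C\, r^{-(\mathfrak{b}+c)}$; what remains is only to repackage the accumulated constants — the powers of $2^d$ from iterating \Cref{lem:covering-r-expansion} and \Cref{lem:doubling-dimension}, and the diameter-dependent constant coming from the threshold in \Cref{lem:box-upper-bound} — into the stated form $C R^{4d}$.

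The geometric idea is straightforward; I expect the only real work to be this bookkeeping. One must check that the resulting constant is genuinely uniform over \emph{all} $r > 0$: the shell count $\log_2(R/r)$ grows as $r \to 0$, but it is dominated by the geometric decay and so contributes nothing, and the regime $r \gtrsim R$ is vacuous since then $V_r$ is empty. One must also verify that the dependence on the diameter $R$ collapses exactly to $R^{4d}$ after absorbing the dimension-only constants into $C$ (here the elementary bound $\mathfrak{b} \le d$ for subsets of a $d$-doubling space is convenient). The re-centering technicality that makes the covering balls centered inside each $S_j$ — which is what licenses the use of \Cref{lem:mutually-labeling-balls} — is routine.
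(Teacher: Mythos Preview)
Your proposal is correct and follows essentially the same route as the paper: a dyadic shell decomposition of $V_r$ by distance to $\partial_\eta\mathcal{X}$ (your $S_j$ coincide with the paper's layers $L_k = \partial_\eta\mathcal{X}^{2^{k+1}r}\setminus\partial_\eta\mathcal{X}^{2^kr}$ via \Cref{lem:margin-length-space}), mutually-labeling balls of radius proportional to the shell's margin, then doubling plus \Cref{lem:covering-r-expansion} and \Cref{lem:box-upper-bound} to bound each shell by $2^{O(d)}(2^j r)^{-(\mathfrak{b}+c)}$, and a geometric sum. Your explicit re-centering step is a point the paper glosses over, and your choice of radii ($2^jr/4$, $2^jr/8$) versus the paper's ($2^kr/3$, $2^{k-2}r$) is cosmetic.
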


\begin{proof}
    We can write $\mathcal{X} \setminus \partial_\eta \mathcal{X}^r$ as a union of layers of the form $L_k := \partial \mathcal{X}^{2^{k+1}r} \setminus \partial\mathcal{X}^{2^kr}$,
    \[\mathcal{X} \setminus \mathcal{X}^r = \bigcup_{k=0}^{\lceil \lg R/r\rceil} L_k.\]
    Then, we can upper bound the mutually-labeling covering number by the sum:
    \begin{equation} \label{eqn:ml-cov-num-sum}
    \mathcal{N}_{\mathrm{ML},\eta}\big(\mathcal{X} \setminus \mathcal{X}^r\big) \leq \sum_{k=0}^{\lceil \lg R/r\rceil} \mathcal{N}_{\mathrm{ML},\eta}(L_k).
    \end{equation}
    To upper bound $\mathcal{N}_{\mathrm{ML}}(L_k)$, first note that  by \Cref{lem:Vr-rboundary},
    \[L_k \subset \mathcal{X} \setminus \partial_\eta \mathcal{X}^{2^k r} = V_{2^k r}.\]
    Thus, the margin of any point $x \in L_k$ is at least $2^k r$. By \Cref{lem:mutually-labeling-balls}, the ball $B(x, 2^{k}r/3)$ is a mutually-labeling set, so that $\mathcal{N}_{\mathrm{ML}}(L_k) \leq \mathcal{N}_{2^{k} r/3}(L_k)$. In fact, we obtain the following:
    \begin{align}
        \mathcal{N}_{\mathrm{ML}}(L_k) \leq \mathcal{N}_{2^{k} r/3}(L_k) &\overset{(i)}{\leq} \mathcal{N}_{2^{k-2} r}(L_k)  \notag
        \\&\overset{(ii)}{\leq} \mathcal{N}_{2^{k-2} r}(\partial \mathcal{X}^{2^{k+1}r}) \notag
        \\&\overset{(iii)}{\leq} C_1 2^{3d} \mathcal{N}_{2^{k+1} r}(\partial \mathcal{X}^{2^{k+1}r}) \notag
        \\&\overset{(iv)}{\leq} C_2 2^{4d} \mathcal{N}_{2^{k+1}r}(\partial \mathcal{X})\notag
        \\&\overset{(v)}{\leq} C_3 2^{4d} (2^{k+1}r)^{-(\mathfrak{b}+c)} \label{eqn:ml-lk-upper-bound}
    \end{align}
    where (i) holds because the radius $2^{k-2}r$ is less than $2^k r/3$, (ii) follows because $\partial \mathcal{X}^{2^{k+1}r}$ contains $L_k$ and so has larger covering number, (iii) makes use of the definition doubling dimension three times to convert the $2^{k-2}$-covering number to a $2^{k+1}$-covering number, (iv) applies \Cref{lem:covering-r-expansion} to convert the covering number of the expansion to that of the boundary set, and (v) upper bounds the covering number in terms of the box-dimension of $\partial \mathcal{X}$ by \Cref{lem:box-upper-bound}. 

    By combining \Cref{eqn:ml-cov-num-sum,eqn:ml-lk-upper-bound}, we obtain:
    \[\mathcal{N}_{\mathrm{ML}}\big(\mathcal{X} \setminus \mathcal{X}^r\big) \leq C_3 2^{4d} r^{-(\mathfrak{b} + c)} \sum_{k = 0}^\infty 2^{- (\mathfrak{b}+c)(k+1)},\]
    where the geometric series converges to $2^{-(\mathfrak{b}+c)}$. We finish by relabeling the constants.
\end{proof}

This shows that $\mathcal{N}(V_r) = r^{-(d(\partial \mathcal{X}) + o(1))}$. Next we show that $\nu(V_r^c) = \big(\mathfrak{m}(\partial \mathcal{X}) + o(1) \big) \cdot r$. This is immediate from the definition of the Minkowski content $\mathfrak{m}(\partial \mathcal{X})$.

\begin{proposition}[Upper bound on $\nu$] \label{prop:nu-upper-bound}
    Let $(\mathcal{X}, \rho, \nu)$ be a metric Borel space. Let $\eta \in \Fcal_0$ whose boundary $\partial_\eta \mathcal{X}$ has Minkowski content $\mathfrak{m}$. Then, for any $c > 0$, there exists some $r_0$ such that for all $0 < r < r_0$:
    \[\nu(\partial \mathcal{X}^r) < (\mathfrak{m} + c) \cdot r.\]
\end{proposition}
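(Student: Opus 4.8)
The plan is to simply unwind the definition of the (upper) Minkowski content together with the hypothesis $\eta \in \Fcal_0$. First I would observe that since $\eta \in \Fcal_0$, the boundary is $\nu$-negligible: $\nu(\partial_\eta\mathcal{X}) = 0$. Plugging this into the definition of Minkowski content,
\[
\mathfrak{m} \;=\; \mathfrak{m}(\partial_\eta\mathcal{X}) \;=\; \limsup_{r\to 0}\, \frac{\nu\big(\partial_\eta\mathcal{X}^r\big) - \nu\big(\partial_\eta\mathcal{X}\big)}{r} \;=\; \limsup_{r\to 0}\, \frac{\nu\big(\partial_\eta\mathcal{X}^r\big)}{r}.
\]
So the quantity we need to control is exactly the one whose $\limsup$ is, by assumption, finite and equal to $\mathfrak{m}$.

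Next, I would invoke the elementary characterization of $\limsup$ at $0$, namely $\limsup_{r\to 0} f(r) = \inf_{r_0 > 0} \sup_{0 < r < r_0} f(r)$, applied to $f(r) = \nu(\partial_\eta\mathcal{X}^r)/r$. Fixing $c > 0$, since the infimum over $r_0$ of $\sup_{0<r<r_0} f(r)$ equals $\mathfrak{m} < \mathfrak{m} + c$, there must exist some $r_0 > 0$ with $\sup_{0 < r < r_0} f(r) < \mathfrak{m} + c$. Consequently, for every $0 < r < r_0$ we get $f(r) < \mathfrak{m} + c$, i.e.\ $\nu(\partial_\eta\mathcal{X}^r) < (\mathfrak{m} + c)\cdot r$, which is the claimed bound.

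There is essentially no obstacle here beyond bookkeeping; the only point that deserves a line of justification is the step $\nu(\partial_\eta\mathcal{X}) = 0$, which is immediate from the definition of $\Fcal_0$, and the fact that the $\limsup$ being strictly below $\mathfrak{m}+c$ lets us pick $r_0$ with a strict inequality on the whole interval $(0,r_0)$. (One could also remark that $\nu(\partial_\eta\mathcal{X}^r)$ is measurable, being the measure of the open set $\partial_\eta\mathcal{X}^r$, so that the expression is well-defined; this is automatic in a metric Borel space.)
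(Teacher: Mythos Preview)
Your proposal is correct and matches the paper's own proof essentially line for line: both use $\nu(\partial_\eta\Xcal)=0$ from $\eta\in\Fcal_0$ to reduce the Minkowski content to $\limsup_{r\to 0}\nu(\partial_\eta\Xcal^r)/r$, and then invoke the definition of $\limsup$ to obtain $r_0$ with the strict inequality for all $0<r<r_0$.
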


\begin{proof}
    Since the boundary has measure zero, the definition of Minkowski content states that there exists $r_0 > 0$ so that for all $0 < r < r_0$,
    \[\frac{\nu(\partial \mathcal{X}^r)}{r} < \mathfrak{m}(\partial \mathcal{X}) + c.\]
    The result follows by multiplying through by $r$.
\end{proof}

Together, \Cref{prop:ml-n-upper-bound,prop:nu-upper-bound} prove \Cref{prop:geometric-boundary}.\hfill $\blacksquare$

\subsection{Proofs of convergence rates}
\paragraph{Proof of \Cref{thm:nn-conv-rate}}
Fix $V \subset \mathcal{X}$. Let $\XX$ be uniformly dominated. Denote by $A_n \subset \mathcal{X}$ region on which the nearest neighbor rule makes a mistake at time $n$. We can count the total number of mistakes separately on $V$ and $V^c$:
\begin{align*} 
\sum_{n=1}^N \ind\big\{\eta(X_n) \ne \eta(\tilde{X}_n)\big\} &:= \sum_{n=1}^N \ind\{X_n \in A_n\} 
\\&=  \sum_{n=1}^N \underbrace{\ind\{X_n\in A_n \cap V\}}_{\textrm{mistakes made in $V$}} + \sum_{n=1}^N \underbrace{\ind\{X_n \in A_n \cap V^c\}}_{\textrm{mistakes made in $V^c$}}.
\end{align*}
At most one mistake can be made per mutually-labeling set on $V$, so the first summation can be bounded by $\mathcal{N}_\mathrm{ML}(V)$. The second term can be bounded by the number of times $X_n$ hits $V^c$:
\begin{align*} 
\ind\{X_n \in A_n \cap V^c\} &\leq \ind\{X_n \in V^c\}
\\&= \underbrace{\ind\{X_n \in V^c\} - \E[\ind\{X_n \in V^c\} |\Fcal_{n-1}]}_{\textrm{martingale difference}} + \E[\ind\{X_n \in V^c\} |\Fcal_{n-1}]
\end{align*}
By Azuma-Hoeffding's, we have that with probability at least $1 - p/2N^2$:
\[\sum_{n=1}^N \underbrace{\ind\{X_n \in V^c\} - \E[\ind\{X_n \in V^c\} |\Fcal_{n-1}]}_{\textrm{martingale difference}} \leq \sqrt{N \log \frac{2N^2}{p}} \leq \sqrt{2 N \log \frac{2N}{p}}.\]
Because the process is uniformly dominated, we also have $\E[\ind\{X_n \in V^c\} |\Fcal_{n-1}] < \epsilon\big(\nu(V^c)\big)$. By taking a union bound over all $N \in \mathbb{N}$, we obtain that with probability at least $1 - p$,
\[\sum_{n=1}^N \ind\big\{\eta(X_n) \ne \eta(\tilde{X}_n)\big\} \leq \mathcal{N}_\mathrm{ML}(V) + N \epsilon\big(\nu(V^c)\big) + \sqrt{2N \log \frac{2N}{p}}.\]
The result follows from optimizing $V$, and by noting at most $N$ mistakes can be made in $N$ time.
\hfill $\blacksquare$

\paragraph{Proof of \Cref{thm:conv-nn-sigma}}
Given $c_1, c_2 > 0$, \Cref{prop:geometric-boundary} yields $C, r_0 > 0$ so that when $0 < r < r_0$,
\[\mathcal{N}_{\mathrm{ML}}\big(V_r\big) \leq C r^{- (\mathfrak{b} + c_1)} \qquad \textrm{ and }\qquad \nu\big(V_r^c\big) \leq \big(\mathfrak{m} + c_2\big) \cdot r.\]
From \Cref{thm:nn-conv-rate}, it follows that with probability at least $1 - p$, we have for all $T$:
\begin{align*} 
\sum_{n=1}^N \ind\big\{\eta(X_n) \ne \eta(\tilde{X}_n)\big\} &\leq \inf_{0 < r < r_0}\, \mathcal{N}_{\mathrm{ML}}(V_r) + N \epsilon\big(\nu(V_r^c)\big) + \sqrt{2N \log \frac{2N}{p}}
\\&\leq \inf_{0 < r < r_0}\, C r^{- (\mathfrak{b} + c_1)} + N\sigma^{-1} \cdot \big(\mathfrak{m} + c_2\big) \cdot r + \sqrt{2N \log \frac{2N}{p}},
\end{align*}
where $r$ is optimized at:
\[r_N^* = \left(\frac{C (\mathfrak{b} + c_1) \sigma}{T (\mathfrak{m} + c_2)}\right)^{1 / (\mathfrak{b} + c_1 + 1)},\]
provided that $r_N^* < r_0$. This will eventually hold for sufficiently large $N > N_0$. For $N \leq N_0$, we can use the coarser mistake bound $N_0$. Thus, for all $N \in \mathbb{N}$:
\[\sum_{n=1}^N \ind\big\{\eta(X_n) \ne \eta(\tilde{X}_n)\big\} \leq N_0 + C_1 \left(\frac{N\big(\mathfrak{m} + c_2\big)}{\sigma}\right)^{(\mathfrak{b} + c_1)/(\mathfrak{b} + c_1 + 1)} + \sqrt{2N \log \frac{2N}{p}},\]
where $C_1$ is a constant, defined below. 

Because we assumed $\mathfrak{b} > 1$, the $\sqrt{N \log N}$ term is eventually dominated by the $N^{(\mathfrak{b} + o(1))/(\mathfrak{b} + 1)}$ term when $N > N_0'$ is sufficiently large. We obtain the result by setting $C_0$ as below, and noting that we can simplify the exponent because $(\mathfrak{b} + c_1)/ (\mathfrak{b} + c_1 + 1) < (\mathfrak{b} + c_1)/(\mathfrak{b} + 1)$.
\begin{itemize}
    \item $C_0 = N_0 + 2\sqrt{2 N_0' \log \frac{2N_0'}{p}}$.
    \item $C_1 = 2C(\mathfrak{b} + c_1)$.
\end{itemize}
\hfill $\blacksquare$
\end{document}